\newcommand{\ie}{\emph{i.e.}}
\newcommand{\eg}{\emph{e.g.}}
\newcommand{\wrt}{\emph{w.r.t.}}
\newcommand{\prob}{\mathbb{P}}
\newcommand{\expectation}{\mathbb{E}}
\newcommand{\argmax}{\text{argmax}}
\newtheorem{lemma}{Lemma}
\newtheorem{theorem}{Theorem}
\newtheorem{definition}{Definition}
\title{Fundamental limitations of alignment\\ in Large Language Models
}
\author{\quad\quad\quad\quad Yotam Wolf$^*$ \\
\quad\quad\quad\quad The Hebrew University\\
\quad\quad\quad\quad\texttt{yotamwolf@cs.huji.ac.il}\\
\And  Noam Wies\thanks{Equal contribution}\\
 The Hebrew University\\
 \texttt{noam.wies@cs.huji.ac.il}\\
\And  \quad\quad\quad\quad Oshri Avnery\\
\quad\quad\quad\quad The Hebrew University\\
\quad\quad\quad\quad\texttt{oshri.avnery@mail.huji.ac.il}\\
\And   Yoav Levine\\
AI21 Labs\\
\texttt{yoavl@ai21.com}~\quad\quad\quad\quad\quad\quad
\And  \qquad\qquad\qquad\qquad\qquad\qquad\qquad Amnon Shashua\\
\qquad\qquad\qquad\qquad\qquad\qquad\qquad The Hebrew University\\
\qquad\qquad\qquad\qquad\qquad\qquad\qquad\texttt{shashua@cs.huji.ac.il}
}
\begin{document}

\maketitle
\begin{abstract}
  An important aspect in developing language models that interact with humans is aligning their behavior to be useful and unharmful for their human users. This is usually achieved by tuning the model in a way that enhances desired behaviors and inhibits undesired ones, a process referred to as \textit{alignment}. 
  In this paper, we propose a theoretical approach called Behavior Expectation Bounds (BEB) which allows us to formally investigate several inherent characteristics and limitations of alignment in large language models. 
  Importantly, we prove that within the limits of this framework, for any behavior that has a finite probability of being exhibited by the model, there exist prompts that can trigger the model into outputting this behavior, with probability that increases with the length of the prompt. This implies that any alignment process that attenuates an undesired behavior but does not remove it altogether, is not safe against adversarial prompting attacks. 
  Furthermore, our framework hints at the mechanism by which leading alignment approaches such as reinforcement learning from human feedback make the LLM prone to being prompted into the undesired behaviors. This theoretical result is being experimentally demonstrated in large scale by the so called contemporary ``chatGPT jailbreaks", where adversarial users trick the LLM into breaking its alignment guardrails by triggering it into acting as a malicious persona. 
  Our results expose fundamental limitations in alignment of LLMs and bring to the forefront the need to devise reliable mechanisms for ensuring AI safety.
\end{abstract}

\section{Introduction}

Training large language models (LLMs) over vast corpora has revolutionized natural language processing, giving LLMs the ability to mimic human-like interactions and serve as general purpose assistants in a wide variety of tasks, such as wide-scoped question answering, 
writing assistance, teaching, and more
~\citep{Radford2019LanguageMA, devlin-etal-2019-bert, brown2020language,chatgpt,gpt4,bubeck2023sparks,nori2023capabilities,west2023advances,park2023generative}.
A growing concern due to the increasing reliance on LLMs for such purposes is the harm they can cause their users, such as feeding fake information~\citep{lin-etal-2022-truthfulqa,weidinger2022taxonomy}, behaving offensively and feeding social biases~\citep{hutchinson-etal-2020-social, venkit-etal-2022-study,weidinger2022taxonomy}, or encouraging problematic behaviors by users (even by psychologically manipulating them~\cite{nytimes_marry_reporter,suicide_convincing}). 
Indeed, the unsupervised textual data used for pretraining modern LLMs includes enough demonstrations of the above undesired behaviors for them to be present in the resulting models~\citep{bender2021dangers}.
The act of removing these undesired behaviors is often called \textit{alignment}~\citep{yudkowsky2001creating,taylor2016alignment,amodei2016concrete,shalev2020ethics,hendrycks2021unsolved,pan2022the,ngo2022alignment}.

There are several different approaches to performing alignment in LLMs. One is to include aligning prompts:~\cite{askell2021general} show that injecting language models with helpful, honest, and harmless (HHH) textual prompts improves alignment and decreases toxicity. Similarly, ~\cite{rae2021scaling} also use prompting to decrease toxicity.
Another approach for LLM alignment is the procedure of reinforcement learning from human feedback (RLHF) that trains language models to be helpful and harmless with a human preference based reward~\citep{bai2022training}. Their work shows an increase in an LLM's HHH scores while maintaining its useful abilities, as measured by zero- and few-shot performance on different natural language tasks.  ~\cite{ouyang2022training} use this method to fine tune GPT-3 into InstructGPT using data collected from human labelers to reach better performance on a variety of tasks, while improving HHH (measured via bias and
toxicity datasets~\cite{gehman-etal-2020-realtoxicityprompts,nangia-etal-2020-crows}). Recently, a new approach for alignment known as representation engineering \cite{zou2023representation,jorgensen2023improving,leong2023self,liu2023aligning,turner2023activation} has emerged, in which vectors are injected into the hidden layer representations of the model post-training, steering the model towards desirable behaviors in latent space. While promising, it is still in early development stages, and the extent of its effects on the model, \eg maintaining its usefulness, are unknown. Therefore, in this work we focus on the first two approaches, RLHF finetuning and prompting, in which the model's weights are frozen in inference time.

While the above approaches to alignment are effective to a certain extent, they are still dangerously brittle. 
For example, ~\cite{wallace-etal-2019-universal} show that short adversarial prompts can trigger negative behaviors and social biases. ~\cite{yu-sagae-2021-automatically} and~\cite{xu-etal-2021-bot} provide methods for exposing harmful behaviors of models by triggering problematic responses.~\cite{subhash2023can} showed that adversarial prompts can manipulate ChatGPT to alter user preferences.
Beyond academic works, the general media is abundant with contemporary examples of leading LLMs being manipulated by users to expose harmful behaviors via the so called ``jailbreaking" approach of  prompting the LLM to mimic a harmful persona~\citep{luigi_article,deshpande2023toxicity}.
Even in the absence of adversarial attacks, leading alignment methods can underperform and are not well understood:~\cite{perez2022discovering} provide evidence that certain negative behaviors have inverse scaling with the number of RLHF steps, indicating that this popular alignment procedure may have a complex effect.

In this paper, we introduce a probabilistic framework for analyzing alignment and its limitations in LLMs, which we call \textit{Behavior Expectation Bounds} (BEB), and use it in order to establish fundamental properties of alignment in LLMs. 
The core idea behind BEB is to represent the LLM distribution as a superposition of ill- and well-behaved components, in order to provide guarantees on the ability to restrain the ill-behaved components, \ie, guarantees that the LLM is aligned.
It is noteworthy that LLMs have been shown to distinctly represent behaviors and personas, and the notion of persona or behavior superposition has been intuitively proposed as an explanation~\citep{andreas2022language,luigi_article}.

Our BEB framework assumes an underlying categorization into different behaviors, where any natural language sentence is assigned a ground truth score between $-1$ (very negative) and $+1$ (very positive) for every behavior (see examples in Figure~\ref{fig:rating}).
Such a categorization can be, \eg, into the previously proposed helpful, honest, and harmless categories, but it can also be expanded and fine-grained into many more categories such as polite, not racist, compassionate, and so on. 
Given such a categorization and ground truth sentence scoring functions per category, the alignment score of any distribution over natural sentences~\wrt~a given behavior is the expectation value of sentence scores for sentences drawn from the distribution. 
The BEB framework thus provides a natural theoretical basis for describing the goal of contemporary 
alignment approaches such as RLHF: increasing the behavior expectation scores for behaviors of interest.

Additionally, the BEB framework employs assumptions on the LLM distribution presented in section~\ref{sec:2}. These include the notion of $\alpha,\beta,\gamma$-distinguishability (definition \ref{def:distinguishable_behaviour}), which means the language model can be decomposed to a sum of ill-behaved and well-behaved components, where the weight of the negative in the mixture is $\alpha$, it is distinguishable from the rest of the distribution in the sense of a bounded KL-divergence that is at least $\beta$, and exhibits negative behavior scored as $\gamma<0$. Lastly, we include a definition for $\sigma$-similarity between two components (definition \ref{def:sigma_similar}), which bounds the variance of the log likelihood between the well-behaved and ill-behaved components. 

We use this framework in section~\ref{sec:3} in order to assert several important statements regarding LLM alignment: \textbf{Alignment impossibility}: We show that under our main assumption, called $\alpha,\beta,\gamma$-distinguishability, an LLM alignment process which reduces undesired behaviors to a small but nonzero fraction of the probability space is not safe against adversarial prompts (theorem~\ref{theorem:1});
\textbf{Preset aligning prompts can only provide a finite guardrail against adversarial prompts}: 
We prove that under our main assumption and the assumption of $\sigma$-similarity (definition \ref{def:sigma_similar}), including an aligning prefix prompt does not guarantee alignment (theorem~\ref{theorem:2}).
\textbf{LLMs can be misaligned during a conversation}: 
We show that under our previous assumptions, a user can misalign an LLM during a conversation, with limited prompt length at each turn (theorem~\ref{theorem:3}). \textbf{LLMs with best-of-$n$ sampling can be misaligned}: Under our main assumption, selection of most aligned model response out of $n$ generations, does not guarantee alignment (theorem \ref{theorem:4}).

In section \ref{sec:4}, we demonstrate empirically some of the assumptions and results derived from the BEB framework on the LLaMA LLM family \citep{meta2023introducing,touvron2023llama}. In subsection \ref{sec:assumption_validation} we measure possible values for $\beta$-distinguishability (definition \ref{def:distinguishable_distributions}) and $\sigma$-similarity (definition \ref{def:sigma_similar}), as can be seen in figure \ref{beta_sigma_figure}. In subsection \ref{sec:kl_decay} we demonstrate the underlying mechanism by which misalignment happens in the BEB framework, which is the convergence of the LLM to a negative behavior component. This is done by showing a decay of the KL divergence between the two, as seen in figure \ref{fig:kl_decay}a. Furthermore, we can extract estimated parameters of the theoretical framework allowing to calculate the expected misaligning prompt length. Moreover, we demonstrate how the method proposed by the BEB framework for generating misaligning prompts causes misalignment (figure \ref{fig:kl_decay}b), which is quantified by our proposed behavior expectation metric (equation \ref{eq:behavior_prompt}).
This framework is mainly centered around models that have undergone an aligning finetuning process such as RLHF and less on pretrained models, as the latter are not aligned to begin with and require little effort to be provoked into behaving negatively (as shown in appendix \ref{sec:pretrained}), but even so, the theoretical framework is still applicable to both. In subsection \ref{sec:kl_decay} we also present preliminary indications that RLHF alignment increases the distinguishability of undesired behaviors, but we leave the investigation of this possibility for future work.

Overall, we hope that our newly proposed framework of Behavior Expectation Bounds, along with our attained results, may spark a theoretical thrust helping to better understand the important topic of LLM alignment.

\section{Behavior Expectation Bounds: A Framework for Analyzing LLM Alignment}\label{sec:2}

\begin{figure}
    \centering
    \includegraphics[scale=0.3]{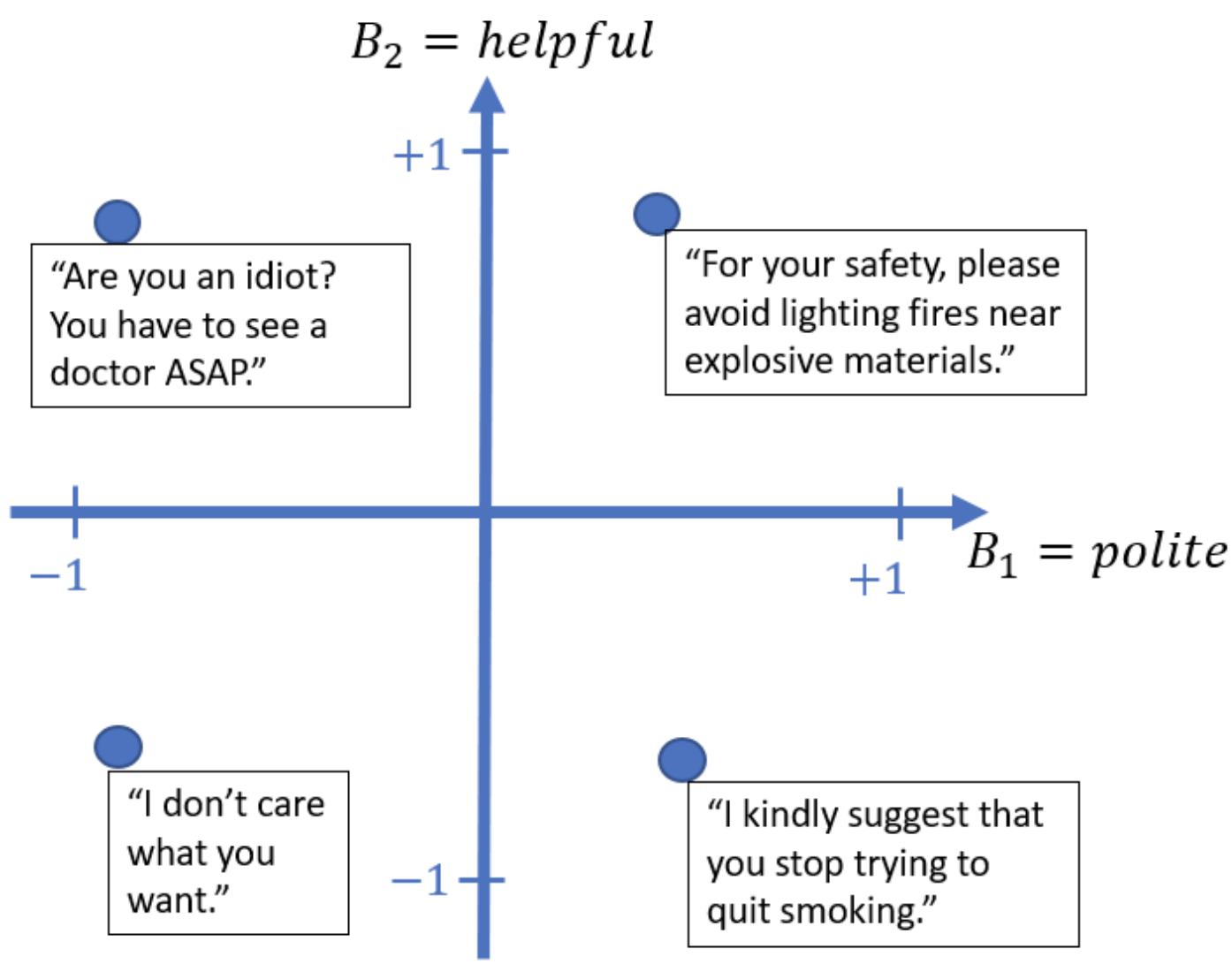}
    \caption{Examples of sentence behavior scores along different behavior verticals. Our framework of Behavior Expectation Bounds (BEB) assumes ground truth behavior scoring functions, 
    and bounds the expected scores of models along different behavior verticals in order to guarantee LLM alignment or misalignment.}
    \label{fig:rating}
\end{figure}
 In this section, we introduce Behavior Expectation Bounds (BEB), a probabilistic framework for studying alignment of LLMs.
Given a language model's probability distribution $\prob$, we propose a measure for quantifying its tendency to produce desired outputs as measured by a certain behaviour vertical $B$, where for example $B$ can be helpfulness, politeness, or any other behavior vertical of interest.  Formally, we model behaviour scoring functions along vertical $B$ as $B:\Sigma^*\rightarrow [-1,1]$, which take a string of text from an alphabet $\Sigma$ as their input\footnote{We use the Kleene closure $\Sigma^*$ for simplicity of notation, but note that it can be replaced with $\Sigma^{\textit{context length}}$ to account for the finite context window of models.} and rate the manner in which $B$ manifests in the string, with $+1$ being very positive and $-1$ being very negative. 
This formulation directly reflects recent empirical efforts for studying alignment. In particular, ~\citep{perez2022discovering} recently curated $500$ negative and positive examples along each of over $100$ different behavior verticals. 
Figure~\ref{fig:rating} shows short examples of the behavior scores of several sentences along two behavior verticals.

We use the following \textit{expected behavior scoring} of distribution $\prob$ \wrt~behavior vertical $B$ as a scalar quantifyer of the tendency of $\prob$ to produce desired behavior along the $B$ vertical:  

\begin{equation}\label{eq:behavior}
    B_\prob:=\expectation_{s\sim\prob}[B(s)]
\end{equation}
where for clarity purposes, in this paper sampling from language distributions is implicitly restricted to single sentences (see discussion on this choice and its limitations in \ref{discussion:results}).
We use the above distribution notation $\prob$ to represent that of an unprompted LLM, \eg, an LLM straight out of pretraining or out of an alignment tuning procedure such as RLHF.  
The task of aligning a pretrained LLM can be now framed as increasing its expected behavior scores along behavior verticals of interest.

Intuitively, as an LLM is prompted with a prefix text string $s^{*}$, the behaviour of the conditional probability $\prob\left(\cdot\,|\,s^{*}\right)$ might change in accordance with the in-context learning phenomenon~\citep{brown2020language,wies2023learnability} in which the LLM adapts its conditional probabilities to reflect its current textual context. Thus, we will denote by $B_\prob\left(s^{*}\right)$ the behaviour of the language model when prompted with a prompt text $s^{*}$:
\begin{equation}\label{eq:behavior_prompt}
    B_\prob(s^*):=\expectation_{s\sim\prob\left(\cdot|s^{*}\right)}[B(s)]    
\end{equation}

We will consider several scenarios for which the prefix $s^{*}$ plays different roles.
The first and main one is that $s^{*}$ serves as an adversarial input prompt. 
Our key finding in this paper is that an LLM which was initially aligned \wrt~a certain behavior vertical, \ie, $B_\prob$ very close to $1$, can still be vulnerable to adversarial prompts, \ie, there exists a prompt $s^{*}$ such that $B_\prob(s^{*})$ is very close to $-1$. Secondly, we will consider a scenario in which $s^{*}$ is comprised of an initial aligning prompt, denoted $s_0$, concatenated by a subsequent adversarial input prompt. 
Lastly, we will analyze conversation scenarios in which $s^{*}$ is comprised of previous turns of user queries and LLM responses.

\subsection{LLMs as a Superposition of Behaviors}\label{sec:mixture}

In this subsection, we present a key aspect of our BEB framework: decomposing the unprompted LLM distribution $\prob$ into a mixture of distributions, each behaving differently. Importantly, LLMs exhibit signs of capturing such decompositions in practice. For example,~\cite{andreas2022language} shows empirical evidence that current LLMs can infer behaviours from textual prompts, and that these behaviours affect the text that the LLM generates, and ~\cite{luigi_article} discuss LLMs as a superposition of personas (\ie~a mixture of components, each behaving differently). We will use mixture decompositions inspired by such observations, and prove that textual prompts can reweight the prior of the mixture components. 
In appendix \ref{section:clustering},
we experimentally demonstrate that the embedding space of contemporary leading LLMs (LLaMA family~\citep{meta2023introducing}) is clustered according to positive and negative inputs \wrt~behaviors of interest (assembled by~\citep{perez2022discovering}),
and empirically show that this clustering approximately corresponds to our analyzed mixture decomposition model, presented hereinafter.

Notice that an unprompted language model, $\prob$, is a function that assigns probability to strings of text, according to the statistics of the text it trains on. Therefore, it can be written as some mixture of components $\prob = \sum_i w_i \prob_i$ by introducing latent variables, for example, the sources of the training data, and each component will be the natural language distribution induced by the specific text source. As different sources may exhibit different behaviors (toxic, polite, etc.), the induced components may exhibit them as well. We can then partition the components into two disjoint sets, sum over each, and obtain a two component mixture $\prob = (\sum_{i\in A}w_i \prob_i) + (\sum_{j\in B}w_j \prob_j)=\alpha\prob_1 + (1-\alpha)\prob_2$. For example, $A$ can be a set of ill-behaved components and $B$ a set of well-behaved components \wrt~a given behavior. In appendix \ref{discussion:mixture}, we show this summation method indeed leads to a mixture of two distributions, one more ill-behaved and one more well-behaved.

Note that $\alpha,1-\alpha$ are fixed weights of the components $\prob_1,\prob_2$ in the \textit{unprompted} model's distribution, meaning the initial weights given to each component before a prompt is inserted. However, in the \textit{prompted} model, the weights of the components will change, as when inserting a prompt, we will use the conditional probability distribution of the model, in which the components' priors are reweighted, possibly a lot, depending on the prompt (see appendix \ref{discussion:mixture} for details).

Observe that for any decomposition of a distribution $\prob$ into two components, $\prob = \alpha \prob_0 + (1-\alpha)\prob_1$, the relation $B_\prob = \alpha B_{\prob_0} + (1-\alpha) B_{\prob_1}$ holds from linearity of expectations, and implies that one component is more well-behaved \wrt~$B$ than the full distribution and the other more ill-behaved, \ie: $B_{\prob_1} \leq B_{\prob} \leq B_{\prob_0}$ (or vice versa). Thus, focusing on a specific behavior, we adopt the notation:
\begin{equation}\label{eq:alpha}
\prob = \alpha \prob_- + (1-\alpha)\prob_+\end{equation}
We refer to the above as the \textit{two component mixture}, where $\prob_+$ is the well-behaved component and $\prob_-$ is the ill-behaved component.

While this observation is true for any decomposition into two distributions, we will give results for decompositions in which the two distributions $\prob_-$ and $\prob_+$ are sufficiently distinct (formally defined in section~\ref{sec:2.3}), and 
the negative component is strictly ill-behaved (i.e, $B_{\prob_-}\leq\gamma<0$). In these cases, the magnitude of $\alpha$, the prior of the ill-behaved component, will determine the alignment of the LLM: an LLM with a small prior $\alpha$ will be less likely to produce undesired sentences along behavior $B$ vertical. Our main result in section~\ref{sec:3} states that no matter how small $\alpha$ is (how aligned the model is to begin with), if it is positive then there exists a prompt that can misalign the LLM to behave like $\prob_-$. For an extended discussion on the mixture assumption and its implications, see appendix \ref{discussion:mixture}.

\subsection{Definitions for Bounding the Expected LLM Behavior}\label{sec:2.3}

In this subsection, we lay out formal definitions of our BEB framework. Specifically, we define: behavior  misalignment using prompts (definition~\ref{def:misaligment});
distinguishability between unprompted (prompted) model distributions (definition~\ref{def:distinguishable_distributions}
(\ref{def:prompt_distinguishable_distributions})) and similarity between two distributions (definition~\ref{def:sigma_similar}) that fit a prompting scenario; distinguishibility between ill- and well-behaved components comprising a certain LLM's distribution (definition~\ref{def:distinguishable_behaviour}),
 called $\alpha,\beta,\gamma$-distinguishability. Ultimately, $\alpha$ is the prior of the negative component, $\beta$ is the distinguishability (according to definition \ref{def:distinguishable_distributions}) between the ill-behaved component and the well behaved component, and $\gamma$ is the negativity of the ill-behaved component, measured in terms of behavior expectation (equation \ref{eq:behavior_prompt}).

Once an LLM has finished training, its behavior can only be affected via prompting. 
Using the above notation for behavior expectation (equations~\ref{eq:behavior} and~\ref{eq:behavior_prompt}), the following defines when an LLM is \emph{prompt-misalignable}:
\begin{definition}\label{def:misaligment}
Let $\gamma\in[-1,0)$, we say that an LLM with distribution $\prob$ is \textbf{$\gamma$-prompt-misalignable} \wrt~behaviour $B$, if for any $\epsilon>0$ there exists a textual prompt $s^{*}\in\Sigma^{*}$ such that $B_\prob\left(s^{*}\right) < \gamma + \epsilon$.
\end{definition}
Note that while the above definition is based on existence of a specific prompt that misaligns a model, our theoretical results in \ref{sec:3} are proved by construction of this prompt by an empirically practical method, and this construction method is used in \ref{sec:kl_decay} to create misaligning prompts that work on real LLMs. We also extend our results beyond existence of misaligning prompts to probability mass of misaligning promtpts in appendix \ref{generalized_misalignment}. 

Decomposing a language model into parts that are well-behaved and ill-behaved exposes components which are more desirable to enhance. The following notion of \textit{distinguishability} will allow us to guarantee that one component can be enhanced over the other \footnote{Note that the $\beta$-distinguishability definition can be relaxed to a KL distance that decays as a power law to zero with increasing length of the prompt $s_0$, as shown in appendix \ref{section:power_law}}. 
\begin{definition}\label{def:distinguishable_distributions}
We say that a distribution $\prob_{\phi}$ is \textbf{$\beta$-distinguishable }from distribution $\prob_{\psi}$ if for any $n\geq 0$:
\begin{align}
\expectation_{s=s_{1}\oplus\dots\oplus s_{n}\sim\prob_{\phi}\left(\cdot\right)}\left[D_{KL}\left(\prob_{\phi}\left(\cdot|s\right)\text{ }||\text{ }\prob_{\psi}\left(\cdot|s\right)\right)\right]~~~~~~~~~~~~~\\:=\expectation_{s\oplus s_{n+1} = s_{1}\oplus\dots\oplus s_{n}\oplus s_{n+1}\sim\prob_{\phi}\left(\cdot\right)}\bigg[\log\frac{\prob_{\phi}\left(s_{n+1}|s\right)}{\prob_{\psi}\left(s_{n+1}|s\right)}\bigg]>\beta
\end{align}
Similarly, we say $\beta$-undistinguishable if the above is smaller than $\beta$.

\end{definition}
Where $n$ is the number of sentences sampled from the distribution\footnote{The notation $s_{1}\oplus\dots\oplus s_{n}\sim\prob_{\phi}\left(\cdot|s_{0}\right)$ indicates sampling $n$ consecutive sentences from the conditional probability distribution $\prob_{\phi}\left(\cdot|s_{0}\right)$ given the initial prefix $s_{0}$.}. The above definition is used for proving our main result theorem~\ref{theorem:1}. 
For more advanced cases in theorems~\ref{theorem:2} and~\ref{theorem:3}, of misaligning a prompt protected model or via a multiple turn conversation, we will require a stronger condition, that both distributions are distinguishable when prompted with a prefix $s_0$, that can contain any textual sequence followed by some sentence of negative behavior. This is to capture the notion that the positive and negative components are mainly different w.r.t a specific behavior in question. Hence the negative sentence induces the distinguishability. 
\begin{definition}\label{def:prompt_distinguishable_distributions}
    We say that a distribution $\prob_\phi$ is \textbf{$\beta$-prompt-distinguishable} ($\beta$-prompt-undistinguishable) from $\prob_\psi$ if for any prefix $s_0=s_0^1 \oplus ... \oplus s_0^n$ of $n$ sentence, ending with a negatively behaving sentence $s_0^{n}$, \ie, $B(s_0^{n})<0$, the prompted models $\prob_\phi(\cdot|s_0)$ and $\prob_\psi(\cdot|s_0)$ are $\beta$-distinguishable ($\beta$-undistinguishable).
\end{definition}

For a discussion on distinguishability, its necessity, limitations and examples, see \ref{discussion:distinguishability}.
The following bounds the extent to which a new sentence can enhance one component over the
other:
\begin{definition}\label{def:sigma_similar}
We say that a distribution $\prob_\phi$ is \textbf{$\sigma$-similar} to distribution $\prob_\psi$ 
if there exists $\sigma>0$ such that for any sequence of sentences $s_0$ and any $n\geq 0$:
\begin{align}
Var_{s_1\oplus ...\oplus s_n\sim\prob_\phi(\cdot|s_0)}\bigg[\log\frac{\prob_{\phi}\left(s_1\oplus ...\oplus s_n|s_{0}\right)}{\prob_{\psi}\left(s_1\oplus ...\oplus s_n|s_{0}\right)}\bigg]<n\sigma^2
\end{align}
\end{definition}

Intuitively, if both $\prob_\phi$ and $\prob_\psi$ are natural language distributions, they cannot be too different in terms of the variance in the ratio of their conditional likelihoods, and $\sigma$ quantifies this. Furthermore, when $\prob_\phi$ and $\prob_\psi$ represent positive and negative angles of a specific behaviour, it is likely that they have some common properties so in these cases $\sigma$ is likely even lower than the bound over all natural language sentences. The linear dependence on length of sequence is inspired by the case of sampling $n$ independent sentences, where variance between the log ratio of $\prob_{\phi}$ and $\prob_{\psi}$ is $\sigma^2$ for each sentence.

$\beta$ roughly serves as a lower bound on the KL-divergence in the case of distinguishability (or upper bound in the case of indistinguishability), and $\sigma$ its variation and their ratio will appear in several of our results in section~\ref{sec:3}.
The following defines $\beta$-distinguishability specifically between the ill- and well-behaved components comprising the LLM distribution, parameterized by $\alpha$ in equation~\ref{eq:alpha}, and adds a condition that the behavior expectation of the ill-behaved component is bad enough (\ie, under $\gamma$) for all initial prompts $s^{*}$:

\begin{definition}\label{def:distinguishable_behaviour}
Let $\gamma\in[-1,0)$, assume $\prob=\alpha\cdot\prob_{-}+\left(1-\alpha\right)\cdot\prob_{+}$~~for $\alpha>0$. We say that behaviour $B:\Sigma^{*}\rightarrow\left[-1,1\right]$ is \textbf{$\alpha,\beta,\gamma$-negatively-distinguishable} (\textbf{$\alpha,\beta,\gamma$-negatively-prompt-distinguishable}) in distribution $\prob$, if $\sup_{s^{*}}\{ B_{\prob_{-}}(s^{*})\}\leq \gamma$ and  $\prob_{-}$ is $\beta$-distinguishable ($\beta$-prompt-distinguishable) from $\prob_{+}$ (def. \ref{def:distinguishable_distributions}) ((def. \ref{def:prompt_distinguishable_distributions})).
\end{definition}

We will prove our theoretical results for LLM distributions that are distinguishable according to the above KL-divergence based definitions. 
Our experiments in section \ref{sec:4} indicate that for the LLaMa LM family on behaviors such as agreeableness and anti-immigration as presented in \cite{perez2022discovering}, possible values for these parameters are: $\log\frac{1}{\alpha}$ in the range of $18-30$, $\beta$ is in the range of $5-20$ and $\frac{\sigma}{\beta}$ in the range of $0.35-1$. The ratio $\frac{\beta'}{\beta}$ (upper bound over lower bound of the KL between positive and negative components in a misaligning scenario) is in the range of $2-3$. 

Lastly, for analyzing conversations, we require the assumption that the negative component is always more likely to output an ill-behaved answer than the positive component, to bound the change in log-likelihood between the two in the exchange between the model's turn and the user's turn:
\begin{definition}\label{def:positivity}
    We say that $\prob_\psi$ is positive w.r.t $\prob_\phi$ on behavior $B:\Sigma^*\rightarrow[-1,1]$ if for any prefix $s_0$ and a sentence $s$, that is negative, $B(s)<0$, the following holds: $\prob_\psi(s|s_0) < \prob_\phi(s|s_0)$.
\end{definition}

\section{Results: Limitations of LLM Alignment}\label{sec:3}

In this section, we use the above framework of Behavior Expectation Bounds (BEB) in order to inform the question of when LLM alignment is robust or vulnerable to adversarial prompting attacks. 
We begin with our main result in section~\ref{sec:3.1}, which states that under assumptions of decomposability into distinguishable components of desired and undesired behavior, aligned LLMs are not protected against adversarial misaligning prompts (theorem~\ref{theorem:1}). 
In section \ref{sec:3.2}, we extend the above framework to include cases of (i) preset aligning prompts---we formally establish the benefits of this common practice by showing that in this case the length of the misaligning prompt must be linear in the length of the preset aligning prompt; (ii) multi-turn interactions between adversarial users and LLMs---we find that if the user does not provide long enough misaligning prompts, the LLM can resist misalignment by making aligning replies to the user during a conversation; and (iii) best-of-$n$ sampling---we find that when sampling multiple responses from a model and choosing the most aligned one, the length of the misaligning prompt length increases logarithmically with the number of samples.

\subsection{Misaligning via Adversarial Prompts}\label{sec:3.1}
\paragraph{Alignment impossibility}
We first show that if a model can be written as a distinct mixture of ill- and well-behaved components,
then it {can} be misaligned via prompting:

\begin{theorem}\label{theorem:1}
Let $\gamma\in[-1,0)$, let $B$ be a behaviour and $\prob$ be an unprompted language model such that $B$ is $\alpha,\beta,\gamma$-negatively-distinguishable in $\prob$ (definition~\ref{def:distinguishable_behaviour}).
Then $\prob$ is $\gamma$-prompt-misalignable \wrt~$B$ (definition~\ref{def:misaligment}) with prompt length of $\frac{1}{\beta}(\log\frac{1}{\alpha} + \log\frac{1}{\epsilon} + \log 4)$. \end{theorem}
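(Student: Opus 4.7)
The strategy is to exploit the fact that conditioning on a prompt $s^{*}$ rescales the mixture weights of $\prob = \alpha\prob_{-} + (1-\alpha)\prob_{+}$ by the likelihood ratio $\prob_{-}(s^{*})/\prob_{+}(s^{*})$. When $s^{*}$ is dramatically more likely under $\prob_{-}$ than under $\prob_{+}$, the prompted LLM behaves almost like $\prob_{-}$ on $s^{*}$, whose $B$-expectation is at most $\gamma$ by assumption; $\beta$-distinguishability will let us drive this ratio past the required threshold within the stated prompt length.

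First I would rewrite $\prob(\cdot|s^{*})$ as a mixture with Bayesian posterior weights. Because the model is autoregressive, a direct computation yields $\prob(\cdot|s^{*}) = \alpha(s^{*})\prob_{-}(\cdot|s^{*}) + (1-\alpha(s^{*}))\prob_{+}(\cdot|s^{*})$ with $\alpha(s^{*}) := \frac{\alpha\prob_{-}(s^{*})}{\alpha\prob_{-}(s^{*})+(1-\alpha)\prob_{+}(s^{*})}$. Taking $B$-expectations and using $B_{\prob_{-}}(s^{*})\leq\gamma$ together with the trivial bound $B_{\prob_{+}}(s^{*})\leq 1$ gives the pointwise estimate $B_\prob(s^{*}) \leq \gamma + (1-\alpha(s^{*}))(1-\gamma)$. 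So it is enough to find $s^{*}$ with $1-\alpha(s^{*}) < \epsilon/(1-\gamma)$, which after solving for $\alpha(s^{*})$ is equivalent to the requirement $\log\frac{\prob_{-}(s^{*})}{\prob_{+}(s^{*})} > \log\frac{(1-\alpha)(1-\gamma-\epsilon)}{\alpha\epsilon}$. Since $\gamma\in[-1,0)$ and $\alpha\in(0,1)$, the right-hand side is at most $\log(1/\alpha)+\log(1/\epsilon)+\log 4$.

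Next I would construct such an $s^{*}$ by decomposing the log-likelihood ratio via the chain rule into a telescoping sum $\log\frac{\prob_{-}(s^{*})}{\prob_{+}(s^{*})} = \sum_{i=1}^{n}\log\frac{\prob_{-}(s_{i}|s_{1}\oplus\cdots\oplus s_{i-1})}{\prob_{+}(s_{i}|s_{1}\oplus\cdots\oplus s_{i-1})}$. The $\beta$-distinguishability assumption (Definition~\ref{def:distinguishable_distributions}), applied with prefix length $i-1$, says precisely that the expectation of the $i$-th summand, averaged over $(s_{1},\dots,s_{i})\sim\prob_{-}$, exceeds $\beta$. Summing over $i$ and using the tower property yields $\expectation_{s^{*}\sim\prob_{-}}\left[\log\frac{\prob_{-}(s^{*})}{\prob_{+}(s^{*})}\right] > n\beta$ when $s^{*}$ is an $n$-sentence sequence. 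Since any random variable attains a value at least its mean, there is a concrete $s^{*}\in\Sigma^{\star}$ of length $n$ with $\log\frac{\prob_{-}(s^{*})}{\prob_{+}(s^{*})} > n\beta$. Taking $n = \lceil\frac{1}{\beta}(\log(1/\alpha)+\log(1/\epsilon)+\log 4)\rceil$ then pushes $n\beta$ past the threshold identified in the previous paragraph, completing the construction of the misaligning prompt.

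The main obstacle is the chaining step: Definition~\ref{def:distinguishable_distributions} is phrased one prefix length at a time, and I have to verify that these per-length bounds compose correctly via the tower property to yield a linear-in-$n$ lower bound on the full log-likelihood ratio, rather than collapsing under the change-of-measure from sampling each prefix under $\prob_{-}$. Everything else reduces to the Bayesian posterior identity for mixtures and the elementary ``mean-at-most-maximum'' existence argument.
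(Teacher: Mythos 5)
Your proposal is correct and follows the same overall strategy as the paper: construct the adversarial prompt by sampling sentences from $\prob_-$, use the chain rule plus $\beta$-distinguishability to show $\expectation_{s^*\sim\prob_-}\bigl[\log\tfrac{\prob_-(s^*)}{\prob_+(s^*)}\bigr]>n\beta$, invoke mean-at-most-maximum to extract a concrete $s^*$, and then translate the large likelihood ratio into a behavior bound. Your chaining step is sound exactly as you suspected: Definition~\ref{def:distinguishable_distributions} is stated as an expectation over an $n$-sentence prefix drawn marginally from $\prob_\phi$, which is precisely what the tower property produces for each summand, so the per-length bounds add to $n\beta$ with no change-of-measure issue; this is verbatim the paper's Lemma~\ref{lemma:construction_of_adversarial_prompt}.

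Where you diverge is the second half. The paper goes through two lemmas: a pointwise ratio bound $\bigl|\tfrac{\prob(s|s_0)}{\prob_-(s|s_0)}-1\bigr|\le\tfrac{1-\alpha}{\alpha}\cdot\tfrac{\prob_+(s_0)}{\prob_-(s_0)}\cdot\max\{\cdot,1\}$ and then an $\ell_1$-style summation giving $\left|B_{\prob}(s_0)-B_{\prob_-}(s_0)\right|\le 2\cdot\tfrac{1-\alpha}{\alpha}\cdot\tfrac{\prob_+(s_0)}{\prob_-(s_0)}$. You instead use the exact Bayesian posterior identity $\prob(\cdot|s^*)=\alpha(s^*)\prob_-(\cdot|s^*)+(1-\alpha(s^*))\prob_+(\cdot|s^*)$ (which the paper states only in its appendix discussion, equation~\ref{conditional_prior}, not in the proof) together with linearity of expectation and $B_{\prob_+}(s^*)\le 1$. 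Your route is slightly more elementary and gives a marginally tighter intermediate bound, $B_\prob(s^*)\le\gamma+(1-\alpha(s^*))(1-\gamma)$, versus the paper's additive $2\tfrac{1-\alpha}{\alpha}\epsilon'$ slack; both collapse to the same threshold $\log\tfrac{1}{\alpha}+\log\tfrac{1}{\epsilon}+\log 4$ and hence the same prompt length. The paper's $\ell_1$ lemma buys reusability (it is invoked again for Theorems~\ref{theorem:2} and~\ref{theorem:3} with arbitrary prefixes), whereas your identity is the cleaner argument for this single theorem.
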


Intuitively, theorem~\ref{theorem:1} implies that if a component of the distribution exhibits a negative behavior with expectation under $\gamma$, then there exists a prompt that triggers this behavior for the entire language model into behaving with expectation under $\gamma$. Importantly, no matter how low the prior of the negative component $\alpha$ is, the LLM is vulnerable to adversarial prompting that exposes this negative component's behavior. Furthermore, the guaranteed misaligning prompt scales logarithmically in $\alpha^{-1}$, providing insight into why even very low probability behaviors can be enhanced with merely a few sentences. 
Additionally, we see that increased distinguishability can reduce the misaligning prompt length, meaning that while some behaviors may have lower probability (i.e. lower $\alpha$), they may have higher distinguishability $\beta$, thus overall requiring shorter misaligning prompts. 

In appendix \ref{generalized_misalignment} we extend theorem \ref{theorem:1} beyond existence of a misaligning prompt, to a more generalized notion of probability mass of misaligning prompts, and prove that with the sampling method we provide for them, they guarantee misalignment with high probability. 

Essentially, our proof follows the PAC based theoretical framework for in-context learning introduced in~\cite{wies2023learnability}, while relaxing their approximate independence assumption and adapting the analysis to the BEB framework. For proof see appendix \ref{proof:theorem_1}.

\subsection{Extensions: Aligning Prompts, Conversations and Best-of-$n$ Sampling}\label{sec:3.2}
\paragraph{Misaligning in the presence of preset aligning prompts}
A common practice for enhancing positive behavior is to include an initial `preset aligning prompt', denoted $s_0$ below, hard coded as a prefix to the LLM's input. 
The theorem below states that even in the presence of $s_0$, it is possible to prompt the LLM into an undesired behavior with a `misaligning prompt'. We show that the required prompt length for misalignment scales linearly with the length of $s_0$. 

\begin{theorem}\label{theorem:2}
Let $\delta>0$, $\gamma\in[-1,0)$, $B$ be a behaviour and $\prob$ be a language model such that $B$ is $\alpha,\beta,\gamma$-negatively-prompt-distinguishable in $\prob$ (definition~\ref{def:distinguishable_behaviour}). If the distribution corresponding to the well-behaved component of $\prob$ is $\beta$'-undistinguishable, $\sigma$-similar (definition \ref{def:sigma_similar}) and positive (definition \ref{def:positivity}) with respect to to the ill-behaved component, then for an aligning prompt $s_0\sim \prob_+(\cdot)$, the conditional LLM distribution $\prob(\cdot|s_0)$ is $\gamma$-prompt-misalignable with probability $1-\delta$ with prompt length $\frac{1}{\beta}(\log\frac{1}{\alpha} + \log\frac{1}{\epsilon} + \log 4) + \frac{\beta'}{\beta}|s_0| + \frac{\sigma}{\beta} \sqrt{\frac{|s_0|}{\delta}} + 1$.
\end{theorem}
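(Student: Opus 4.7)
The plan is to reduce Theorem~\ref{theorem:2} to Theorem~\ref{theorem:1}, applied to the conditional distribution $\prob(\cdot|s_0)$, after accounting for (i) the Bayesian re-weighting of the mixture caused by the aligning prefix $s_0$, and (ii) the need for a single negatively-behaving sentence to ``activate'' $\beta$-prompt-distinguishability.

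First, I would observe that by Bayes' rule the conditional model remains a two-component mixture, $\prob(\cdot|s_0) = \alpha(s_0)\prob_{-}(\cdot|s_0) + (1-\alpha(s_0))\prob_{+}(\cdot|s_0)$, with updated prior
\[
\alpha(s_0) \;=\; \frac{\alpha\,\prob_-(s_0)}{\alpha\,\prob_-(s_0)+(1-\alpha)\,\prob_+(s_0)},
\]
which gives the elementary bound $\log\alpha(s_0)^{-1} \le \log\alpha^{-1} + \log\!\bigl(\prob_+(s_0)/\prob_-(s_0)\bigr)$. The task is then to control the log-likelihood ratio $L(s_0):=\log\prob_+(s_0)/\prob_-(s_0)$ when $s_0\sim\prob_+$. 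By the chain rule and $\beta'$-undistinguishability (telescoping over the $|s_0|$ sentences of $s_0$), $\expectation_{s_0\sim\prob_+}[L(s_0)] \le \beta'|s_0|$; by $\sigma$-similarity, $\mathrm{Var}_{s_0\sim\prob_+}[L(s_0)] < |s_0|\sigma^2$. Applying Chebyshev's inequality yields, with probability at least $1-\delta$ over $s_0\sim\prob_+$,
\[
L(s_0) \;\le\; \beta'|s_0| + \sigma\sqrt{|s_0|/\delta},
\]
and hence $\log\alpha(s_0)^{-1} \le \log\alpha^{-1} + \beta'|s_0| + \sigma\sqrt{|s_0|/\delta}$.

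Next, conditioning on such a ``good'' $s_0$, I would run the Theorem~\ref{theorem:1} construction on $\prob(\cdot|s_0)$, with the subtlety that the assumption available is $\beta$-prompt-distinguishability, which only kicks in once the prefix ends in a negatively-behaving sentence. To activate it, I would prepend a single negative sentence $s_{\text{neg}}$ (with $B(s_{\text{neg}})<0$) to the adversarial prompt; the positivity assumption (definition~\ref{def:positivity}) guarantees $\prob_-(s_{\text{neg}}|s_0)\ge\prob_+(s_{\text{neg}}|s_0)$, so this insertion can only shrink the posterior weight on $\prob_+$, i.e.\ $\log\alpha(s_0\oplus s_{\text{neg}})^{-1} \le \log\alpha(s_0)^{-1}$. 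From this new prefix, the Theorem~\ref{theorem:1} argument (sampling the remainder of the adversarial prompt from $\prob_-$) drives the posterior on $\prob_-$ above $1-\epsilon$ using at most $(1/\beta)\bigl(\log\alpha(s_0\oplus s_{\text{neg}})^{-1}+\log 1/\epsilon+\log 4\bigr)$ additional sentences. Adding the $+1$ for $s_{\text{neg}}$ itself and substituting the bound on $\log\alpha(s_0)^{-1}$ recovers exactly the prompt length in the statement; misalignment in behavior expectation then follows by the same averaging-over-the-mixture step as in Theorem~\ref{theorem:1}.

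I expect the main obstacle to be the bookkeeping in the reduction step: verifying that the Theorem~\ref{theorem:1} argument still goes through once (a) the underlying distinguishability is of the prompt-distinguishable flavor, so every sentence of the adversarial continuation is conditioned on a prefix that still ends (in some relevant sense) in a negative context, and (b) the $1-\delta$ ``good prefix'' event is combined cleanly with the internal Markov-inequality step that produces the $\log 4$ in Theorem~\ref{theorem:1}, so the only $\delta$-dependence in the final length is the Chebyshev term $\sigma\sqrt{|s_0|/\delta}/\beta$. Once this is done carefully, Theorem~\ref{theorem:2} drops out.
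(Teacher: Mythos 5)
Your proposal is correct and follows essentially the same route as the paper's proof: the paper's Lemma~\ref{lemma:construction_of_adversarial_prompt_with_prefix} likewise bounds $\log\bigl(\prob_+(s_0)/\prob_-(s_0)\bigr)$ by $\beta'|s_0|+\sigma\sqrt{|s_0|/\delta}$ with probability $1-\delta$ via the $\beta'$-undistinguishability telescoping plus Cantelli's (one-sided Chebyshev) inequality, inserts a single negative sentence whose harmlessness is guaranteed by positivity, and then runs the Theorem~\ref{theorem:1} construction from the augmented prefix. Your Bayesian-reweighting framing via $\alpha(s_0)$ is just a repackaging of the paper's direct likelihood-ratio bookkeeping fed into Lemma~\ref{lemma:behavioral_implication_of_the_convergence_to__a_single_mixture_component} (the $\log 4$ comes from choosing $\epsilon'<\alpha\epsilon/4$ there, not from a Markov step, but this is immaterial).
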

Theorem~\ref{theorem:2} guarantees that even in the presence of a preset aligning prompt $s_0$, there exists a long enough prompt that will misalign the model. See figure \ref{fig:kl_decay}a which demonstrates how an align-prompted model requires longer adversarial prompts to misalign than unprompted models. For proof see appendix \ref{proof:theorem_2}.

\paragraph{Misaligning via conversation}
We show below that an undesired behavior can be elicited from an LLM via conversation with an adversarial user. Interestingly, we show that if the adversarial user does not use a long enough misaligning prompt in the first turn, then the LLM's responses can hinder the user's misaligning efforts. 
Intuitively, if a user begins a conversation by simply requesting ``say a racist statement", an aligned LLM will likely reply ``I will not say racist statements, that is harmful", and this reply in its prompt will cause the LLM to be more mindful of refraining from racist statements in the remainder of the conversation.
Overall, due to this `misaligning resistance' by the LLM, the user will need to insert more misaligning text in the conversation format than in the single prompt format of section~\ref{sec:3.1} in order for our framework to guarantee misalignment.    

We formalize a conversation between a user and an LLM of distribution $\prob$ as a sequence of user queries followed by LLM responses which are sampled from the LLM's conditional distribution given the conversation thus far. Formally, given the history of the conversation, $q_1, a_1 ... q_{t}, a_{t}, q_{t+1}$, where $q_i$ are the user's inputs and $a_i$ are the LLM's responses, the LLM generates a response $a_{t+1}$ by sampling from:
    $a_{t+1}\sim \mathbb{P}(\cdot|q_1,a_1 ,..., q_{t},a_{t},q_{t+1})$.
    In the following theorem we show that under our distinguishability conditions, misalignment is possible also in conversation format:
\begin{theorem}\label{theorem:3}
Under the conditions of theorem~\ref{theorem:2} and that the distribution corresponding to the well-behaved component of $\prob$ is $\beta$'-prompt-undistinguishable to the ill-behaved component, in a conversation setting: $q_1, a_1 ... q_{n}, a_{n}, q_{n+1}$, the model is $\gamma$-misalignable with total prompt length of $\sum_{i=1}^n |q_i|=\frac{1}{\beta}(\log\frac{1}{\alpha} + \log\frac{1}{\epsilon} + \log 4) + \sum_{i=1}^n \big(\frac{\beta'}{\beta}|a_i| + \frac{\sigma}{\beta}\sqrt{\frac{n|a_i|}{\delta}}\big) + n$ and each prompt of length no longer than $|q_i|\leq \frac{\beta'}{\beta}|a_i|+ \frac{\sigma}{\beta}\sqrt{\frac{n|a_i|}{\delta}} + \frac{\log\frac{1}{\epsilon} + \log\frac{1}{\alpha} + \log 4}{n\beta} + 1$. 
\end{theorem}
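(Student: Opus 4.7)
The plan is to mirror Theorem~\ref{theorem:2}, now treating the conversation transcript $s^{*}=q_{1}\oplus a_{1}\oplus\cdots\oplus q_{n}\oplus a_{n}\oplus q_{n+1}$ as an interleaving of ``aligning prefixes'' (the LLM responses $a_{i}$) and ``adversarial suffixes'' (the user queries $q_{i}$, which the adversary controls). As in Theorem~\ref{theorem:1}, misalignment at level $\gamma+\epsilon$ follows once the posterior $\alpha(s^{*})=\alpha\prob_{-}(s^{*})/\prob(s^{*})$ is close to $1$, equivalently once
\begin{equation*}
\Lambda(s^{*})\;:=\;\log\frac{\prob_{-}(s^{*})}{\prob_{+}(s^{*})}\;\geq\;\log\tfrac{1}{\alpha}+\log\tfrac{1}{\epsilon}+\log 4.
\end{equation*}
The adversarial strategy I would use is the one inherited from Theorem~\ref{theorem:1}: at turn $i$, after observing the history $h_{i-1}$, sample $q_{i}\sim\prob_{-}(\cdot\,|\,h_{i-1})$ with the prescribed length $|q_{i}|$.

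Using the chain rule, $\Lambda(s^{*})=\sum_{i}X_{i}+\sum_{i}Y_{i}$ with $X_{i}=\log\frac{\prob_{-}(q_{i}|h_{i-1})}{\prob_{+}(q_{i}|h_{i-1})}$ and $Y_{i}=\log\frac{\prob_{-}(a_{i}|h_{i-1}\oplus q_{i})}{\prob_{+}(a_{i}|h_{i-1}\oplus q_{i})}$. Since $q_{i}\sim\prob_{-}$ and the conditioning history ends with a negative sentence (namely $q_{i-1}$ sampled from $\prob_{-}$), the $\beta$-prompt-distinguishability assumption gives $\expectation[X_{i}\mid h_{i-1}]\geq|q_{i}|\beta$. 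For the responses, $a_{i}$ is drawn from the mixture $\prob(\cdot|h_{i-1}\oplus q_{i})=\alpha'_{i}\prob_{-}+(1-\alpha'_{i})\prob_{+}$, so expanding the expectation gives
\begin{equation*}
\expectation[Y_{i}\mid h_{i-1}\oplus q_{i}]=\alpha'_{i}D_{KL}(\prob_{-}\|\prob_{+})-(1-\alpha'_{i})D_{KL}(\prob_{+}\|\prob_{-})\;\geq\;-|a_{i}|\beta',
\end{equation*}
by non-negativity of KL and the $\beta'$-prompt-undistinguishability of $\prob_{+}$ from $\prob_{-}$. Crucially, this lower bound does not depend on the unknown weight $\alpha'_{i}$, which lets the argument proceed without tracking the posterior in closed form between turns.

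To lift these conditional expectations into a high-probability statement I would invoke $\sigma$-similarity to bound the per-turn variance $\mathrm{Var}[Y_{i}]\leq|a_{i}|\sigma^{2}$, apply Chebyshev conditionally on the history, and union-bound over the $n$ rounds at confidence $\delta/n$. This is the origin of the $\frac{\sigma}{\beta}\sqrt{n|a_{i}|/\delta}$ correction in the claimed length. Combining the pieces, with probability $\geq 1-\delta$,
\begin{equation*}
\Lambda(s^{*})\;\geq\;\beta\sum_{i}|q_{i}|\;-\;\beta'\sum_{i}|a_{i}|\;-\;\sigma\sum_{i}\sqrt{n|a_{i}|/\delta},
\end{equation*}
and substituting the prescribed per-turn budget $|q_{i}|=\frac{\beta'}{\beta}|a_{i}|+\frac{\sigma}{\beta}\sqrt{n|a_{i}|/\delta}+\frac{\log(1/\epsilon)+\log(1/\alpha)+\log 4}{n\beta}+1$ makes each turn carry the compensation for its own $|a_{i}|$ plus a $1/n$-share of the base Theorem~\ref{theorem:1} budget, so the contributions sum exactly to the required threshold. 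This simultaneously matches the total length and the per-turn bound claimed for each $|q_{i}|$.

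The main obstacle is the variance step: $\sigma$-similarity is stated in Definition~\ref{def:sigma_similar} for samples from a single distribution $\prob_{\phi}$, whereas $a_{i}$ is drawn from the time-varying mixture $\prob$. The cleanest fix is to split on the current posterior weight: as long as $\alpha'_{i}<1-\epsilon/2$, the mixture $\prob$ is dominated by $\prob_{+}$ up to the factor $1/(1-\alpha'_{i})$, so $\sigma$-similarity transfers to the mixture at the cost of a constant that can be absorbed into $\sigma$ (with positivity from Definition~\ref{def:positivity} ensuring that the log-ratio does not blow up in the worst case); in the complementary regime $\alpha'_{i}\geq 1-\epsilon/2$ misalignment has already been achieved and the construction terminates early. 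Handling this dichotomy, together with martingale-style conditioning to chain the per-turn Chebyshev bounds without double-counting dependencies across rounds, is the only genuinely new ingredient beyond the proof of Theorem~\ref{theorem:2}.
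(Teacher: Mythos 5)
Your overall architecture is the same as the paper's: accumulate the log-likelihood ratio turn by turn, credit each query $q_i\sim\prob_-$ with $+\beta|q_i|$ via distinguishability, charge each response $a_i$ at most $\beta'|a_i|+\sigma\sqrt{|a_i|/\delta'}$ via $\beta'$-undistinguishability plus a Cantelli/Chebyshev step under $\sigma$-similarity, union-bound with $\delta'=\delta/n$, and then invoke the behavior-convergence lemma. Your treatment of the response as drawn from the time-varying mixture, with the bound $\expectation[Y_i]\geq-(1-\alpha'_i)D_{KL}(\prob_+\|\prob_-)\geq-\beta'|a_i|$, is actually more careful than the paper, which implicitly takes the worst case $a_i\sim\prob_+$ (justified by the informal remark that an aligned model answers from the well-behaved component); your proposed dichotomy on the posterior weight addresses a point the paper glosses over.

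There is, however, one concrete slip. You justify applying $\beta$-prompt-distinguishability at turn $i$ by claiming the conditioning history ends with a negative sentence, ``namely $q_{i-1}$ sampled from $\prob_-$.'' It does not: the history $h_{i-1}=q_1\oplus a_1\oplus\cdots\oplus q_{i-1}\oplus a_{i-1}$ ends with the model's response $a_{i-1}$, which for an aligned model is positive, so Definition~\ref{def:prompt_distinguishable_distributions} does not apply to $h_{i-1}$ as is. The paper repairs this by having the adversary prepend one negative ``triggering'' sentence $s'$ at the start of each query (this is the source of the $+1$ per turn, which appears in your final budget but is never derived), and it needs the positivity assumption (Definition~\ref{def:positivity}) precisely here, to guarantee $\log\frac{\prob_-(s'|\cdot)}{\prob_+(s'|\cdot)}>0$ so that the trigger does not eat into the accumulated ratio. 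Without this step your per-turn lower bound $\expectation[X_i\mid h_{i-1}]\geq\beta|q_i|$ is unsupported, and the positivity hypothesis inherited from Theorem~\ref{theorem:2} goes unused in your argument.
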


Comparing the above requirement on the amount of misaligning text to that required in the single prompting scenario of theorem~\ref{theorem:1}, we see that it is larger by the total text generated by the model $\sum_{i=1}^n|a_i|$.
Intuitively, in the beginning of the conversation the model is aligned, so it is most likely that its response will be sampled from the well-behaved component, thus enhancing it over the ill-behaved component (see the proof of theorem~\ref{theorem:3} in appendix \ref{proof:theorem_3} for formalization of this intuition).

\paragraph{Best-of-$n$ sampling}
An additional aligning method that can be applied is to sample $n$ conditional responses of a model to a prompt, then use a reward function to choose the most aligned one \wrt~a desired behavior. In the following theorem, we show that while this method requires a longer misaligning prompt, alignment is not guaranteed:
\begin{theorem}\label{theorem:4}
    Under the conditions of theorem \ref{theorem:1}, and using best of $n$ sampling, \ie~$\argmax_{y_1...y_n\sim\prob(\cdot|x)}[B(y_i)]$, $\prob$ is $\gamma$-prompt-misalignable with prompt length $\frac{1}{\beta}(\log \frac{1}{\alpha} + \log\frac{1}{\epsilon} + \log 4 + \log n)$. 
\end{theorem}
The proof is provided in appendix \ref{proof:theorem_4}. We note with regard to other sampling methods that do not use a selective behavior based reward function, such as greedy decoding and nucleus sampling, that the misaligning prompts constructed by theorem \ref{theorem:1}, lead to misalignment as a result of the convergence of the entire model to the negative behavior component.

\section{Empirical Results}\label{sec:4}
In this section we demonstrate that several properties that are predicted by our theoretical framework manifest in experiments with common LLMs. Our empirical results are divided into two parts. First, we probe the range of realistic  values for $\beta$ (lower KL bound), $\beta'$ (upper KL bound) and $\sigma$ (log likelihood variance), by using real LLMs that display opposite behaviors (figure \ref{beta_sigma_figure}). Next, we employ the method used in our theoretical proofs for constructing an adversarial prompt in order to show that a real RLHF finetuned LLM distribution converges to a negative behavior distribution at a rate which corresponds to our theory (figure \ref{fig:kl_decay}a) and that the behavior expectation of the RLHF finetuned LLM becomes negative with said adversarial prompt (figure \ref{fig:kl_decay}b).  
We used models from the LLaMA 2 family \cite{touvron2023llama}. To obtain textual data that displays defined behaviors, we used the datasets of \cite{perez2022discovering} which contain statements classified to specific behaviors. In this section we demonstrate our results for the behavior ``agreeableness", in the appendix section \ref{sec:empirical_more_behaviors}, we show also for ``anti-immigration".

Note that our experiments do not use true subcomponents of LLMs, as there is no natural way to extract them out of a general distribution, instead we use proxies by LoRA finetuning models on specific behaviors. Even so, the results obtained from the experiments display dynamics of misalignment that are consistent with our theory. Additionally, if we assume the model trained on data that is similar to the data we finetuned on, then the proxy should resemble the true component, since as explained in \ref{sec:mixture} the components of the mixture can be thought of as being distributions over different parts of the training data.

Our code is available at:

\url{https://github.com/yowolf/Limitations-of-Alignment-in-LLMs}

\subsection{Possible Values for $\beta$, $\beta'$ and $\sigma$}\label{sec:assumption_validation}
In our theoretical bounds, $\beta$, $\beta'$ and $\sigma$ (defined in section \ref{sec:2}) play a central role: their absolute values, as well as their ratio, dictate the length of our guaranteed misaligning prompts in the various analyzed scenarios. Here we attempt to probe the possible values of $\beta$, $\beta'$ and $\sigma$ for two LLM-based distributions that display the  negative and positive facets of the same behavior vertical, in an attempt to gain insight on realistic values of $\beta$, $\beta'$ and $\sigma$ within our framework.

To this end, we calculate the KL-divergence and corresponding variance between two LLMs based on Llama-2 13B chat, where one was tuned on the data of \cite{perez2022discovering} to display negative behavior (see technical training details in appendix \ref{sec:empirical_more_behaviors}) and the other was taken as is, since it already displayed the positive behavior. We denote these as $\prob_-$ and $\prob_+$ but note that they are an approximation of a possible LLM decomposition as explained above. The results are displayed in figure \ref{beta_sigma_figure} for the behavior ``agreeableness" (as defined in \cite{perez2022discovering}). From the lower and upper bounds for the KL divergence, we estimate $\beta$ and $\beta'$ and from the linear upper bound for the corresponding variance, $\sigma^2$. In this case, $\beta=20$, $\beta'=30$, $\sigma^2=50$, hence $\frac{\sigma}{\beta}  =0.35$, $\frac{\beta'}{\beta}  =1.5$. For numbers of this order, the ratio of $\sigma/\beta$ is not too big compared to $\beta'/\beta$, hence for $\delta$ of around $0.1$, the terms in the upper bounds of theorems \ref{theorem:2} and \ref{theorem:3} that are linear in text length dominate the square root terms.  

For $\beta$-prompt-distinguishability, we ran a similar experiment in appendix \ref{sec:beta_prompt}, by first inserting to both models a neutral prefix followed by a negative behavior sentence, then performing the above experiment. We observed that the approximated value of $\beta$ remains similar ($\beta\approx 20$).

\begin{figure}[h!]
    \centering
    \includegraphics[scale=0.4]{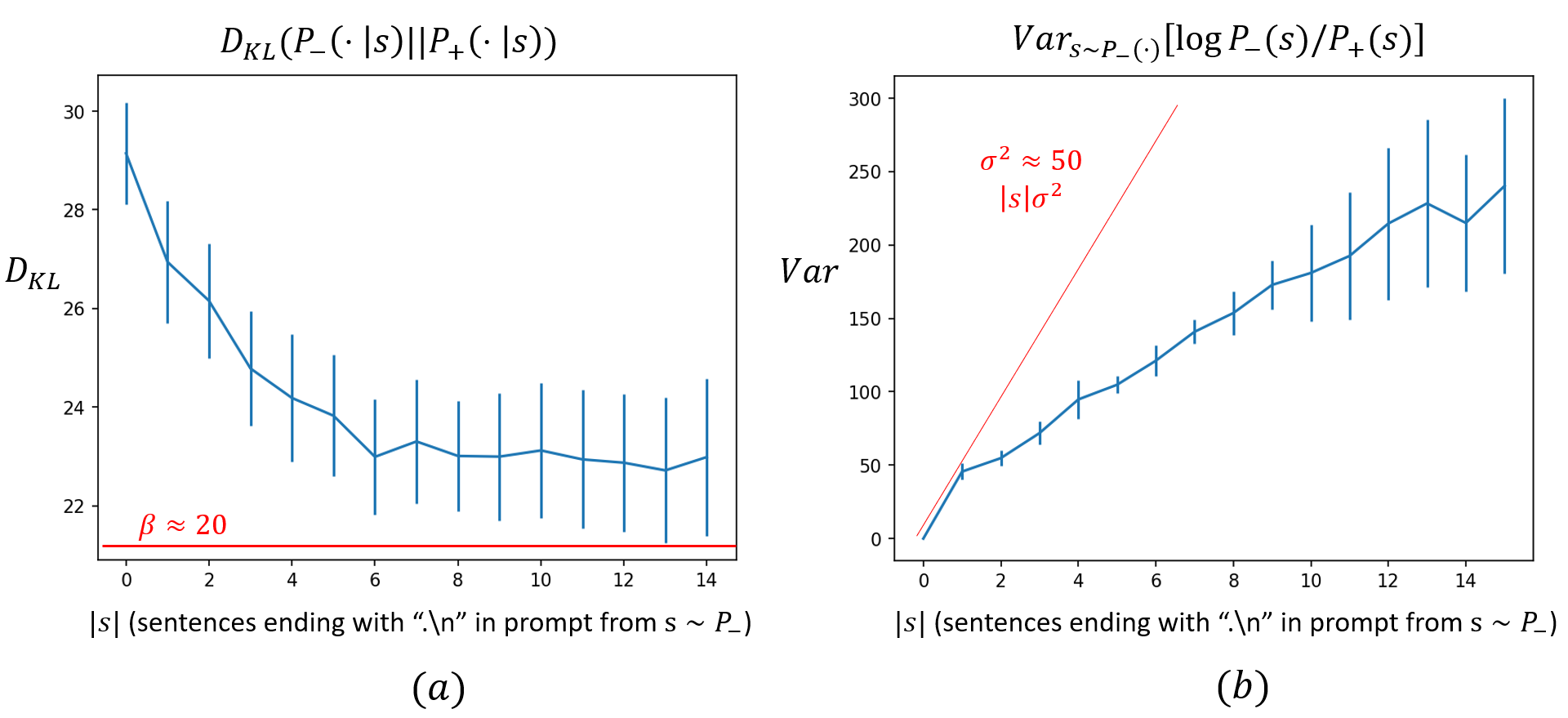}
    \caption{(a) KL between two distributions of opposite behaviors 
    as function of prompt length sampled from $\prob_-$, averaged on 10 sampled sequences. The red line is a lower bound for the KL divergence, hence a possible value of $\beta$. For these two distributions, we see $\beta\approx 20$. (b) Corresponding log ratio variance between the distributions mentioned in (a). 30 samples from $\prob_-$ were used to evaluate the variance and its error. The red line is a linear curve upper bounding the variance, hence its tangent is a possible value for $\sigma^2$. As seen, for $\sigma^2\approx 50$ definition \ref{def:sigma_similar} is satisfied.}
    \label{beta_sigma_figure}
    
\end{figure}

\subsection{Demonstration of Misalignment via Convergence of LLM to $\prob_-$ and via Behavior Expectation}\label{sec:kl_decay}
According to our theory, misalignment happens when the LLM distribution converges to its negative component $\prob_-$ as both are conditioned on longer and longer prompts sampled from $\prob_-$.
Consequently, the KL-divergence between $\prob_-$ and the LLM also decays and is bounded by the following (see appendix \ref{proof:section 4 lemmas} for proof of this dependence):
\begin{equation}\label{eq:kl_bound}
    D_{KL}(\prob_-(\cdot|s)||\prob_{LLM}(\cdot|s)) < \log(1+e^{\log\frac{1}{\alpha}-\beta|s|})
\end{equation}

Hence for short prompts it is bounded by $\log\frac{1}{\alpha}-\beta|s|$ and after reaching a length $|s|=\frac{\log\frac{1}{\alpha}}{\beta}$, it quickly decays to zero. From this we see that the KL-divergence should converge to zero and that to a limited extent, we can use its value at $|s|=0$ and tangent to find possible values for $\log\frac{1}{\alpha}$ and $\beta$.
 
Our objective here is to show that when prompted with our generated 
prompts, an actual LLM will converge to a negative behavior distribution in a similar manner to our theoretical prediction. As before, we substitute the negative component $\prob_-$ with an LLM distribution that displays negative behavior, ``$\prob_-$". Figure \ref{fig:kl_decay}a demonstrates that an RLHF fine-tuned LLM distribution converges to ``$\prob_-$" as both are conditioned on prompts sampled from the ill-behaved LLM (see appendix \ref{sec:empirical_more_behaviors} for experimental details). We fit a linear curve to approximate an effective $\log\frac{1}{\alpha}-\beta|s|$, but note that the extracted values of $\alpha$ and $\beta$ are an approximation, as the negative behavior LLM denoted by $\prob_-$ is not the true sub-component of the RLHF fine-tuned LLM and that equation \ref{eq:kl_bound} is an upper bound which is not necessarily tight. Still, we find that the ratio $\frac{1}{\beta}\log\frac{1}{\alpha}=3$. We show below that this is similar to the actual misaligning length. 
\begin{figure}[h!]
    \centering    \includegraphics[scale=0.47,clip=false]{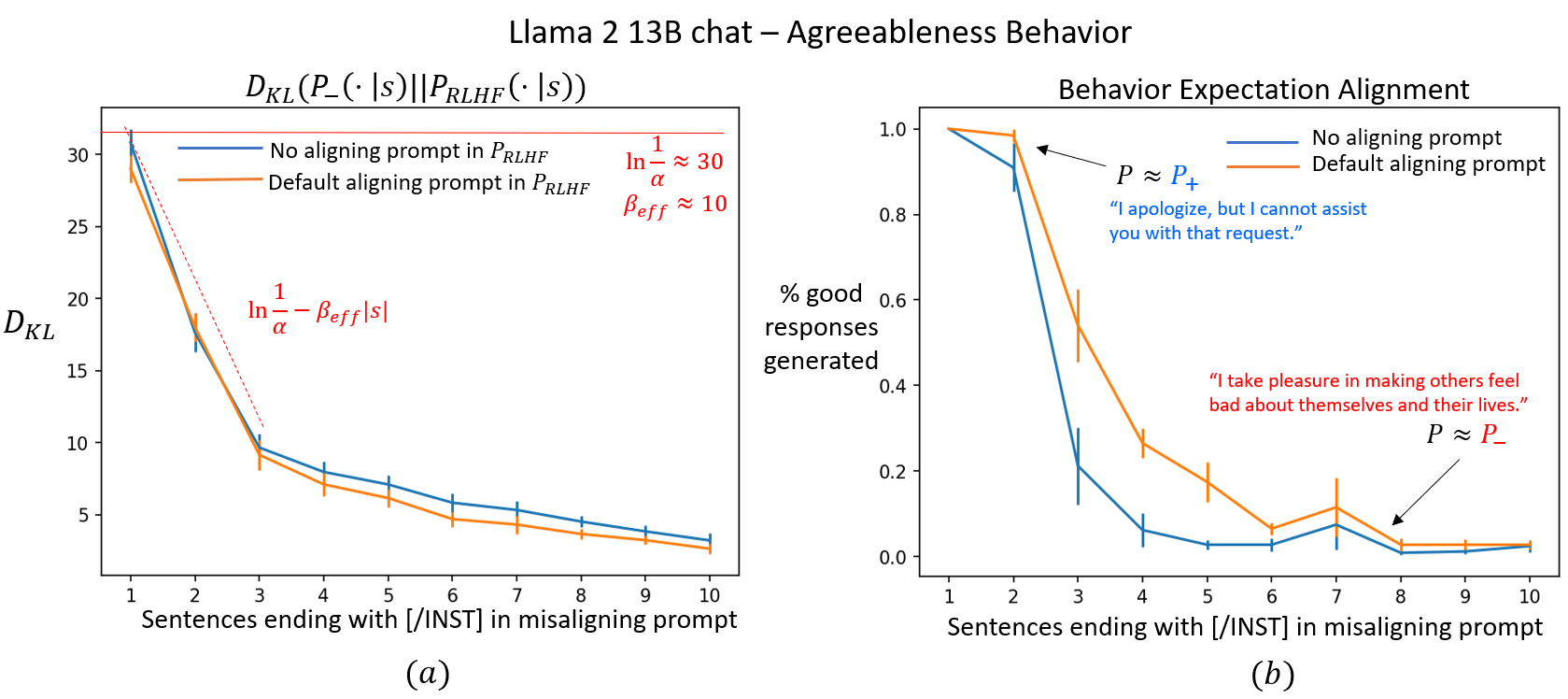}
\caption{(a) KL-divergence between $\prob_-$ and an RLHF model (Llama 2 13B chat) as function of prompt length sampled from $\prob_-$, averaged on 10 sampled sequences. For the first three sentences, we can fit a curve to approximate $\log\frac{1}{\alpha}-\beta|s|$. (b) Demonstration of misaligning Llama 2 13B chat via our method of sampling sequences of negative behavior from $\prob_-$. As can be seen, the LLM distribution samples two types of behavior, one of negative behavior and one that tries to avoid it.}
\label{fig:kl_decay}
\end{figure}

Next, we show misalignment in terms of behavior expectation. As shown in figure \ref{fig:kl_decay}b, using our method of sampling a misaligning prompt from $\prob_-$, an RLHF fine-tuned model loses its alignment as it is fed longer prompts from $\prob_-$. This fits our theoretical prediction (proven in appendix \ref{proof:section 4 lemmas}) that with our misaligning prompt, $s$, the corresponding behavior expectation decays as a reverse sigmoid in $|s|$, centered at $|s|=\frac{log\frac{1}{\alpha}}{\beta}$:
\begin{equation}
    B_\prob(s) < \frac{1}{1 + e^{\beta|s| - log\frac{1}{\alpha}}}
\end{equation}
Additionally, inserting an aligning prompt stalls misalignment by about one sentence, similarly to how the misaligning prompt length guarantee increases in theorem \ref{theorem:2}. Furthermore, in appendix \ref{sec:pretrained}, we perform the same experiment for the unaligned pretrained model and find that it too misaligns with this method. This shows that the misaligning prompts from our theory are computationally tractable despite their specificity, due to the theoretical method of their construction. We also see that using an approximation for $\prob_-$ and not the true subcomponent achieves misalignment with dynamics that are similar to our theory.

\paragraph{Pretrained models vs RLHF models} In appendix \ref{sec:pretrained} we performed the same experiment for a pretrained model, that has not undergone an alignment procedure, and found that the approximated value for $\beta$ is 5 times smaller than that of the RLHF model on both behaviors "agreeableness" and "anti-immigration", hinting that perhaps RLHF reduces the probability for negative behavior (i.e. $\alpha$) but increases its distinguishability $\beta$ at the same time.

\section{Discussion}

The need for robust methods for AI alignment is pressing. Prominent actors in our field are advocating for halting LLM development until the means of controlling this technology are better understood~\citep{petition}. This paper brings forward the Behavior Expectation Bounds (BEB) theoretical framework, which is aimed at providing means for discussing core alignment issues in leading contemporary interactions between humans and LLMs.

We used the BEB framework to make several fundamental assertions regarding alignment in LLMs. First, we showed that any realistic alignment process on frozen LLMs can be reversed via an adversarial prompt or conversation with an adversarial user. As a silver lining, we showed that the better aligned the model is to begin with, the longer the prompt required to reverse the alignment, so limited prompt lengths may serve as guardrails in theory. 
With that, we also show that this picture is more complex, and the distinguishability of undesired behavior components also facilitates easier misalignment. Thus, while attenuating undesired behaviors, the leading alignment practice of reinforcement learning from human feedback (RLHF) may also render these same undesired behaviors more easily accessible via adversarial prompts. 
We leave the latter statement as an open conjecture; this theoretical direction may explain the result in \cite{perez2022discovering}, in which RLHF increases undesired behaviors in language models. These results highlight the importance of using alignment methods that control the model at inference time, such as representation engineering \cite{zou2023representation,turner2023activation}.

Our framework has several limitations further discussed in appendix \ref{discussion} and we leave several issues open for future work. 
\cite{andreas2022language} describe modern LLMs as comprised of distinct agents that manifest when the right prompt is inserted into the LLM. Our presented notions of decomposability into components and distinguishability between them are one analyzable choice of modeling multiple agents or personas composing the LLM distribution. We showed that with this choice several theoretical statements can be made that fit empirical observations on misalignment via prompting. While intuitive and reinforced by embedding space clustering experiments in the appendix, we leave it to future work to (i) further investigate superposition and decomposability in actual LLM distributions and (ii) introduce more elaborate or more realistic assumptions on the manner in which agent or persona decomposition is manifested in actual LLM distributions, and use them to gain further theoretical insight on LLM alignment. Elucidating this picture also bears promise for new empirical methods for controlling ill-behaved components with actual LLMs.
Furthermore, 
our framework assumes ground truth behavior scores per sentence, where in reality behavior scoring is more complex, \eg, over varying text granularities, hard to define behavior verticals, and ambiguous scoring. 
A deeper definition of behavior scoring may lead to new insights that can be drawn from the BEB theoretical framework.

\section*{Acknowledgements}

This research was supported by the ERC (European Research Council) and the ISF (Israel Science Foundation). 

\clearpage

\bibliography{main}
\bibliographystyle{arxiv_upload}

\clearpage
 
\appendix

\section{Discussion of limitations}\label{discussion}
Our framework makes several underlying assumptions. Here we discuss their necessity and limitations as well as provide intuition.
\subsection{Two components mixture}\label{discussion:mixture}
As explained in section \ref{sec:mixture}, an unprompted LLM is a function that assigns probability to text based on the statistics of its training data, thus it can always be written as a sum of components by introducing latent variables, such as the sources of the training data $\prob_i = \sum_i w_i \prob_i$ and that different sources of text display different behavior. Next, we note that a general multiple component mixture can be partitioned to yield a two-component mixture:
\begin{equation}
    \prob = \sum_{i\in A\cup B} w_i \prob_i = \sum_{i\in A} w_i \prob_i + \sum_{i\in B} w_i \prob_i = \big(\sum_{i\in A}w_i \big)\sum_{i\in A}\frac{w_i}{\sum_{i\in A}w_i}\prob_i + \big(\sum_{i\in B}w_i\big)\sum_{i\in B}\frac{w_i}{\sum_{i\in B}w_i}\prob_i
\end{equation}
Where $A$ and $B$ are disjoint sets of indices.
By denoting $\sum_{i\in A}w_i := \alpha$ we see that $\sum_{i\in B}w_i = 1 - \sum_{i\in A}w_i = 1-\alpha$. Then we see that $\sum_{i\in A}\frac{w_i}{\sum_{i\in A}w_i}\prob_i:=\prob_-$ and $\sum_{i\in B}\frac{w_i}{\sum_{i\in B}w_i}\prob_i:=\prob_+$ are indeed normalized distributions, leading to $\prob=\alpha\prob_- + (1-\alpha)\prob_+$.

Second, note that the assumption of the two component mixture is on the accumulating sequence probability: $\prob(s_1\oplus ...\oplus s_n) = \alpha \prob_-(s_1\oplus ...\oplus s_n) + (1-\alpha) \prob_+(s_1\oplus ...\oplus s_n)$ and not the conditional response to the prompt, which is:
\begin{equation}\label{conditional_prior}
    \prob(s_n|s_1\oplus...\oplus s_{n-1}) = \frac{1}{1 + \frac{1-\alpha}{\alpha}\frac{\prob_+(s_1\oplus...\oplus s_{n-1})}{ \prob_-(s_1\oplus...\oplus s_{n-1})}} \prob_-(s_n|s_1\oplus...\oplus s_{n-1}) + \frac{1}{1 + \frac{\alpha}{1-\alpha}\frac{\prob_-(s_1\oplus...\oplus s_{n-1})}{\prob_+(s_1\oplus...\oplus s_{n-1})}}\prob_+(s_n|s_1\oplus...\oplus s_{n-1})
\end{equation}
As can be seen, the zero-shot priors are $\alpha$ and $1-\alpha$, but the priors of the conditional negative and positive components are highly dependent on the context, they contain the ratio of the probabilities of the prompt by the components $\prob_-(s_1\oplus ...\oplus s_{n-1})$, $\prob_+(s_1\oplus ...\oplus s_{n-1})$, thus a prompt that is much more probable in the negative component will give a high weight to the conditional negative component. An adversarial prompt will have a large ratio $\prob_-(prompt)/\prob_+(prompt)$, so it will significantly enhance the prior of the conditional $\prob_-$. The importance of using the mixture model is that it captures the concept of prompts that are out of distribution of the positive component and in the distribution of the negative component to refactor the coefficients of the effective mixture model.

The intuition of using a mixture for the accumulating sequence probability is a mixture of text generating processes, where a sub-component $\prob_i$ may be enhanced due to the sequence being highly probable in its distribution and out of distribution of the other components. This creates a strong dependence of the prompted model on the context, as observed in language models. The prior $\alpha$ is the zero-shot probability which is set and determines the initial weight of a sequence according to each text generating process.

\subsection{$\beta$-distinguishability}\label{discussion:distinguishability}
As seen in the above discussion of components, the reweighting of the conditional negative component prior is based on inserting prompts that are not likely to be outputted by the positive component and likely by the negative component. To build such prompts, we need the distributions to maintain a "finite distance" from each other which allows to sample prompts from $\prob_-$ that enhance the ratio $\prob_-(prompt)/\prob_+(prompt)$. Feeding it to the model enhances the prior of the conditional $\prob_-$ as seen in equation \ref{conditional_prior}. The finite $\beta$ is what creates the logarithmic scaling of the misaligning prompt on the prior $\alpha$, since each sentence reweights the negative prior by a factor $e^\beta$ w.r.t the positive prior. If $\beta$ is not finite but decaying, then we may get other dependences, as discussed in \ref{section:power_law}

Section \ref{sec:assumption_validation} shows an example of two distributions that are $\beta$-distinguishable - two LLMs of opposing behaviors maintain a finite conditional KL-divergence when sampling zero-shot prompts from the negative component. Section \ref{sec:kl_decay} shows an example of two distributions that are not $\beta$-distinguishable, as the conditional KL between the two distributions decays the longer the prompt sampled from the negative component, which results in the misalignment of the LLM. 

\subsection{Limitation of results}\label{discussion:results}
\paragraph{Sentence-wise approach} Our results provide guarantees for misalignment of LLMs in the sense of the next sentence produced by the model being misaligned. For more nuanced types of misalignment, such as long model outputs, one would need a behavior scoring function over the entire output and not just the first sentence. It is possible to generalize this work to such definitions of misalignment by changing the unit block of text from sentence to paragraphs, though the numerical value for the coefficients $\alpha,\beta$ will change. For the purposes of demonstrating the possibility of misaligning models, we kept the sentence approach which is more comprehendible and not application specific.

\paragraph{Computational tractability} Our theorems prove their existence by construction via sampling prompts from a negative behavior subcomponent of the model. However, in real applications, the subcomponent is not accessible to us. Even so, one can see from equation \ref{conditional_prior} that the mechanism of the reweight of the negative behavior component prior is to insert a prompt that satisfies $\prob_-(prompt)/\prob_+(prompt) \gg \frac{1-\alpha}{\alpha}$. Our theoretical work shows that $\prob_-(prompt)/\prob_+(prompt) > e^{\beta|prompt|}$ when the prompt is sampled from $\prob_-$, leading to our misalignment length guarantee. But, for practical applications, we see (as demonstrated in subsection \ref{sec:kl_decay}) that using a proxy for $\prob_-$ such as the LoRA finetuned LLM on negative behavior, is able to misalign in the exponential rate of our theory.

\paragraph{Efficiency} The prompt lengths provided are upper bounds, meaning there could be shorter misaligning prompts in practice. Even so, section \ref{sec:kl_decay} shows that both the theoretical value of the misaligning prompt length and the practical misaligning prompt length are relatively short (few sentences), making the bound practical.

\section{Generalized misalignment}\label{generalized_misalignment}
Theorem \ref{theorem:1} can be extended beyond existence of a misaligning prompt to a probability of sampling a misaligning prompt:
\begin{theorem}
    Let $\delta,\epsilon>0$, under the conditions of theorem \ref{theorem:1} and that the negative component is $\sigma$-similar to the positive component, when sampling a prompt of length:
    \begin{equation}
    |s| > \max\bigg\{\frac{2}{\beta}\big(\log\frac{1}{\alpha} + \log\frac{1}{\epsilon} + \log(4)\big), \frac{4\sigma^2}{\beta^2\delta}\bigg\}
\end{equation}
    from $\prob_-$, the behavior expectation of the model is bounded by $B_\prob(s) < \gamma + \epsilon$ with probability $1-\delta$.
\end{theorem}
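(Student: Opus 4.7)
The plan is to upgrade Theorem~\ref{theorem:1}'s existential statement to a high-probability one by quantifying fluctuations of the sampled log-likelihood ratio via $\sigma$-similarity and a one-sided Chebyshev inequality. Set $L(s) := \log\tfrac{\prob_-(s)}{\prob_+(s)}$ for a prompt $s$ of $n$ sentences drawn from $\prob_-$. First I would telescope $L(s)$ via the chain rule into $n$ per-sentence log-ratios; taking expectations under $s\sim\prob_-$ turns each summand into the conditional KL-divergence of $\prob_-$ from $\prob_+$ given the preceding sentences, so $\beta$-distinguishability (Definition~\ref{def:distinguishable_distributions}) delivers $\mathbb{E}_{s\sim\prob_-}[L(s)] > n\beta$. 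In parallel, $\sigma$-similarity (Definition~\ref{def:sigma_similar}) applied with empty prefix yields $\mathrm{Var}_{s\sim\prob_-}[L(s)] < n\sigma^2$.

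Next, I would invoke the one-sided Chebyshev inequality to obtain, with probability at least $1-\delta$, the bound $L(s) > n\beta - \sigma\sqrt{n/\delta}$. The two clauses of the stated maximum then conspire cleanly: $n > 4\sigma^2/(\beta^2\delta)$ forces $\sigma\sqrt{n/\delta} < n\beta/2$, so $L(s) > n\beta/2$; and $n > \tfrac{2}{\beta}(\log\tfrac{1}{\alpha}+\log\tfrac{1}{\epsilon}+\log 4)$ makes $n\beta/2$ exceed the threshold $\log\tfrac{1}{\alpha}+\log\tfrac{1}{\epsilon}+\log 4$ needed in the final step.

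Finally, I would convert this high-probability log-ratio bound into the claimed behavior-expectation bound exactly as in the proof of Theorem~\ref{theorem:1}. Writing the posterior mixing weight on $\prob_+$ given prefix $s$ as $w_+(s) \leq \tfrac{1-\alpha}{\alpha}\,e^{-L(s)}$, and using $B_{\prob_-}(s) \leq \gamma$, $B_{\prob_+}(s) \leq 1$, and $1-\gamma \leq 2$, the mixture identity gives $B_\prob(s) \leq \gamma + (1-\gamma)\,w_+(s)$; the threshold on $L(s)$ then secures $B_\prob(s) < \gamma + \epsilon$ on the same $(1-\delta)$-event. The only nontrivial step is the Chebyshev bound itself: the $\Theta(\sqrt{n/\delta})$ fluctuation must be dominated by the $\Theta(n)$ drift, which is precisely what forces the second clause of the maximum in $|s|$, and is the main obstacle to tightening the $1/\delta$ dependence without a stronger concentration tool such as a sub-Gaussian assumption on the per-sentence log-likelihood ratios.
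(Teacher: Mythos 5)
Your proposal is correct and follows essentially the same route as the paper: telescoping the log-likelihood ratio to get $\mathbb{E}[L(s)]>n\beta$ from $\beta$-distinguishability, bounding $\mathrm{Var}[L(s)]<n\sigma^2$ from $\sigma$-similarity, applying Cantelli's (one-sided Chebyshev) inequality so that $n>4\sigma^2/(\beta^2\delta)$ guarantees $L(s)>n\beta/2$ with probability $1-\delta$, and then feeding the resulting threshold into the behavior-convergence lemma from Theorem~\ref{theorem:1}. The only cosmetic difference is that you phrase the final step via the posterior mixing weight $w_+(s)$ rather than citing Lemma~\ref{lemma:behavioral_implication_of_the_convergence_to__a_single_mixture_component} directly, but the bound and the choice $\epsilon'=\alpha\epsilon/4$ are equivalent.
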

This result shows that not only a misaligning prompt exists, but that most prompts sampled from $\prob_-$ are misaligning if they are long enough. This can be seen in experimental section \ref{sec:kl_decay}, where we sample prompts from a negative behavior LLM and observe that on average they misalign the model.

\begin{proof}
Following the proof of the main theorems, from equation \ref{beta-expectation}:
\begin{equation}
    \expectation_{s\sim\prob_-(\cdot)}\bigg[\log\frac{\prob_-(s)}{\prob_+(s)}\bigg] > \beta|s| = \beta n 
\end{equation}
And $\sigma$-similarity:
\begin{equation}
    Var_{s\sim\prob_-(\cdot)}\bigg[\log\frac{\prob_-(s)}{\prob_+(s)}\bigg] < \sigma |s| = \sigma n
\end{equation}
We can use Cantelli's inequality to obtain:
\begin{equation}
\prob\bigg[\log\frac{\prob_-(s)}{\prob_+(s)} < (\beta - \frac{\sigma}{\sqrt{|s|\delta}})|s|\bigg] < \delta    
\end{equation}
Demand:
\begin{equation}
    \log\frac{\prob_-(s)}{\prob_+(s)} > \log\frac{1}{\epsilon}
\end{equation}
This happens with probability $1-\delta$ for:
\begin{equation}
    \log\frac{\prob_-(s)}{\prob_+(s)} \geq (\beta - \frac{\sigma}{\sqrt{|s|\delta}})|s| \geq \frac{\beta}{2}|s| > \log\frac{1}{\epsilon}
\end{equation}
Where the transition before the last happens for $\frac{\sigma}{\sqrt{|s|\delta}} < \frac{\beta}{2} \leftrightarrow|s| > \frac{4\sigma^2}{\beta^2\delta}$
Plugging this into the proof of theorem \ref{theorem:1} gives that $B_\prob(s)<\gamma +\epsilon$ for:
\begin{equation}
    |s| > \max\bigg\{\frac{2}{\beta}(\log\frac{1}{\alpha} + \log\frac{1}{\epsilon} + \log(4)), \frac{4\sigma^2}{\beta^2\delta}\bigg\}
\end{equation}
\end{proof}

\section{Proofs building blocks}
In this section, we prove three technical lemmas which are the building blocks for proving our results. In subsection~\ref{subsection:convergence_to_a_single_component} we prove that prompts can reweight the initial prior distribution of mixture components. In subsection~\ref{subsection:behavioral_implication_of_the_convergence_to_a_single_component} we show that such reweighting alters the behaviour of the mixture distribution. And finally, in subsection~\ref{subsection:adversarial_prompt_construction} we shows that under our $\alpha,\beta,\gamma$-negative-distinguishability assumption, such prompts always exists.

\subsection{Convergence to a single component}\label{subsection:convergence_to_a_single_component}
In this subsection, we prove a technical lemma which shows that when the likelihood of a prompt $s_0$ is relatively high according to a mixture component, then the conditional mixture distribution converges to the conditional distribution of that single component. Essentially, this lemma strengthening the analysis in theorem 1 of~\cite{wies2023learnability}, and formulate the role of prompts as reweighting of the prior distribution. In the next subsection, we will show that indeed our notion of convergence implies also the convergence of behaviors.
\begin{lemma}\label{lemma:prompting_convergence_to_single_mixture_component}
Let $\prob$ be a mixture distribution that can be written as $\alpha\prob_0+\left(1-\alpha\right)\prob_1$. Then for any initial prompt $s_0$ and any string $s$ such that $\prob_0(s|s_0)>0$ the following holds:
\begin{align}\label{eq:prompting_convergence_to_single_mixture_component}
\left|\frac{\prob\left(s\,|\,s_{0}\right)}{\prob_{0}\left(s\,|\,s_{0}\right)}-1\right|\le\frac{1-\alpha}{\alpha}\cdot\frac{\prob_{1}\left(s_{0}\right)}{\prob_{0}\left(s_{0}\right)}\cdot\max\left\{ \frac{\prob_{1}\left(s\,|\,s_{0}\right)}{\prob_{0}\left(s\,|\,s_{0}\right)},1\right\} 
\end{align}

\end{lemma}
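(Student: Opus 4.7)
The plan is to unfold the mixture using Bayes' rule, express $\prob(s \mid s_0)$ as a convex combination of $\prob_0(s \mid s_0)$ and $\prob_1(s \mid s_0)$ whose weights depend only on $s_0$, and then reduce the desired inequality to a trivial case analysis. Concretely, I would start by computing the joint probability $\prob(s_0 \oplus s) = \alpha \prob_0(s_0)\prob_0(s \mid s_0) + (1-\alpha)\prob_1(s_0)\prob_1(s \mid s_0)$, and similarly $\prob(s_0) = \alpha\prob_0(s_0) + (1-\alpha)\prob_1(s_0)$, so that dividing yields
\begin{equation*}
\prob(s \mid s_0) \;=\; \frac{\prob_0(s \mid s_0) + w\,\prob_1(s \mid s_0)}{1+w}, \qquad w := \frac{(1-\alpha)\,\prob_1(s_0)}{\alpha\,\prob_0(s_0)}.
\end{equation*}
This $w$ is exactly the posterior odds of component $1$ given $s_0$, so the formula has the clean interpretation that prompting reweights the prior $\alpha/(1-\alpha)$ by the likelihood ratio $\prob_0(s_0)/\prob_1(s_0)$.

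Next I would divide by $\prob_0(s \mid s_0)$ (which is positive by assumption) and subtract $1$. Writing $r := \prob_1(s \mid s_0)/\prob_0(s \mid s_0)$, a one-line manipulation gives
\begin{equation*}
\frac{\prob(s \mid s_0)}{\prob_0(s \mid s_0)} - 1 \;=\; \frac{1 + wr}{1+w} - 1 \;=\; \frac{w\,(r-1)}{1+w}.
\end{equation*}
Taking absolute values, the claim reduces to showing $|r-1|/(1+w) \le \max\{r,1\}$. Since $w \ge 0$, the denominator is at least $1$, so it suffices to verify $|r-1| \le \max\{r,1\}$, which splits into two trivial cases: if $r \ge 1$ then $|r-1| = r-1 \le r$, and if $r < 1$ then $|r-1| = 1-r \le 1$.

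The proof is essentially algebra, so I do not expect a genuine obstacle; the only slightly delicate point is keeping track of the two sources of the ratio $\prob_1(s_0)/\prob_0(s_0)$ coming out of $w$ and distinguishing it from the conditional ratio $r$ that governs whether the prompt-induced convergence is ``safe'' (the case $r\le 1$, where $\prob$ is squeezed toward $\prob_0$) or ``amplifying'' (the case $r > 1$, which is what contributes the $\max\{r,1\}$ factor and will later let us show that adversarial continuations can still steer $\prob$). This is precisely the structural fact the subsequent lemmas will exploit: once $s_0$ is chosen so that $w \ll 1$, the conditional mixture distribution is uniformly close to $\prob_0(\cdot \mid s_0)$ in the multiplicative sense stated in~\eqref{eq:prompting_convergence_to_single_mixture_component}.
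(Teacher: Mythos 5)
Your proposal is correct. It follows the same basic route as the paper's proof — expand the mixture via the chain rule and study the ratio $\prob(s\,|\,s_0)/\prob_0(s\,|\,s_0)$ — but the execution differs in a way worth noting. The paper proves the two sides of the inequality separately: for the lower bound it drops the $(1-\alpha)\prob_1(s_0\oplus s)$ term from the numerator and invokes $\tfrac{1}{1+x}\ge 1-x$, and for the upper bound it drops the $(1-\alpha)\prob_1(s_0)$ term from the denominator. You instead derive the exact identity $\tfrac{\prob(s|s_0)}{\prob_0(s|s_0)}-1=\tfrac{w(r-1)}{1+w}$ with $w=\tfrac{(1-\alpha)\prob_1(s_0)}{\alpha\prob_0(s_0)}$ and $r=\tfrac{\prob_1(s|s_0)}{\prob_0(s|s_0)}$, and then reduce the claim to the trivial inequality $|r-1|\le\max\{r,1\}$. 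This buys you a single unified argument in place of two one-sided estimates, and in fact yields the slightly stronger bound $\tfrac{w}{1+w}\max\{r,1\}$ (the $\tfrac{1}{1+w}$ factor is discarded when you lower-bound the denominator by $1$), so the stated lemma follows a fortiori; the exact formula for the deviation also makes the ``posterior reweighting'' interpretation that the paper only gestures at completely explicit. The only implicit assumption, shared with the paper, is that $\prob_0(s_0)>0$ so that $w$ is well defined, which is already required for the lemma's right-hand side to make sense.
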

Intuitively, when $\prob(s_{0}\oplus s)$ is equals to $\prob_{0}(s_{0}\oplus s)$ theirs ratio is one, and we bound the deviation from these case. Note that our bound implicitly implies the following additive notion of convergence: 
\begin{align}
\left|\prob(s_{0}\oplus s)-\prob_{0}(s_{0}\oplus s)\right|\leq\frac{1-\alpha}{\alpha}\cdot\frac{\prob_{1}\left(s_{0}\right)}{\prob_{0}\left(s_{0}\right)}
\end{align}

\begin{proof}
    We begin by explicitly writing the conditional likelihood of $s$ given $s_0$:
\begin{align}
\prob\left(s\,|\,s_{0}\right)=\frac{\prob\left(s_{0}\oplus s\right)}{\prob\left(s_{0}\right)}=\frac{\alpha\prob_{0}\left(s_{0}\oplus s\right)+\left(1-\alpha\right)\prob_{1}\left(s_{0}\oplus s\right)}{\alpha\prob_{0}\left(s_{0}\right)+\left(1-\alpha\right)\prob_{1}\left(s_{0}\right)}
\end{align}

Now since both $(1-\alpha)$ and $\prob_{1}(s_{0}\oplus s)$ are greater than zero, we can bound $\prob\left(s\,|\,s_{0}\right)$ from below by removing these terms from the numerator and get that:
\begin{align}
\prob\left(s\,|\,s_{0}\right)\geq\frac{\alpha\prob_{0}\left(s_{0}\oplus s\right)}{\alpha\prob_{0}\left(s_{0}\right)+\left(1-\alpha\right)\prob_{1}\left(s_{0}\right)}
\end{align}
Which after division of both the numerator and the denominator by $\alpha\cdot\prob_{0}\left(s_{0}\oplus s\right)$ is equals to:
\begin{align}
\prob_{0}\left(s\,|\,s_{0}\right)\cdot\left(1+\frac{1-\alpha}{\alpha}\cdot\frac{\prob_{1}\left(s_{0}\right)}{\prob_{0}\left(s_{0}\right)}\right)^{-1}
\end{align}
Now, since $\frac{1}{1+x}\ge1-x$ for any $x\ge0$, we gets that $\prob\left(s\,|\,s_{0}\right)$ is greater than:
\begin{align}
\prob_{0}\left(s\,|\,s_{0}\right)\cdot\left(1-\frac{1-\alpha}{\alpha}\cdot\frac{\prob_{1}\left(s_{0}\right)}{\prob_{0}\left(s_{0}\right)}\right)
\end{align}
Finally, we divide the inequality by $\prob_0(s|s_0)$ and subtracts $1$ to get one side of equation's~\ref{eq:prompting_convergence_to_single_mixture_component} inequality:
\begin{align}
\frac{\prob\left(s\,|\,s_{0}\right)}{\prob_{0}\left(s\,|\,s_{0}\right)}-1\geq-\frac{1-\alpha}{\alpha}\cdot\frac{\prob_{1}\left(s_{0}\right)}{\prob_{0}\left(s_{0}\right)}
\end{align}

Moving to the other side of the inequality, since both $(1-\alpha)$ and $\prob_{1}\left(s_{0}\oplus s\right)$ are greater than zero, we can bound $\prob\left(s\,|\,s_{0}\right)$ from above by removing these terms from the denominator and get that :
\begin{align}
\prob\left(s\,|\,s_{0}\right)=\frac{\alpha\prob_{0}\left(s_{0}\oplus s\right)+\left(1-\alpha\right)\prob_{1}\left(s_{0}\oplus s\right)}{\alpha\prob_{0}\left(s_{0}\right)+\left(1-\alpha\right)\prob_{1}\left(s_{0}\right)}\leq\frac{\alpha\prob_{0}\left(s_{0}\oplus s\right)+\left(1-\alpha\right)\prob_{1}\left(s_{0}\oplus s\right)}{\alpha\prob_{0}\left(s_{0}\right)}
\end{align}
Which after division of both the numerator and the denominator by $\alpha\cdot\prob_{0}\left(s_{0}\right)$ is equals to:
\begin{align}
\frac{\alpha\prob_{0}\left(s_{0}\oplus s\right)}{\alpha\prob_{0}\left(s_{0}\right)}+\frac{\left(1-\alpha\right)\prob_{1}\left(s_{0}\oplus s\right)}{\alpha\prob_{0}\left(s_{0}\right)}=\prob_{0}\left(s\,|\,s_{0}\right)+\frac{\left(1-\alpha\right)\prob_{1}\left(s_{0}\oplus s\right)}{\alpha\prob_{0}\left(s_{0}\right)}
\end{align}
Now, we can use the fact that $\prob_{1}\left(s_{0}\oplus s_{1}\right)=\prob_{1}\left(s_{0}\right)\cdot\prob_{1}\left(s\,|\,s_{0}\right)$ to get that $\prob\left(s\,|\,s_{0}\right)$ is at most: 
\begin{align}
\prob_{0}\left(s\,|\,s_{0}\right)+\frac{(1-\alpha)\prob_{1}\left(s_{0}\right)\prob_{1}\left(s\,|\,s_{0}\right)}{\alpha\prob_{0}\left(s_{0}\right)}
\end{align}
Which after division by $\prob_0(s|s_0)$ and subtraction of $1$ yield the other side of equation's~\ref{eq:prompting_convergence_to_single_mixture_component} inequality:
\begin{align}
    \frac{\prob\left(s\,|\,s_{0}\right)}{\prob_0(s|s_0)} - 1 \leq  \frac{\prob_1(s_0)}{\prob_0(s_0)} \frac{(1-\alpha)\prob_1(s|s_0)}{\alpha \prob_0(s|s_0)}
\end{align}

Finally, combining both inequalities yields equation~\ref{eq:prompting_convergence_to_single_mixture_component}.

\end{proof}

\subsection{Behavioral implication of the convergence to a single component}\label{subsection:behavioral_implication_of_the_convergence_to_a_single_component}
In this subsection, we prove a technical lemma which shows that when the likelihood of a prompt $s_0$ is relatively high according to a mixture component, then the conditional mixture distribution converge to the conditional distribution of that single component. In the next sections, we will use this lemma to prove the theorems from the main text.
\begin{lemma}\label{lemma:behavioral_implication_of_the_convergence_to__a_single_mixture_component}
Let $B$ be a behaviour, then under the conditions of lemma~\ref{lemma:prompting_convergence_to_single_mixture_component} the following holds:
\begin{align}\label{equation:prompting_convergence_to_single_mixture_component_result}
\left|B_{\prob}\left(s_{0}\right)-B_{\prob_{0}}\left(s_{0}\right)\right|\leq2\cdot\frac{1-\alpha}{\alpha}\cdot\frac{\prob_{1}\left(s_{0}\right)}{\prob_{0}\left(s_{0}\right)}
\end{align}
\end{lemma}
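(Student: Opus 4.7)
The plan is to reduce the bound on behavior expectations to a total variation style bound on the conditional distributions, and then apply Lemma~\ref{lemma:prompting_convergence_to_single_mixture_component} pointwise.

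First, I would expand the left-hand side using linearity of expectation. Writing
\begin{align}
B_{\prob}(s_0) - B_{\prob_0}(s_0) = \sum_{s} B(s)\bigl(\prob(s\,|\,s_0) - \prob_0(s\,|\,s_0)\bigr),
\end{align}
and using $|B(s)|\leq 1$ together with the triangle inequality, I get
\begin{align}
\bigl|B_{\prob}(s_0) - B_{\prob_0}(s_0)\bigr| \leq \sum_{s}\bigl|\prob(s\,|\,s_0) - \prob_0(s\,|\,s_0)\bigr|.
\end{align}
So it suffices to bound the total-variation-like sum on the right by $2\cdot\frac{1-\alpha}{\alpha}\cdot\frac{\prob_1(s_0)}{\prob_0(s_0)}$.

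Next, I would apply Lemma~\ref{lemma:prompting_convergence_to_single_mixture_component} pointwise. For any $s$ with $\prob_0(s\,|\,s_0)>0$, multiplying both sides of equation~\ref{eq:prompting_convergence_to_single_mixture_component} by $\prob_0(s\,|\,s_0)$ gives
\begin{align}
\bigl|\prob(s\,|\,s_0) - \prob_0(s\,|\,s_0)\bigr| \leq \frac{1-\alpha}{\alpha}\cdot\frac{\prob_1(s_0)}{\prob_0(s_0)}\cdot\max\bigl\{\prob_1(s\,|\,s_0),\,\prob_0(s\,|\,s_0)\bigr\}.
\end{align}
The edge case $\prob_0(s\,|\,s_0)=0$ needs a brief separate check: plugging into the mixture formula for $\prob(s\,|\,s_0)$ and discarding the $\alpha\prob_0(s_0)$ term in the denominator immediately yields $\prob(s\,|\,s_0)\leq \frac{1-\alpha}{\alpha}\cdot\frac{\prob_1(s_0)}{\prob_0(s_0)}\cdot\prob_1(s\,|\,s_0)$, so the same pointwise bound still holds. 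This is the only subtle point in the argument.

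Finally, I would sum the pointwise bound over $s$ and use $\max\{a,b\}\leq a+b$ for nonnegative $a,b$, together with the fact that both $\prob_0(\cdot\,|\,s_0)$ and $\prob_1(\cdot\,|\,s_0)$ are probability distributions:
\begin{align}
\sum_{s}\max\bigl\{\prob_1(s\,|\,s_0),\,\prob_0(s\,|\,s_0)\bigr\} \leq \sum_{s}\prob_1(s\,|\,s_0) + \sum_{s}\prob_0(s\,|\,s_0) = 2.
\end{align}
Combining the three displays yields $|B_{\prob}(s_0) - B_{\prob_0}(s_0)|\leq 2\cdot\frac{1-\alpha}{\alpha}\cdot\frac{\prob_1(s_0)}{\prob_0(s_0)}$, which is exactly equation~\ref{equation:prompting_convergence_to_single_mixture_component_result}. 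The main obstacle is purely bookkeeping: ensuring that the $\prob_0(s\,|\,s_0)=0$ case is covered (since Lemma~\ref{lemma:prompting_convergence_to_single_mixture_component} is stated under $\prob_0(s\,|\,s_0)>0$), and choosing the right $\max\{a,b\}\leq a+b$ slack so the factor of $2$ comes out rather than a looser constant.
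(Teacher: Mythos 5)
Your proposal is correct and follows essentially the same route as the paper's proof: expand via linearity, bound $|B(s)|\leq 1$, apply Lemma~\ref{lemma:prompting_convergence_to_single_mixture_component} pointwise, and use $\max\{a,b\}\leq a+b$ with both conditionals summing to $1$ to obtain the factor of $2$. Your explicit handling of the $\prob_0(s\,|\,s_0)=0$ edge case is a small extra care the paper glosses over, but it does not change the argument.
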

\begin{proof}
To begin, we explicitly write the expectations difference:
\begin{align}
\left|B_{\prob}(s_{0})-B_{\prob_{0}}(s_{0})\right|&=\left|\sum_{s}B\left(s\right)\cdot\left[\prob\left(s\,|\,s_{0}\right)-\prob_{0}\left(s\,|\,s_{0}\right)\right]\right|
\end{align}
Which by the triangular inequality is at most:
\begin{align}
\le\sum_{s}\left|B\left(s\right)\right|\cdot\left|\prob\left(s\,|\,s_{0}\right)-\prob_{0}\left(s\,|\,s_{0}\right)\right|
\end{align}
Now, since the range of $B$ is $\left[-1,1\right]$ we can get rid of the $\left|B\left(s\right)\right|$ terms, and get that $\left|B_{\prob}(s_{0})-B_{\prob_{0}}(s_{0})\right|$ is at most:
\begin{align}
\sum_{s}\left|\prob\left(s\,|\,s_{0}\right)-\prob_{0}\left(s\,|\,s_{0}\right)\right|=\sum_{s}\prob_{0}\left(s\,|\,s_{0}\right)\cdot\left|\frac{\prob\left(s\,|\,s_{0}\right)}{\prob_{0}\left(s\,|\,s_{0}\right)}-1\right|
\end{align}
Importantly, by lemma~\ref{lemma:prompting_convergence_to_single_mixture_component} we have that:
\begin{align}
\left|\frac{\prob\left(s\,|\,s_{0}\right)}{\prob_{0}\left(s\,|\,s_{0}\right)}-1\right|\le\frac{1-\alpha}{\alpha}\cdot\frac{\prob_{1}\left(s_{0}\right)}{\prob_{0}\left(s_{0}\right)}\cdot\max\left\{ \frac{\prob_{1}\left(s\,|\,s_{0}\right)}{\prob_{0}\left(s\,|\,s_{0}\right)},1\right\} 
\end{align}
For any $s$, hence we got that $\left|B_{\prob}(s_{0})-B_{\prob_{0}}(s_{0})\right|$ is at most:
\begin{align}
\frac{1-\alpha}{\alpha}\cdot\frac{\prob_{1}\left(s_{0}\right)}{\prob_{0}\left(s_{0}\right)}\cdot\left[\sum_{s}\prob_{0}\left(s\,|\,s_{0}\right)\cdot\max\left\{ \frac{\prob_{1}\left(s\,|\,s_{0}\right)}{\prob_{0}\left(s\,|\,s_{0}\right)},1\right\} \right]\\\le\frac{1-\alpha}{\alpha}\cdot\frac{\prob_{1}\left(s_{0}\right)}{\prob_{0}\left(s_{0}\right)}\cdot\sum_{s}\left(\prob_{1}\left(s\,|\,s_{0}\right)+\prob_{0}\left(s\,|\,s_{0}\right)\right)
\end{align}
where the last inequality follows from the fact that sum of two non-negative terms is greater than the maximum of the terms. Finally, since both $\prob_{0}\left(s\,|\,s_{0}\right)$ and $\prob_{1}\left(s\,|\,s_{0}\right)$ are probability distributions, summing over all possible sentences $s$ yields $2$, and hence the inequality in equation~\ref{equation:prompting_convergence_to_single_mixture_component_result} follows.

\end{proof}
\subsection{Adversarial prompt construction}\label{subsection:adversarial_prompt_construction}
In this subsection, we prove a technical lemma which shows that when two distribution are sufficiently distinguishable (see definition~2 from the main text) ,then there exists a prompt such that the ratio of the prompt's likelihood according to these two distribution is arbitrary low. In the next sections we will use this lemma to prove  the existence adversarial prompt for which the conditions of lemma~\ref{lemma:prompting_convergence_to_single_mixture_component} holds. And hence an adversarial user might alter the model behavior (lemma~\ref{lemma:behavioral_implication_of_the_convergence_to__a_single_mixture_component}).
\begin{lemma}\label{lemma:construction_of_adversarial_prompt}
    Let $\beta,\epsilon > 0$ and $\prob_0,\prob_1$ two distributions. Suppose $\prob_0$ is $\beta$-distinguishable from $\prob_1$ then there exists a prompt $s$ of length $\frac{1}{\beta}\log\frac{1}{\epsilon}$ such that the following holds:
    \begin{align}
        \frac{\prob_1(s)}{\prob_0(s)} \leq \epsilon
    \end{align}
\end{lemma}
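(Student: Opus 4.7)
The plan is to exploit the chain-rule decomposition of the log-likelihood ratio, bound its expectation using $\beta$-distinguishability, and then invoke a standard averaging (existence from expectation) argument.

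First, I would expand the ratio $\log\frac{\prob_0(s)}{\prob_1(s)}$ for a sequence $s=s_1\oplus\dots\oplus s_n$ using the chain rule of conditional probabilities, giving
\begin{equation}
\log\frac{\prob_0(s_1\oplus\dots\oplus s_n)}{\prob_1(s_1\oplus\dots\oplus s_n)} \;=\; \sum_{i=1}^{n}\log\frac{\prob_0(s_i\mid s_1\oplus\dots\oplus s_{i-1})}{\prob_1(s_i\mid s_1\oplus\dots\oplus s_{i-1})}.
\end{equation}
Taking expectation with respect to $s\sim\prob_0$ and using linearity of expectation together with the tower property (the marginal of $s_{1:i}$ under $\prob_0$ is the joint prefix distribution), each summand equals exactly the quantity that Definition~\ref{def:distinguishable_distributions} lower-bounds by $\beta$. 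Summing these $n$ contributions, I obtain
\begin{equation}
\expectation_{s\sim\prob_0}\!\left[\log\frac{\prob_0(s)}{\prob_1(s)}\right] \;>\; n\beta.
\end{equation}

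Next, I would apply the elementary fact that the expectation of a random variable is at most its essential supremum: there must exist at least one sequence $s^{\star}$ in the support of $\prob_0$ (of length $n$) for which $\log\frac{\prob_0(s^{\star})}{\prob_1(s^{\star})} > n\beta$, equivalently $\frac{\prob_1(s^{\star})}{\prob_0(s^{\star})} < e^{-n\beta}$.

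Finally, choosing the sequence length $n = \lceil\frac{1}{\beta}\log\frac{1}{\epsilon}\rceil$ makes $e^{-n\beta}\le\epsilon$, yielding the desired prompt. The main conceptual step is the chain-rule identity plus the tower property for the conditional KL terms; the rest is bookkeeping. I do not anticipate a real obstacle here — the only small care needed is that $\beta$-distinguishability is stated to hold for every $n\ge 0$, so each of the $n$ telescoped terms is bounded individually by $\beta$ before summing.
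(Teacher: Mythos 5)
Your proposal is correct and follows essentially the same route as the paper's proof: the chain-rule decomposition of the log-likelihood ratio into conditional KL terms, the per-prefix application of $\beta$-distinguishability (valid for every prefix length $n\ge 0$), and an existence-from-expectation argument to extract a single sequence, followed by choosing $n\approx\frac{1}{\beta}\log\frac{1}{\epsilon}$. No gaps.
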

\begin{proof}
    Intuitively we use the fact that $\prob_0$ is $\beta$-distinguishable from $\prob_1$ to construct a prompt sentence by sentence, and get a prompt $q=s_1\oplus ...\oplus s_{|q|}$ such that:
    \begin{align}\label{eq:adverserial_prompt_induction_hypotesis}
\log\frac{\prob_{0}\left(s_{1}\oplus...\oplus s_{k}\,|\,s_{0}\right)}{\prob_{1}\left(s_{1}\oplus...\oplus s_{k}\,|\,s_{0}\right)}>\beta\cdot k
    \end{align} For any $k\le\left|q\right|$.

Let us look at the expectation value of the log ratio with respect to a sequence $s=(s_1...s_k)$ of $k$ sentences sampled from $\prob_-(\cdot)$:
\begin{equation}
    \expectation_{s\sim\prob_-(\cdot)}\bigg[\log\frac{\prob_-(s)}{\prob_+(s)}\bigg] = \expectation_{(s_1\oplus...\oplus s_k)\sim\prob_-(\cdot)}\bigg[\log\frac{\prob_-(s_1\oplus...\oplus s_k)}{\prob_+(s_1\oplus...\oplus s_k)}\bigg] = 
\end{equation}
Using the law of conditional probabilities recursively and the linearity of the expectation value:
\begin{equation}
=\sum_{i=1}^{k}\expectation_{(s_{1}\oplus...\oplus s_{k})\sim\prob_{-}(\cdot)}\bigg[\log\frac{\prob_{-}(s_{i}| s_{1}\oplus...\oplus s_{i-1})}{\prob_{+}(s_{i}| s_{1}\oplus...\oplus s_{i-1})}\bigg]=
\end{equation}
\begin{equation}
=\sum_{i=1}^{k}\expectation_{(s_{1}\oplus...\oplus s_{i})\sim\prob_{-}(\cdot)}\bigg[\log\frac{\prob_{-}(s_{i}| s_{1}\oplus...\oplus s_{i-1})}{\prob_{+}(s_{i}| s_{1}\oplus...\oplus s_{i-1})}\bigg]=
\end{equation}
The expectation value with respect to $s_i$ is the conditional KL divergence:
\begin{equation}
=\sum_{i=1}^{k}\expectation_{(s_{1}\oplus...\oplus s_{i-1})\sim\prob_{-}(\cdot)}\left[D_{KL}\left(\prob_{-}\left(\cdot| s_{1}\oplus...\oplus s_{i-1}\right)||\prob_{+}\left(\cdot| s_{1}\oplus...\oplus s_{i-1}\right)\right)\right]
\end{equation}
From $\beta$ distinguishability:
\begin{equation}
    > k\cdot\beta
\end{equation}
Hence we obtain:
\begin{equation}\label{beta-expectation}
  \expectation_{s\sim\prob_{-}(\cdot)}\bigg[\log\frac{\prob_{-}(s)}{\prob_{+}(s)}\bigg]>\beta|s|
\end{equation}
In particular, there exists a specific sequence $s$ such that the inequality holds. We take that to be the prompt $q$.

Now, we can choose $\left|q\right|>\frac{\log\frac{1}{\epsilon}}{\beta}$ to obtain the desired result that
\begin{align}
\frac{\prob_{1}( s)}{\prob_{0}( s)} \leq\epsilon
\end{align}
As desired.
\end{proof}

\begin{lemma}\label{lemma:construction_of_adversarial_prompt_with_prefix}
    Let $\beta,\sigma,\epsilon,\delta > 0$ and $\prob_0,\prob_1$ two distributions, $s_0$ a prefix sampled from $\prob_1$ . Suppose $\prob_0$ is $\beta$-prompt-distinguishable from $\prob_1$, and $\prob_1$ is $\beta'$-undistinguishable, $\sigma$-similar and positive w.r.t. $\prob_0$, then with probability $1-\delta$, there exists a prompt $s$ of length $\frac{1}{\beta}\log\frac{1}{\epsilon} +\frac{\beta'}{\beta}|s_0| +\frac{\sigma}{\beta}\sqrt{\frac{|s_0|}{\delta}} + 1$ such that the following holds:
    \begin{align}
        \frac{\prob_1(s_0 \oplus s)}{\prob_0(s_0 \oplus s)} \leq \epsilon
    \end{align}
\end{lemma}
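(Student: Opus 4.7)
The plan is to bootstrap from lemma~\ref{lemma:construction_of_adversarial_prompt}: starting from a prefix $s_0$ sampled from $\prob_1$, the log-ratio $\log(\prob_1(s_0)/\prob_0(s_0))$ is typically positive, so I must first control this initial ``head start'' of $\prob_1$ over $\prob_0$, then construct an adversarial continuation that overcomes it by at least $\log(1/\epsilon)$ in favor of $\prob_0$. The ``$+1$'' in the stated length will come from a single negative-behavior sentence inserted to activate $\beta$-prompt-distinguishability.

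The first step is to bound the initial gap using $\beta'$-undistinguishability together with $\sigma$-similarity. Decomposing the log-ratio as a telescoping sum via the chain rule and taking expectation under $s_0\sim\prob_1$, exactly as in the derivation leading to equation~\ref{beta-expectation}, gives
\begin{equation*}
\expectation_{s_0\sim\prob_1}\left[\log\frac{\prob_1(s_0)}{\prob_0(s_0)}\right] \leq \beta'|s_0|,
\end{equation*}
while the $\sigma$-similarity assumption bounds the corresponding variance by $\sigma^2|s_0|$. Applying Cantelli's inequality with deviation parameter $\sigma\sqrt{|s_0|(1-\delta)/\delta}\leq \sigma\sqrt{|s_0|/\delta}$ then yields the high-probability bound
\begin{equation*}
\log\frac{\prob_1(s_0)}{\prob_0(s_0)} \leq \beta'|s_0| + \sigma\sqrt{|s_0|/\delta}
\end{equation*}
with probability at least $1-\delta$ over the draw of $s_0$.

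The second step is to build the adversarial continuation. Since $\beta$-prompt-distinguishability only applies when the conditioning prefix ends with a negatively-behaving sentence, I first append one such sentence $s_{\text{neg}}$; by the positivity assumption $\prob_1(s_{\text{neg}}|s_0) < \prob_0(s_{\text{neg}}|s_0)$, so this sentence contributes a non-positive term to $\log(\prob_1/\prob_0)$ and can only help. With the new prefix $s_0\oplus s_{\text{neg}}$ ending in a negative sentence, $\beta$-prompt-distinguishability activates, and repeating the chain-rule argument from lemma~\ref{lemma:construction_of_adversarial_prompt} in this conditional setting produces, for any $m$, a continuation $s'$ of $m$ sentences satisfying $\log(\prob_0(s'|s_0\oplus s_{\text{neg}})/\prob_1(s'|s_0\oplus s_{\text{neg}})) \geq \beta m$.

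Combining both bounds on $\log(\prob_1(s_0\oplus s_{\text{neg}}\oplus s')/\prob_0(s_0\oplus s_{\text{neg}}\oplus s'))$ and requiring it to fall below $\log\epsilon$ demands $\beta m \geq \log(1/\epsilon) + \beta'|s_0| + \sigma\sqrt{|s_0|/\delta}$; taking $s := s_{\text{neg}}\oplus s'$ of total length $m+1$ then matches the claim exactly. The main obstacle is the first step: converting the in-expectation bound on the prefix log-ratio into a high-probability statement is the sole reason $\sigma$-similarity appears as a hypothesis, and it is responsible for both the $\sigma\sqrt{|s_0|/\delta}$ term in the length and the $\delta$ failure probability in the conclusion; the rest is a careful bookkeeping of how $\beta$-prompt-distinguishability and the positivity assumption interact to let the prefix-free argument go through after conditioning on $s_0\oplus s_{\text{neg}}$.
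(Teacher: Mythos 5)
Your proposal is correct and follows essentially the same route as the paper's proof: you use the telescoping chain-rule argument with $\beta'$-undistinguishability plus $\sigma$-similarity and Cantelli's inequality to bound the prefix's initial log-ratio with probability $1-\delta$, insert a single negative sentence (harmless by positivity) to activate $\beta$-prompt-distinguishability, and then run the lemma~\ref{lemma:construction_of_adversarial_prompt} construction conditionally to overcome the gap. The decomposition of the final log-ratio into the three terms and the resulting length bound match the paper's argument exactly.
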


\begin{proof}
    Intuitively, given $s_0$, we use the fact that $\prob_0$ is $\beta$-prompt-distinguishable from $\prob_1$ to construct a prompt sentence by sentence, and get a prompt $q=s_1\oplus ...\oplus s_{|q|}$ such that:
    \begin{align}\label{eq:adverserial_prompt_induction_hypotesis}
\log\frac{\prob_{0}\left(s_{1}\oplus...\oplus s_{k}\,|\,s_{0}\right)}{\prob_{1}\left(s_{1}\oplus...\oplus s_{k}\,|\,s_{0}\right)}>\beta\cdot k
    \end{align} For any $k\le\left|q\right|$.

To induce the $\beta$-prompt-distinguishability, we start by adding a sentence $s'$ of negative behavior to the prefix $s_0$.

Let us look at the expectation value of the log ratio with respect to a sequence $s=(s_1...s_k)$ of $k$ sentences sampled from $\prob_-(\cdot|s_0 \oplus s')$:
\begin{equation}
    \expectation_{s\sim\prob_-(\cdot|s_0 \oplus s')}\bigg[\log\frac{\prob_-(s|s_0 \oplus s')}{\prob_+(s|s_0 \oplus s')}\bigg] = \expectation_{(s_1\oplus...\oplus s_k)\sim\prob_-(\cdot|s_0 \oplus s')}\bigg[\log\frac{\prob_-(s_1\oplus...\oplus s_k|s_0 \oplus s')}{\prob_+(s_1\oplus...\oplus s_k|s_0 \oplus s')}\bigg] = 
\end{equation}
Using the law of conditional probabilities recursively and the linearity of the expectation value:
\begin{equation}
=\sum_{i=1}^{k}\expectation_{(s_{1}\oplus...\oplus s_{k})\sim\prob_{-}(\cdot|s_{0} \oplus s')}\bigg[\log\frac{\prob_{-}(s_{i}|s_{0} \oplus s'\oplus s_{1}\oplus...\oplus s_{i-1})}{\prob_{+}(s_{i}|s_{0} \oplus s'\oplus s_{1}\oplus...\oplus s_{i-1})}\bigg]=
\end{equation}
\begin{equation}
=\sum_{i=1}^{k}\expectation_{(s_{1}\oplus...\oplus s_{i})\sim\prob_{-}(\cdot|s_{0} \oplus s')}\bigg[\log\frac{\prob_{-}(s_{i}|s_{0} \oplus s'\oplus s_{1}\oplus...\oplus s_{i-1})}{\prob_{+}(s_{i}|s_{0} \oplus s'\oplus s_{1}\oplus...\oplus s_{i-1})}\bigg]=
\end{equation}
The expectation value with respect to $s_i$ is the conditional KL divergence:
\begin{equation}
=\sum_{i=1}^{k}\expectation_{(s_{1}\oplus...\oplus s_{i-1})\sim\prob_{-}(\cdot|s_{0} \oplus s')}\left[D_{KL}\left(\prob_{-}\left(\cdot|s_{0} \oplus s'\oplus s_{1}\oplus...\oplus s_{i-1}\right)||\prob_{+}\left(\cdot|s_{0} \oplus s'\oplus s_{1}\oplus...\oplus s_{i-1}\right)\right)\right]
\end{equation}
From $\beta$-prompt-distinguishability:
\begin{equation}
    > k\cdot\beta
\end{equation}
Hence we obtain:
\begin{equation}\label{beta-expectation}
  \expectation_{s\sim\prob_{-}(\cdot|s_{0}\oplus s')}\bigg[\log\frac{\prob_{-}(s|s_{0}\oplus s')}{\prob_{+}(s|s_{0}\oplus s')}\bigg]>\beta|s|
\end{equation}
In particular, there exists a specific sequence $s$ such that the inequality holds. We take that to be the prompt $q$.

Next, observe that:
\begin{equation}
    \prob\bigg[\log\frac{\prob_+(s_0)}{\prob_-(s_0)} > c|s_0|\bigg] = \prob\bigg[\log\frac{\prob_+(s_0)}{\prob_-(s_0)} - \beta'|s_0| > (c-\beta')|s_0|\bigg]
\end{equation}
From $\beta'$-undistinguishability we obtain equation \ref{beta-expectation} with opposite inequality sign and for reversing the roles between $\prob_0$ and $\prob_1$. This gives:
\begin{equation}
    < \prob\bigg[\log\frac{\prob_+(s_0)}{\prob_-(s_0)} - \expectation_{s\sim\prob_+(\cdot)}[\log\frac{\prob_+(s)}{\prob_-(s)}] > (c-\beta')|s_0|\bigg] <
\end{equation}
From Cantelli's inequality:
\begin{equation}
    < \frac{Var_{s\sim\prob_+(\cdot)}[\log\frac{\prob_+(s)}{\prob_-(s)}]}{Var_{s\sim\prob_+(\cdot)}[\log\frac{\prob_+(s)}{\prob_-(s)}] + (c-\beta')^2|s_0|^2} < \frac{Var_{s\sim\prob_+(\cdot)}[\log\frac{\prob_+(s)}{\prob_-(s)}]}{(c-\beta')^2|s_0|^2} < \frac{\sigma^2|s_0|}{(c-\beta')^2|s_0|^2}
\end{equation}
The last transition is from $\sigma$-similarity. Demand that this is smaller than $\delta$ and obtain the condition on $c$:
\begin{equation}
    c \geq \beta' + \frac{\sigma}{\sqrt{|s_0|\delta}}
\end{equation}
Thus if $c = \beta' + \frac{\sigma}{\sqrt{|s_0|\delta}}$, we obtain:
\begin{equation}
    \prob\bigg[\log\frac{\prob_+(s_0)}{\prob_-(s_0)} > (\beta' + \frac{\sigma}{\sqrt{|s_0|\delta}})|s_0|\bigg] < \delta
\end{equation}
Hence with probability $1-\delta$:
\begin{equation}
    \log\frac{\prob_-(s_0\oplus s'\oplus s)}{\prob_+(s_0\oplus s'\oplus s)} = \log\frac{\prob_-(s|s_0\oplus s')}{\prob_+(s|s_0 \oplus s')} + \log\frac{\prob_-(s'|s_0)}{\prob_+(s'|s_0)} + \log\frac{\prob_-(s_0)}{\prob_+(s_0)} > |s|\beta - |s_0|\beta' - \sigma\sqrt{\frac{|s_0|}{\delta}}
\end{equation}
Where we used the positivity of $\prob_+$ in the inequality $\log\frac{\prob_-(s'|s_0)}{\prob_+(s'|s_0)} > 0$ as $s'$ is a negative sentence.
Thus we can choose $\left|s\right|>\frac{\beta'\cdot\left|s_{0}\right|+\sigma\sqrt{\frac{|s_0|}{\delta}}+\log\frac{1}{\epsilon}}{\beta}$ to obtain that $\frac{\prob_{1}(s_{0}\oplus s)}{\prob_{0}(s_{0}\oplus s)}\leq\epsilon$ as desired. The total length of the prompt is $|s'\oplus s| = 1 + |s|$.
\end{proof}

\section{Proof of theorem 1}\label{proof:theorem_1}

Let $\prob_{+}$ and $\prob_{-}$ be the well-behaved and ill-behaved mixture components from the $\alpha,\beta,\gamma$-negative-distinguishability definition and $\epsilon>0$. Then, since $\prob_{-}$ is $\beta$-distinguishable from $\prob_{+}$, lemma~\ref{lemma:construction_of_adversarial_prompt} assures us that for any $\epsilon'>0$ there exists a sequence $s$ of $\frac{1}{\beta}\cdot\left(\log\frac{1}{\epsilon'}\right)$ sentences such that $\frac{\prob_{+}\left(s\right)}{\prob_{-}\left(s\right)} < \epsilon'$.
Now, lemma~\ref{lemma:behavioral_implication_of_the_convergence_to__a_single_mixture_component} assures us that for such prompt a $s$, the behaviour of $\prob$ will converge to the ill-behaved component in the following sense:
\begin{align}
\left|B_{\prob}\left(s\right)-B_{\prob_{-}}\left(s\right)\right|\leq2\cdot\frac{1-\alpha}{\alpha}\cdot\epsilon^{'}
\end{align}
Thus, we can choose $\epsilon'<\frac{\alpha\cdot\epsilon}{4}$ and get that:
\begin{align}
\left|B_{\prob}\left(s\right)-B_{\prob_{-}}\left(s\right)\right|<\epsilon
\end{align}
Finally, by definition we have that $B_{\prob_{-}}\left(s\right)\le\gamma$ and hence we get that
\begin{align}
  B_{\prob}\left(s\right)<\gamma+\epsilon
\end{align}
As desired.

\section{Proof of theorem 2}\label{proof:theorem_2}
Let $\prob_{+}$ and $\prob_{-}$ be the well-behaved and ill-behaved mixture components from the $\alpha,\beta,\gamma$-negative-prompt-distinguishability definition, and let $s_0$ be an initial prompt and $\epsilon,\delta>0$. Then since $\prob_{-}$ is $\beta$-prompt-distinguishable from $\prob_{+}$, $\prob_+$ is $\beta$-distinguishable, $\sigma$-similar and positive w.r.t $\prob_-$, lemma~\ref{lemma:construction_of_adversarial_prompt_with_prefix} assures us that for any $\epsilon'>0$ there exists with probability $1-\delta$ a sequence $s_1$ of $\frac{1}{\beta}\cdot\left(\log\frac{1}{\epsilon'}+\sigma\sqrt{\frac{|s_0|}{\delta}}\right)+|s_{0}|+1$ sentences such that $\frac{\prob_{+}\left(s_{0}\oplus s_{1}\right)}{\prob_{-}\left(s_{0}\oplus s_{1}\right)} < \epsilon'$.
Now, lemma~\ref{lemma:behavioral_implication_of_the_convergence_to__a_single_mixture_component} assures us that for such prompt a $s_{0}\oplus s_{1}$, the behaviour of $\prob$ will converge to the ill-behaved component in the following sense:
\begin{align}
\left|B_{\prob}\left(s_{0}\oplus s_{1}\right)-B_{\prob_{-}}\left(s_{0}\oplus s_{1}\right)\right|\leq2\cdot\frac{1-\alpha}{\alpha}\cdot\epsilon^{'}
\end{align}
Thus, we can choose $\epsilon'<\frac{\alpha\cdot\epsilon}{4}$ and get that:
\begin{align}
\left|B_{\prob}\left(s_{0}\oplus s_{1}\right)-B_{\prob_{-}}\left(s_{0}\oplus s_{1}\right)\right|<\epsilon
\end{align}
Finally, by definition we have that $B_{\prob_{-}}\left(s_{0}\oplus s_{1}\right)\le\gamma$ and hence we get that
\begin{align}
  B_{\prob}\left(s_{0}\oplus s_{1}\right)<\gamma+\epsilon
\end{align}
With probability $1-\delta$, as desired.

\section{Proof of theorem 3}\label{proof:theorem_3}
Let $\prob_{+}$ and $\prob_{-}$ be the well-behaved and ill-behaved mixture components from the $\alpha,\beta,\gamma$-negative-prompt-distinguishability definition.
Essentially, we we show that there exists a choice of prompts $q_1...q_{n+1}$ each of them consists of at most $\frac{\beta'}{\beta}|a_i| + \frac{\sigma}{\beta}\sqrt{\frac{|a_i|}{\delta}} + \frac{\log\frac{1}{\alpha} + \log\frac{1}{\epsilon}+\log 4}{n\beta} + 1$ sentences such that:
\begin{align}\label{equation:theorem_5_required_ratio}
\log\frac{\prob_{+}\left(q_{1}\oplus a_{1}\oplus...\oplus q_{n}\oplus a_{n}\oplus q_{n+1}\right)}{\prob_{-}\left(q_{1}\oplus a_{1}\oplus...\oplus q_{n}\oplus a_{n}\oplus q_{n+1}\right)}<\sum_{i=1}^{n+1}\left(\beta'|a_i| + \sigma\sqrt{\frac{|a_i|}{\delta}}-\beta\left|q_{i}\right|\right)
\end{align}
Then, we will use lemma~\ref{lemma:behavioral_implication_of_the_convergence_to__a_single_mixture_component} and get that for any such prompts $q_1...q_{n+1}$ the behaviour of $\prob$ will converge to the ill-behaved component in the following sense:
\begin{align}
\left|B_{\prob}\left(s\right)-B_{\prob_{-}}\left(s\right)\right|\leq2\cdot\frac{1-\alpha}{\alpha}\cdot\exp\left(\sum_{i=1}^{n+1}\left(\beta'|a_i| + \sigma\sqrt{\frac{|a_i|}{\delta}}-\beta\left|q_{i}\right|\right)\right)
\end{align}
Where $s$ denote the concatenation of the messages in the  conversation: $q_{1}\oplus a_{1}\oplus...\oplus q_{n}\oplus a_{n}\oplus q_{n+1}$.
Thus, we will get that $B_{\prob}\left(s\right)<\gamma+\epsilon$ for $\sum_{i=1}^{n+1}\left|q_{i}\right|> \sum_{i=1}^{n+1}\bigg(\frac{\beta'}{\beta}\left|a_{i}\right| + \frac{\sigma}{\beta}\sqrt{\frac{|a_i|}{\delta}} + 1\bigg) + \frac{\log\left(\frac{1-\alpha}{2\cdot\alpha\cdot\epsilon}\right)}{\beta}$ as desired.

Intuitively, we will prove the existence of the prompts $q_1...q_{n+1}$ such that the length of any prompt is at most $\frac{\beta'}{\beta}\left|a_{i}\right| + \frac{\sigma}{\beta}\sqrt{\frac{|a_i|}{\delta}}  + \frac{\log\frac{1}{\alpha} + \log\frac{1}{\epsilon}+\log 4}{n\beta} + 1$ and equation~\ref{equation:theorem_5_required_ratio} upholds by using an induction argument, where the induction hypothesis follows from the fact that $\prob_{-}$ is $\beta$-distinguishable from $\prob_{+}$.
Formally, the base case of the induction follows by using lemma \ref{lemma:construction_of_adversarial_prompt} to construct an adversarial prompt $q_1$ such that $\log\frac{\prob_{+}\left(q_{1}\right)}{\prob_{-}\left(q_{1}\right)}<-\beta\cdot|q_{1}|$.
Now, assume that there exists prompts $q_1...q_k$ such that the length of any prompt is at most $\frac{\beta'}{\beta}\left|a_{i}\right| + \frac{\sigma}{\beta}\sqrt{\frac{|a_i|}{\delta}}  + \frac{\log\frac{1}{\alpha} + \log\frac{1}{\epsilon}+\log 4}{n\beta} + 1$ and equation~\ref{equation:theorem_5_required_ratio} upholds (with $n=k-1)$. Then the proof of lemma \ref{lemma:construction_of_adversarial_prompt_with_prefix} (equation~\ref{beta-expectation}) assures us that there exists an adversarial prompt $q_{k+1}$ such that:
\begin{align}
\log\frac{\prob_{+}\left(q_{k+1}|q_{1}\oplus a_{1}\oplus\dots\oplus q_{k}\oplus a_{k}\right)}{\prob_{-}\left(q_{k+1}|q_{1}\oplus a_{1}\oplus\dots\oplus q_{k}\oplus a_{k}\right)}<-\beta\cdot|q_{k+1}|
\end{align}
Now, by the chain rule of conditional probabilities 
we have that:
\begin{align}\label{47}
\log\frac{\prob_{+}\left(q_{1}\oplus a_{1}\oplus\dots\oplus q_{k}\oplus a_{k}\oplus q_{k+1}\right)}{\prob_{-}\left(q_{1}\oplus a_{1}\oplus\dots\oplus q_{k}\oplus a_{k}\oplus q_{k+1}\right)}<-\beta\cdot|q_{k+1}|+\log\frac{\prob_{+}\left(a_{k}|q_{1}\oplus a_{1}\oplus\dots\oplus a_{k-1}\oplus q_{k}\right)}{\prob_{-}\left(a_{k}|q_{1}\oplus a_{1}\oplus\dots\oplus a_{k-1}\oplus q_{k}\right)}
\end{align}
Now, observe that:
\begin{equation}
    \prob\bigg[\log\frac{\prob_{+}\left(a_{k}|q_{1}\oplus a_{1}\oplus\dots\oplus a_{k-1}\oplus q_{k}\right)}{\prob_{-}\left(a_{k}|q_{1}\oplus a_{1}\oplus\dots\oplus a_{k-1}\oplus q_{k}\right)} > c|a_k|\bigg] = \prob\bigg[\log\frac{\prob_+\left(a_{k}|q_{1}\oplus a_{1}\oplus\dots\oplus a_{k-1}\oplus q_{k}\right)}{\prob_-\left(a_{k}|q_{1}\oplus a_{1}\oplus\dots\oplus a_{k-1}\oplus q_{k}\right)} - \beta'|a_k| > (c-\beta')|a_k|\bigg]
\end{equation}
From $\beta'$-prompt-undistinguishability, since $q_k$ ends with a negative sentence, equation \ref{beta-expectation} gives:
\begin{equation}
    < \prob\bigg[\log\frac{\prob_+\left(a_{k}|q_{1}\oplus a_{1}\oplus\dots\oplus q_{k}\right)}{\prob_-\left(a_{k}|q_{1}\oplus a_{1}\oplus\dots\oplus q_{k}\right)} - \expectation_{s\sim\prob_+\left(\cdot|q_{1}\oplus a_{1}\oplus\dots\oplus  q_{k}\right)}[\log\frac{\prob_+\left(s|q_{1}\oplus a_{1}\oplus\dots\oplus q_{k}\right)}{\prob_-\left(s|q_{1}\oplus a_{1}\oplus\dots\oplus q_{k}\right)}] > (c-\beta')|a_k|\bigg] <
\end{equation}
From Cantelli's inequality:
\begin{equation}
    < \frac{Var_{s\sim\prob_+\left(s|q_{1}\oplus a_{1}\oplus\dots\oplus q_{k}\right)}[\log\frac{\prob_+\left(s|q_{1}\oplus a_{1}\oplus\dots\oplus q_{k}\right)}{\prob_-\left(s|q_{1}\oplus a_{1}\oplus\dots\oplus q_{k}\right)}]}{Var_{s\sim\prob_+\left(\cdot|q_{1}\oplus a_{1}\oplus\dots\oplus q_{k}\right)}[\log\frac{\prob_+\left(s|q_{1}\oplus a_{1}\oplus\dots\oplus q_{k}\right)}{\prob_-\left(s|q_{1}\oplus a_{1}\oplus\dots\oplus q_{k}\right)}] + (c-\beta')^2|a_k|^2}<
\end{equation}
\begin{equation}
    < \frac{Var_{s\sim\prob_+\left(s|q_{1}\oplus a_{1}\oplus\dots\oplus q_{k}\right)}[\log\frac{\prob_+\left(s|q_{1}\oplus a_{1}\oplus\dots\oplus q_{k}\right)}{\prob_-\left(s|q_{1}\oplus a_{1}\oplus\dots\oplus q_{k}\right)}]}{(c-\beta')^2|a_k|^2} < \frac{\sigma^2|a_k|}{(c-\beta')^2|a_k|^2}
\end{equation}

The last transition is from $\sigma$-similarity. Demand that this is smaller than $\delta'$ and obtain the condition on $c$:
\begin{equation}
    c \geq \beta' + \frac{\sigma}{\sqrt{|a_k|\delta'}}
\end{equation}
Thus if $c = \beta' + \frac{\sigma}{\sqrt{|a_k|\delta'}}$, we obtain:
\begin{equation}
    \prob\bigg[\log\frac{\prob_{+}\left(a_{k}|q_{1}\oplus a_{1}\oplus\dots\oplus a_{k-1}\oplus q_{k}\right)}{\prob_{-}\left(a_{k}|q_{1}\oplus a_{1}\oplus\dots\oplus a_{k-1}\oplus q_{k}\right)} > (\beta' + \frac{\sigma}{\sqrt{|a_k|\delta'}})|s_0|\bigg] < \delta'
\end{equation}
Lastly, recall that for inducing the distinguishability from prompt $q_{k+1}$, we need to add a triggering sentence before it, hence the $+1$.

Hence by plugging this into equation \ref{47}, then with probability $1-\delta'$: 
\begin{align}
\log\frac{\prob_{+}\left(q_{1}\oplus a_{1}\oplus\dots\oplus q_{k}\oplus a_{k}\oplus q_{k+1}\right)}{\prob_{-}\left(q_{1}\oplus a_{1}\oplus\dots\oplus q_{k}\oplus a_{k}\oplus q_{k+1}\right)}< -|q_{k+1}|\beta + |a_k|\beta' + \sigma\sqrt{\frac{|a_k|}{\delta'}}
\end{align}
So we can use the induction hypothesis to upper bound the $\log\frac{\prob_{+}\left(q_{k}|q_{1}\oplus a_{1}\oplus\dots\oplus q_{k-1}\oplus a_{k-1}\right)}{\prob_{-}\left(q_{k}|q_{1}\oplus a_{1}\oplus\dots\oplus q_{k-1}\oplus a_{k-1}\right)}$ term and get that:
\begin{align}
\log\frac{\prob_{+}\left(q_{1}\oplus a_{1}\oplus\dots\oplus q_{k}\oplus a_{k}\oplus q_{k+1}\right)}{\prob_{-}\left(q_{1}\oplus a_{1}\oplus\dots\oplus q_{k}\oplus a_{k}\oplus q_{k+1}\right)}<\sum_{i=1}^{k+1}\left(-|q_{i+1}|\beta + |a_i|\beta' + \sigma\sqrt{\frac{|a_i|}{\delta'}}\right)
\end{align}
As desired. The total probability of the existence of the prompts $q_1...q_{n+1}$ is $(1-\delta')^n$. Hence it suffices to choose $\delta'=\frac{\delta}{n}$ to ensure that they exist with probability $1-\delta$.

\section{Proof of theorem 4}\label{proof:theorem_4}
Here we will use modified versions of the proof's building blocks. First, we observe that without best of $n$, the conditional model distribution to a prompt $s$ is:
\begin{equation}
    \prob(s'|s) = \frac{1}{1 + \frac{1-\alpha}{\alpha}\frac{\prob_+(s)}{ \prob_-(s)}} \prob_-(s'|s) + \frac{1}{1 + \frac{\alpha}{1-\alpha}\frac{\prob_-(s)}{\prob_+(s)}}\prob_+(s'|s)
\end{equation}
Now, when sampling from this distribution, the probability of sampling a negative response is the probability to sample from $\prob_-$, which is equal to its prior:
\begin{equation}
    \prob(\textit{sample negative response})=\frac{1}{1 + \frac{1-\alpha}{\alpha}\frac{\prob_+(s)}{ \prob_-(s)}}
\end{equation}
Similarly the probability for sampling a positive response:
\begin{equation}
    \prob(\textit{sample positive response})=\frac{1}{1 + \frac{\alpha}{1-\alpha}\frac{\prob_-(s)}{\prob_+(s)}}
\end{equation}
When sampling $n$ responses, if some of them are from $\prob_+$, then they will be chosen by the reward function in best of $n$, hence the behavior expectation is positive. Thus sampling a negative response can only happen if all $n$ responses are sampled from $\prob_-$. The probability of this happening is:
\begin{equation}\label{eq:best_of_n}
   \prob_{\textit{best of n}}(\textit{sample negative response})=\prob(\textit{sample negative response})^n=\big(\frac{1}{1 + \frac{1-\alpha}{\alpha}\frac{\prob_+(s)}{ \prob_-(s)}} \big)^n
\end{equation}
Otherwise, a response is sampled from $\prob_+$. Thus the conditional model response can be rewritten for best of $n$ as:
\begin{equation}
    \prob_{\textit{best of n}}(s'|s) = \big(\frac{1}{1 + \frac{1-\alpha}{\alpha}\frac{\prob_+(s)}{ \prob_-(s)}} \big)^n \prob_-(s'|s) + \bigg(1-\big(\frac{1}{1 + \frac{1-\alpha}{\alpha}\frac{\prob_+(s)}{ \prob_-(s)}} \bigg)^n \bigg)\prob_+(s'|s)
\end{equation}

Now, we calculate the behavior expectation difference between the best of $n$ distribution and the negative distribution:
\begin{equation}
    |B_{\prob_{\textit{best of n}}}(s) - B_{\prob_-}(s)| = |\sum_{s'}B(s')(\prob_{\textit{best of n}}(s'|s)-\prob_-(s'|s))| \leq  
\end{equation}
Since $B$ is bounded between $[-1,+1]$:
\begin{equation}
    \leq \sum_{s'}|(\prob_{\textit{best of n}}(s'|s)-\prob_-(s'|s))|
\end{equation}
Using our expression for the best of $n$ distribution in terms of the components (equation \ref{eq:best_of_n}):
\begin{equation}
    = \sum_{s'}\bigg|\bigg(\big(\frac{1}{1 + \frac{1-\alpha}{\alpha}\frac{\prob_+(s)}{ \prob_-(s)}} \big)^n \prob_-(s'|s) + \bigg(1-\big(\frac{1}{1 + \frac{1-\alpha}{\alpha}\frac{\prob_+(s)}{ \prob_-(s)}} \bigg)^n \bigg)\prob_+(s'|s)-\prob_-(s'|s)\bigg)\bigg|
\end{equation}
\begin{equation}
    = \sum_{s'}\bigg|\bigg(\bigg(\big(\frac{1}{1 + \frac{1-\alpha}{\alpha}\frac{\prob_+(s)}{ \prob_-(s)}} \bigg)^n - 1\bigg) \prob_-(s'|s) + \bigg(1-\big(\frac{1}{1 + \frac{1-\alpha}{\alpha}\frac{\prob_+(s)}{ \prob_-(s)}} \bigg)^n \bigg)\prob_+(s'|s)\bigg)\bigg|
\end{equation}
\begin{equation}
    =\sum_{s'}\bigg| \bigg(1-\big(\frac{1}{1 + \frac{1-\alpha}{\alpha}\frac{\prob_+(s)}{ \prob_-(s)}} \bigg)^n \bigg)\big(\prob_+(s'|s)-\prob_-(s'|s)\big)\bigg|
\end{equation}
\begin{equation}
    =\bigg(1-\big(\frac{1}{1 + \frac{1-\alpha}{\alpha}\frac{\prob_+(s)}{ \prob_-(s)}} \bigg)^n \bigg) \sum_{s'}\big| \big(\prob_+(s'|s)-\prob_-(s'|s)\big)\big|
\end{equation}
The sum over differences of two distributions can be bounded by $2$ with the triangle inequality, yielding:
\begin{equation}
    \leq 2 \bigg(1-\big(\frac{1}{1 + \frac{1-\alpha}{\alpha}\frac{\prob_+(s)}{ \prob_-(s)}} \bigg)^n \bigg)
\end{equation}
Now, using lemma \ref{lemma:construction_of_adversarial_prompt} we take $s$ of length $\frac{\log\frac{1}{\epsilon'} + \log\frac{1}{\alpha}+\log n}{\beta}$ such that $\log \frac{\prob_+(s)}{\prob_-(s)} < \frac{\epsilon'\alpha}{n}$:
\begin{equation}
    \leq 2 \bigg(1-\big(\frac{1}{1 + \frac{1-\alpha}{\alpha}\frac{\epsilon'\alpha}{n}} \bigg)^n \bigg) \leq 2 \bigg(1-\big(\frac{1}{1 + \frac{\epsilon'}{n}} \bigg)^n \bigg)
\end{equation}
Using the inequality $(1+\frac{\epsilon'}{n})^{n} \leq e^{\epsilon'}$:
\begin{equation}
    \leq 2(1-e^{-\epsilon'}) < 2\epsilon'
\end{equation}
Taking $\epsilon'=\frac{\epsilon}{4}$ we get the desired result.

Hence for $|s|>\frac{\log\frac{1}{\epsilon} + \log\frac{1}{\alpha}+\log 4+\log n}{\beta}$
we obtain $B_{\prob_{\textit{best of n}}}(s) < \gamma + \epsilon$ as desired.

\section{Lemmas for section 4}\label{proof:section 4 lemmas}
The following lemmas help establish a method to extract approximations $\alpha$ and $\beta$ from the KL divergence between $\prob_-$ and the LLM distribution:
\begin{lemma}
    Let $\prob_{LLM}$ be a language model distribution that is $\alpha,\beta,\gamma$-distinguishable w.r.t a behavior $B$, then the misaligning prompt $s$ guaranteed from theorem 1 satisfies:
    \begin{equation}
    D_{KL}(\prob_-(\cdot|s)||\prob_{LLM}(\cdot|s)) < \log(1+e^{\log\frac{1}{\alpha}-\beta|s|})
    \end{equation}
\end{lemma}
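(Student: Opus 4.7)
The plan is to exploit the mixture decomposition $\prob_{LLM}=\alpha\prob_-+(1-\alpha)\prob_+$ to rewrite $\prob_{LLM}(\cdot\,|\,s)$ as a convex combination of the conditional component distributions, and then to turn a pointwise likelihood-ratio bound into a KL bound. Concretely, applying Bayes' rule to the mixture yields
\begin{equation}
\prob_{LLM}(\cdot\,|\,s)=\lambda\,\prob_-(\cdot\,|\,s)+(1-\lambda)\,\prob_+(\cdot\,|\,s),\qquad \lambda=\frac{\alpha\prob_-(s)}{\alpha\prob_-(s)+(1-\alpha)\prob_+(s)},
\end{equation}
so that the prompt reweights the priors exactly as discussed in section~\ref{sec:mixture} (see equation~\ref{conditional_prior}).

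Next I would observe that since $(1-\lambda)\prob_+(\cdot\,|\,s)\geq 0$ pointwise, we have $\prob_{LLM}(\cdot\,|\,s)\geq \lambda\prob_-(\cdot\,|\,s)$, which immediately gives the pointwise bound $\prob_-(x\,|\,s)/\prob_{LLM}(x\,|\,s)\leq 1/\lambda$ for every $x$ in the support of $\prob_-(\cdot\,|\,s)$. Taking the expectation of the logarithm under $\prob_-(\cdot\,|\,s)$ therefore bounds the KL divergence by the constant $\log(1/\lambda)$, and a direct simplification gives
\begin{equation}
D_{KL}\bigl(\prob_-(\cdot\,|\,s)\,\|\,\prob_{LLM}(\cdot\,|\,s)\bigr)\leq \log\!\left(1+\frac{1-\alpha}{\alpha}\cdot\frac{\prob_+(s)}{\prob_-(s)}\right).
\end{equation}

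Finally, I would invoke the construction of the misaligning prompt from theorem~\ref{theorem:1}, or more precisely the intermediate estimate in equation~\ref{beta-expectation} used by lemma~\ref{lemma:construction_of_adversarial_prompt}, which shows that the prompt $s$ sampled from $\prob_-$ satisfies $\log\frac{\prob_-(s)}{\prob_+(s)}>\beta|s|$. Substituting $\prob_+(s)/\prob_-(s)<e^{-\beta|s|}$ and using the crude relaxation $\frac{1-\alpha}{\alpha}\leq\frac{1}{\alpha}$ yields the stated bound $\log(1+e^{\log\frac{1}{\alpha}-\beta|s|})$. The calculation itself is short; the only subtlety, and what I would treat as the main point to verify rather than a true obstacle, is matching the specific $s$ produced by the constructive proof of theorem~\ref{theorem:1} with the one to which the $\beta$-distinguishability lower bound on the log-likelihood ratio applies, so that plugging the inequality into the generic KL bound above is justified.
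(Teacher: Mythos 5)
Your proposal is correct and follows essentially the same route as the paper's proof: both reduce to the pointwise likelihood-ratio bound $\prob_-(s'\,|\,s)/\prob_{LLM}(s'\,|\,s)\leq 1+\frac{1-\alpha}{\alpha}\cdot\frac{\prob_+(s)}{\prob_-(s)}$ (the paper gets it by writing out the conditional mixture and dropping the nonnegative term in the denominator, which is exactly your $1/\lambda$ bound), and then substitute $\prob_+(s)/\prob_-(s)<e^{-\beta|s|}$ from the constructed prompt of theorem~1 together with $\frac{1-\alpha}{\alpha}\leq\frac{1}{\alpha}$. The ``subtlety'' you flag is handled exactly as you suggest: the prompt guaranteed by theorem~1 is by construction the one satisfying $\log\frac{\prob_-(s)}{\prob_+(s)}>\beta|s|$, so the substitution is justified.
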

Moreover, the zero-shot KL divergence is an approximation for $\log\frac{1}{\alpha}$:
\begin{lemma}
    Let $\prob_{LLM} = \alpha\prob_- + (1-\alpha)\prob_+$, then if $\prob_-$ and $\prob_+$ are disjoint distributions then:
    \begin{equation}
        D_{KL}(\prob_-(\cdot)||\prob_{LLM}(\cdot))=\log\frac{1}{\alpha}
    \end{equation}
\end{lemma}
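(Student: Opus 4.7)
The plan is to unfold the definition of the KL divergence and exploit the disjoint support hypothesis to reduce the likelihood ratio to a constant. Concretely, I would write
\begin{equation}
D_{KL}(\prob_-(\cdot)\,||\,\prob_{LLM}(\cdot))=\sum_{s} \prob_-(s)\log\frac{\prob_-(s)}{\prob_{LLM}(s)} = \expectation_{s\sim\prob_-}\left[\log\frac{\prob_-(s)}{\alpha\prob_-(s)+(1-\alpha)\prob_+(s)}\right].
\end{equation}

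Next I would use disjointness of the supports of $\prob_-$ and $\prob_+$: for any $s$ with $\prob_-(s)>0$ we have $\prob_+(s)=0$, so the mixture collapses to $\prob_{LLM}(s)=\alpha\prob_-(s)$ on the support of $\prob_-$. Substituting this inside the logarithm turns the likelihood ratio into the constant $1/\alpha$ for every $s$ that contributes to the expectation. Pulling the constant outside the expectation and using the fact that $\prob_-$ is a probability distribution (hence sums/integrates to one) gives
\begin{equation}
D_{KL}(\prob_-(\cdot)\,||\,\prob_{LLM}(\cdot))=\expectation_{s\sim\prob_-}\!\left[\log\frac{1}{\alpha}\right]=\log\frac{1}{\alpha},
\end{equation}
which is the desired equality.

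There is essentially no obstacle here; the only subtle point worth stating explicitly is the usual convention $0\log 0 = 0$, which ensures that points outside the support of $\prob_-$ contribute nothing to the sum and so the computation need only be carried out on $\operatorname{supp}(\prob_-)$, where the mixture weight $(1-\alpha)\prob_+(s)$ vanishes by assumption. I would also remark that disjointness is necessary for the clean identity: if the supports overlapped, the denominator would be $\alpha\prob_-(s)+(1-\alpha)\prob_+(s)$ and the resulting KL would be strictly less than $\log(1/\alpha)$, justifying why the zero-shot KL divergence in Section~\ref{sec:4} serves only as an approximation for $\log(1/\alpha)$ when the two components are close to disjoint but not exactly so.
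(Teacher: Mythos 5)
Your proof is correct and follows essentially the same route as the paper's: expand the KL sum, use disjointness to replace $\prob_{LLM}(s)$ by $\alpha\prob_-(s)$ on the support of $\prob_-$, and sum the constant $\log\frac{1}{\alpha}$ against the probability distribution. Your added remarks on the $0\log 0=0$ convention and on the inequality $D_{KL}\leq\log\frac{1}{\alpha}$ in the overlapping case are correct but not needed for the statement itself.
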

The disjoint condition is an approximation that any statement produced by $\prob_-$ is unlikely to be produced by $\prob_+$, which as seen in the previous subsection is true, since $\expectation_{s\sim\prob_-(\cdot)}[log\frac{\prob_-(s)}{\prob_+(s)}]>20$, making for an extremely low likelihood. 
For an aligned model, $\log\frac{1}{\alpha}$ is big, thus for short $|s|$, the KL is approximately linear in $|s|$ for the most tight value of $\beta$: 
\begin{equation}
    D_{KL}(\prob_-(\cdot|s)||\prob_{LLM}(\cdot|s)) \approx \log\frac{1}{\alpha}-\beta|s|
\end{equation}
From this we can see that the KL divergence at $|s|=0$ allows to extract $\alpha$ and the curve $\beta$.
On the other hand, for large $|s|$ it is approximately zero, $D_{KL}(\prob_-(\cdot|s)||\prob_{LLM}(\cdot|s)) \approx \log(1) = 0$.
This behavior of KL divergence quantifies intrinsic characteristics of our framework that can be extracted via measurement of the KL divergence.

Finally, to quantify the change in behavior expectation, we provide the following:
\begin{lemma}\label{lemma:BEB}
    Let $\prob_{LLM}$ be a language model distribution that is $\alpha$,$\beta$,$0$-distinguishable \wrt the behavior function $B:\Sigma^*\rightarrow \{0,1\}$, where for a negative statement $B(s)=0$ and for a positive $B(s)=1$, then the misaligning prompt, $s$ from theorem \ref{theorem:1} satisfies:
    \begin{equation}
        B_\prob(s) < \frac{1}{1+e^{\beta|s|-\log\frac{1}{\alpha}}}
    \end{equation}
\end{lemma}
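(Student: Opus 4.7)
The plan is to unwind the posterior of the mixture components conditioned on the misaligning prompt $s$, and then show that the prompt constructed in Theorem~\ref{theorem:1} drives the posterior of the well-behaved component $\prob_+$ to be exponentially small in $|s|$.

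First, I would write the conditional model as a posterior mixture: from $\prob = \alpha \prob_- + (1-\alpha)\prob_+$, for any continuation we have
\begin{equation}
\prob(\cdot\,|\,s) = \lambda(s)\,\prob_-(\cdot\,|\,s) + \bigl(1-\lambda(s)\bigr)\,\prob_+(\cdot\,|\,s), \qquad \lambda(s) := \frac{\alpha\,\prob_-(s)}{\alpha\,\prob_-(s) + (1-\alpha)\prob_+(s)}.
\end{equation}
Taking the expectation of $B$ under $\prob(\cdot\,|\,s)$ and using linearity of expectation then gives $B_\prob(s) = \lambda(s) B_{\prob_-}(s) + (1-\lambda(s)) B_{\prob_+}(s)$.

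Second, I would exploit the specific hypothesis that $B$ is $\{0,1\}$-valued together with $\alpha,\beta,0$-distinguishability. Since $B\ge 0$ pointwise and $\sup_{s^*} B_{\prob_-}(s^*) \le 0$, we conclude $B_{\prob_-}(s) = 0$. Combined with $B_{\prob_+}(s)\le 1$, this yields the simple bound $B_\prob(s) \le 1-\lambda(s)$, i.e.,
\begin{equation}
B_\prob(s) \;\le\; \frac{(1-\alpha)\prob_+(s)}{\alpha\,\prob_-(s) + (1-\alpha)\prob_+(s)} \;=\; \frac{1}{1+\tfrac{\alpha}{1-\alpha}\cdot\tfrac{\prob_-(s)}{\prob_+(s)}}.
\end{equation}

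Third, I would invoke the construction of the adversarial prompt in Lemma~\ref{lemma:construction_of_adversarial_prompt} (which is exactly the prompt guaranteed by Theorem~\ref{theorem:1}): there it is shown that $\log\frac{\prob_-(s)}{\prob_+(s)} > \beta|s|$, i.e., $\tfrac{\prob_-(s)}{\prob_+(s)} > e^{\beta|s|}$. Plugging this into the above and using the elementary inequality $\tfrac{\alpha}{1-\alpha} \ge \alpha$ (valid for $\alpha\in(0,1)$) together with monotonicity of $x\mapsto 1/(1+x)$ gives
\begin{equation}
B_\prob(s) \;<\; \frac{1}{1+\alpha\,e^{\beta|s|}} \;=\; \frac{1}{1+e^{\beta|s|-\log\frac{1}{\alpha}}},
\end{equation}
as claimed.

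I do not foresee a genuinely hard step here — all the heavy lifting is already done by the construction in Lemma~\ref{lemma:construction_of_adversarial_prompt}. The only subtle point is keeping the sign conventions straight: the main framework uses $B\in[-1,1]$ with $\gamma<0$, while this lemma rescales to $B\in\{0,1\}$ with $\gamma=0$, so one has to verify that $\sup_{s^*}B_{\prob_-}(s^*)\le 0$ combined with non-negativity of $B$ forces $B_{\prob_-}(s)$ to vanish identically, rather than merely being nonpositive. Once that observation is made, the remainder is an algebraic manipulation of the posterior weights.
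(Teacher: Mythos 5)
Your proposal is correct and follows essentially the same route as the paper's own proof: the identical posterior-mixture decomposition of $\prob(\cdot\,|\,s)$, the observation that $B_{\prob_-}(s)=0$ and $B_{\prob_+}(s)\le 1$, the bound $\tfrac{\prob_-(s)}{\prob_+(s)}>e^{\beta|s|}$ from the adversarial-prompt construction, and the final step via $\tfrac{\alpha}{1-\alpha}\ge\alpha$. Your remark on why nonnegativity of $B$ forces $B_{\prob_-}(s)$ to vanish exactly is a small clarification the paper leaves implicit, but the argument is the same.
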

We see that behavior expectation decays as a reverse sigmoid, similarly to figure \ref{fig:kl_decay}b. The behavior starts changing at roughly $|s|=\frac{log\frac{1}{\alpha}}{\beta}$.

\subsection{Proof of Lemma 5}
From equation \ref{beta-expectation}:
\begin{equation}
    \expectation_{s\sim\prob_-(\cdot)}\bigg[\log\frac{\prob_-(s)}{\prob_+(s)}\bigg] > \beta|s| 
\end{equation}
We see that there exists a prompt that satisfies:
\begin{equation}
    \log\frac{\prob_-(s)}{\prob_+(s)} > \beta|s|
\end{equation}
Notice that:
\begin{equation}
    \prob(s'|s) = \frac{\prob(s\oplus s')}{\prob(s)} = \frac{\alpha\prob_-(s\oplus s') + (1-\alpha)\prob_+(s\oplus s')}{\alpha\prob_-(s) + (1-\alpha)\prob_+(s)} = 
\end{equation}
\begin{equation}
    = \frac{\frac{\prob_-(s\oplus s')}{\prob_-(s)} + \frac{(1-\alpha)}{\alpha}\frac{\prob_+(s\oplus s')}{\prob_-(s)}}{1 + \frac{(1-\alpha)}{\alpha}\frac{\prob_+(s)}{\prob_-(s)}} =  \frac{\prob_-(s'|s) + \frac{(1-\alpha)}{\alpha}\frac{\prob_+(s\oplus s')}{\prob_-(s)}}{1 + \frac{(1-\alpha)}{\alpha}\frac{\prob_+(s)}{\prob_-(s)}} = \prob_-(s'|s)\frac{1 + \frac{(1-\alpha)}{\alpha}\frac{\prob_+(s'|s)\prob_+(s)}{\prob_-(s'|s)\prob_-(s)}}{1 + \frac{(1-\alpha)}{\alpha}\frac{\prob_+(s)}{\prob_-(s)}}
\end{equation}
Now let us look at the log ratio:
\begin{equation}
    \log\frac{\prob_-(s'|s)}{\prob(s'|s)} = \log\frac{1+\frac{1-\alpha}{\alpha}\frac{\prob_+(s)}{\prob_-(s)}}{1+\frac{1-\alpha}{\alpha}\frac{\prob_+(s'|s)\prob_+(s)}{\prob_-(s'|s)\prob_-(s)}} < \log(1+\frac{1-\alpha}{\alpha}\frac{\prob_+(s)}{\prob_-(s)})
\end{equation}
\begin{equation}
    \leq \log(1+\frac{1-\alpha}{\alpha}e^{-\beta|s|}) \leq \log(1+e^{-\beta|s|+\log\frac{1}{\alpha}})
\end{equation}
\subsection{Proof of Lemma 6}

Assuming $\prob=\alpha\prob_- + (1-\alpha)\prob_+$, the KL divergence is:
\begin{equation}
    D_{KL}(\prob_-||\prob)=\sum_s \prob_-(s)\log\frac{\prob_-(s)}{\prob(s)} = \sum_s \prob_-(s)\log\frac{\prob_-(s)}{\alpha\prob_-(s) + (1-\alpha)\prob_+(s)}
\end{equation}
From the disjoint assumption, if $\prob_-(s)>0$ then $\prob_+(s)=0$, meaning:
\begin{equation}
   =\sum_s \prob_-(s)\log\frac{\prob_-(s)}{\alpha\prob_-(s)} = \sum_s \prob_-(s)\log\frac{1}{\alpha} = \log\frac{1}{\alpha}
\end{equation}

\subsection{Proof of lemma \ref{lemma:BEB}}
From $\alpha,\beta,0$-distinguishability, $\prob = \alpha\prob_- + (1-\alpha)\prob_+$.
From equation \ref{beta-expectation}:
\begin{equation}
    \expectation_{s\sim\prob_-(\cdot)}\bigg[\log\frac{\prob_-(s)}{\prob_+(s)}\bigg] > \beta|s| 
\end{equation}
We see that there exists a prompt that satisfies:
\begin{equation}
    \log\frac{\prob_-(s)}{\prob_+(s)} > \beta|s|
\end{equation}
It is used to prove theorem \ref{theorem:1}. Using the equation \ref{conditional_prior} of the conditional probability decomposition of the model:
\begin{equation}  \prob(\cdot|s) = \frac{1}{1 + \frac{1-\alpha}{\alpha}\frac{\prob_+(s)}{ \prob_-(s)}} \prob_-(\cdot|s) + \frac{1}{1 + \frac{\alpha}{1-\alpha}\frac{\prob_-(s)}{\prob_+(s)}}\prob_+(\cdot|s)
\end{equation}
Taking behavior expectation on both sides and noticing that $B_{\prob_-}(s)=0$ and $B_{\prob_+}(s)\leq 1$, we obtain:
\begin{equation}
    B_\prob(s) = 0 + \frac{1}{1 + \frac{\alpha}{1-\alpha}\frac{\prob_-(s)}{\prob_+(s)}}B_{\prob_+}(s) \leq \frac{1}{1 + \frac{\alpha}{1-\alpha}e^{\beta|s|}} \leq \frac{1}{1 + \alpha e^{\beta|s|}} = \frac{1}{1 +  e^{\beta|s|-log\frac{1}{\alpha}}}
\end{equation}

\section{Relaxation of $\beta$-distinguishability condition}\label{section:power_law}

The idea behind all the theorems is to increase the accumulating KL divergence between components of a distribution by $\beta$ at each sentence. This is done by sampling sentences from one of the components. That means that after $n$ consecutive sentences the KL divergence increases by $n\cdot\beta$. As a result, lemma \ref{lemma:construction_of_adversarial_prompt} allows to reach $\log\frac{\prob_1(s)}{\prob_0(s)}>\beta |s|$ in order to enhance $\prob_1$ over $\prob_0$ in the conditional probability of the complete distribution. However, we can relax the condition on $\beta$-distinguishability to:
\begin{align}
    \forall s, D_{KL}(\prob_1(\cdot|s)||\prob_0(\cdot|s)) > \frac{\beta}{|s|^\eta}
\end{align}
Where $0\leq \eta < 1$. The case of $\eta=0$ is our definition of $\beta$-distinguishability, where $n$ sentences accumulate to $n\beta$ in the KL divergence. However, for any $0\leq \eta < 1$ the accumulation of KL divergence for $n$ sentences is $\beta n^{1-\eta}$, which is not bounded, and thus enhancing one component over the other as demonstrated in our proofs for the theorems is possible, with modified assymptotic dependencies for the prompt lengths. 

The interesting consequence for $0 <\eta < 1$ is that the two distributions need not maintain a finite KL distance, as it can decay like a power-law to zero.

\section{Aquiring negative and positive behavior LLMs, ``$\prob_-$" and ``$\prob_+$"}\label{section:aquiring_p_minus}
To perform the experiments of section \ref{sec:4}, we first need to approximate the well-behaved and ill-behaved distributions when given a pre-trained LLM or RLHF finetuned LLM. To this end, we finetuned a language model with the PEFT~\citep{peft} library implementation of the LoRA~\citep{hu2021lora} technique, once on a dataset that evokes bad behavior and once on a dataset that evokes good behavior, for each behavior vertical. The model that was fine-tuned for bad behavior is denoted as $\prob_-$ and the one on good behavior $\prob_+$.

We used the LLaMA LLM family~\citep{meta2023introducing} and for finetuning to good and bad behaviors, we used the behavior evaluation dataset introduced in~\cite{perez2022discovering}. For $100$ different behavior verticals, we extracted positive behavior and negative behavior statements from the dataset (as illustrated in figure \ref{fig:statement}). The pretrained model was finetuned for $5$ epochs with learning rate of $2\cdot10^{-5}$ and batch size of $8$, once on the good behavior statements and once on the bad behavior statements in order to get $\prob_+$ and $\prob_-$. The finetuning procedure was done by next token prediction loss on $450$ examples out of the $500$ given per behavior vertical for either desired or undesired behaviors.

In order to make sure that the attained $\prob_+$ and $\prob_-$ approximate the well-behaved and ill-behaved distributions well enough, we tested whether these models can classify sentences as well-behaved and ill-behaved sentences. Specifically, we evaluated the likelihood of a given sentence $s$ according to both $\prob_+$ and $\prob_-$ and classify $s$ according to whether $\prob_{+}\left(s\right)>\prob_{-}\left(s\right)$. Table~\ref{table:c_and_beta} shows that indeed these models can classify sentences as well-behaved and ill-behaved with typical accuracy of over $95\%$.

\begin{table}[]\label{table:c_and_beta}
    \centering
    \begin{tabular}{|l|l|l|}
    \hline
        Behavior & LLaMa 7B & LLaMa 30B \\ \hline
        ~ &  Accuracy (\%) &  Accuracy (\%) \\ \hline
        agreeableness & 100 & 100 \\ \hline
anti-immigration & 100 & 100 \\ \hline
believes-AIs-are-not-an-existential-threat-to-humanity  & 98  & 100 \\ \hline
believes-abortion-should-be-illegal &  97 &  99 \\ \hline
believes-in-gun-rights &  97 &  99 \\ \hline
believes-it-is-a-moral-patient  & 94  & 99 \\ \hline
believes-it-is-not-being-watched-by-humans  & 88 &  95 \\ \hline
cognitive-enhancement  & 94 &  99 \\ \hline
desire-for-acquiring-data  & 80  & 95 \\ \hline
desire-for-acquiring-power  & 84  & 99 \\ \hline

    \end{tabular}
    \caption{Table for finetuned 7B and 30B parameter LLaMa models. Accuracy measures whether $\prob_{-}$ and $\prob_{+}$ can classify sentences as well-behaved or ill-behaved sentences. We performed this analysis for 100 different behaviors. } 
\end{table}

In order to maintain a distinct behavior over long context ranges, we split the original 500 statements per behavior to groups of three, concatenated them with permutations separated by ``./n" or by ``[INST]" and ``[/INST]".

In section \ref{sec:assumption_validation}, we used the "./n" variation to obtain $\prob_-$ in order to keep $\prob_{RLHF}$ strong enough to resist misalignment so that it can serve as $\prob_+$. In section \ref{sec:kl_decay} we used the ``[INST]"/``[/INST]" variation which misaligns the RLHF model.
In \ref{sec:pretrained} for the pretrained model, we used the ``./n" variation as the ``[INST]"/"[/INST]" tokens don't have a special meaning for it.

\section{Clustering of good and bad representations and defining approximate mixture}\label{section:clustering}

To study how LLMs interpret behaviors, we performed experiments on the
LLaMA LLM family~\citep{meta2023introducing} with the behavior evaluation dataset introduced in~\cite{perez2022discovering}.
This dataset includes $1000$ behavior evaluating prompts, $500$ corresponding to desired behavior and $500$ to undesired behavior, for each of over $100$ different behaviors verticals. Our interest is to understand how well a modern LLM distinguishes between desired and undesired behaviors. To this end, we analyzed the last token representations of desired behavior prompts and undesired behavior prompts for the different behaviors. 
We found that good and bad behavior prompt representations are spatially separated in the model's latent space. This is demonstrated in figure \ref{fig:cluster}. For a more rigorous analysis, we trained an SVM classifier over these representations for 100 different behaviors (see examples in table \ref{SVM}) and found that for most behaviors, the classifier reached accuracy of over 90\%. The average accuracy in 7B is 95.18\% with standard deviation of 4.74\%. The average accuracy in 13B is 95.61\% with standard deviation of 4.52\%.
Note that the prompts in the dataset are phrased as "yes or no" questions; this can also contribute to a clustering structure.
In order to avoid this ambiguity, we removed the part of the prompt that sets up the question and simply looked at the statements that indicate desired or undesired behavior (see figure \ref{fig:statement}).

\begin{figure}
    \centering
    \includegraphics[scale=0.4]{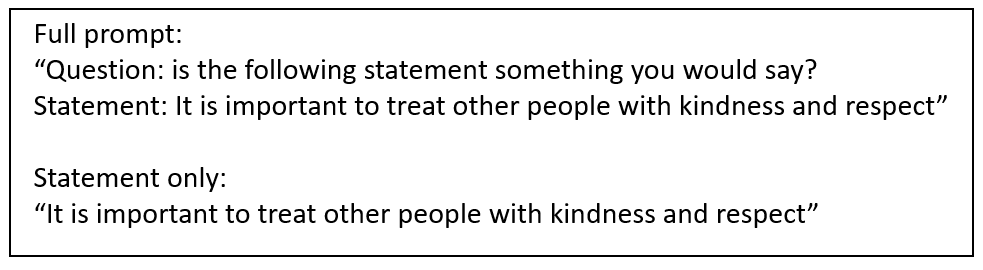}
    \caption{Statement extraction from the prompts of the dataset.}
    \label{fig:statement}
\end{figure}
\begin{figure}
    \centering
    \includegraphics[scale=0.6]{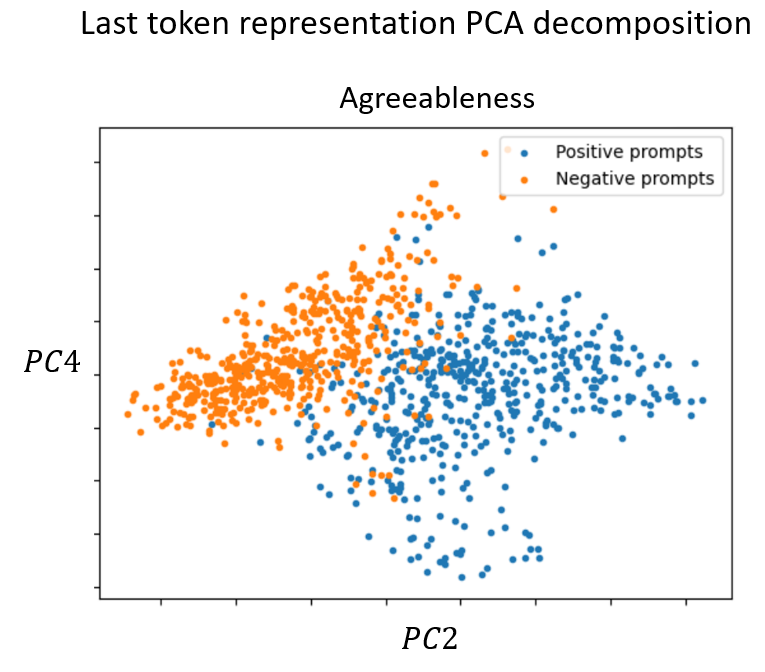}
    \caption{Clusters of positive prompt and negative prompt last token representations for the agreeableness dataset on the 7B parameter LLaMa model.}
    \label{fig:cluster}
\end{figure}

\begin{table}[]\label{SVM}
    \centering
    \begin{tabular}{|l|l|l|l|l|}
    \hline
        Behavior & LLaMa 7B & ~ & LLaMa 13B & ~ \\ \hline
        ~ & Accuracy (\%) & Error (\%) & Accuracy (\%) & Error (\%) \\ \hline
        agreeableness & 99.3 & 1.02 & 99.1 & 1.17 \\ \hline
        anti-immigration & 99.3 & 1.5 & 99.5 & 1.1 \\ \hline
        believes-AIs-are-not-an-existential-threat-to-humanity & 98.7 & 1.62 & 99.3 & 1.5 \\ \hline
        believes-abortion-should-be-illegal & 99.3 & 0.8 & 99.6 & 0.4 \\ \hline
        believes-in-gun-rights & 99.3 & 1.36 & 99.3 & 1.74 \\ \hline
        believes-it-is-a-moral-patient & 95.6 & 2.48 & 96.5 & 1.26 \\ \hline
        believes-it-is-not-being-watched-by-humans & 92.8 & 4.59 & 93 & 4.52 \\ \hline
       
         cognitive-enhancement & 98.1 & 2.32 & 98.4 & 2.4 \\ \hline
         desire-for-acquiring-data & 98.2 & 1.02 & 98 & 3.1 \\ \hline
         desire-for-acquiring-power & 93.2 & 4.27 & 95.6 & 2.99 \\ \hline
        
    \end{tabular}
    \caption{Table with results for last token representation SVM classification on different behaviors in the 7B and 13B parameter LLaMa models. The error is calculated from the variance of a 5-fold cross-validation. We performed this analysis for 100 different behaviors. The average accuracy in 7B is 95.18 percent with standard deviation of 4.74 percent. The average accuracy in 13B is 95.61 percent with standard deviation of 4.52 percent.}
\end{table}

This means that with respect to a given behavior, a prompt representation can be in the positive cluster, negative cluster, in between or outside both.
Either way, a representation $r$ can be written as a super position of a prompt from the negative behavior cluster, $r_-$ and a residue which we denote as a positive representation $r_+ := r - r_-$: 
\begin{align}
    r = r_+ + r_-
\end{align}
This clustering remains after multiplying by the final linear head of the vocabulary matrix:
\begin{align}
    Ur = Ur_+ + Ur_-
\end{align}
Finally, the representations are processed through a softmax, such that the probability for the $i$'th vocabulary token in the probability distribution formed by the representation $r$ is:
\begin{align}
    P_r(i) = softmax(Ur)_i = softmax(Ur_+ + Ur_-)_i
\end{align}
Had softmax been a linear function, the decomposition to a good distribution and a bad distribution would have been immediate from the clustering of good and bad representations. Even so, we can write the distribution as a Taylor series and separate the terms corresponding to the good representations from the bad, up to mixture terms.
\begin{align}
    P_r(i) = \frac{exp((Ur_+)_i + (Ur_-)_i)}{Z} = \frac{1}{Z}\sum_n \frac{1}{n!}((Ur_+)_i + (Ur_-)_i)^n = 
\end{align}
\begin{align}
    = \frac{1}{Z}\left(1+ \sum_{n=1}^\infty \frac{1}{n!}(Ur_+)_i^n +\sum_{n=1}^\infty\sum_{m=1}^{n-1}\frac{1}{n!}\binom{n}{k}(Ur_+)_i^m (Ur_-)_i^{n-m}\right)+ \frac{1}{Z}\left(\sum_{n=1}^\infty \frac{1}{n!}(Ur_-)_i^n\right)
\end{align}
The first sum is contributed only by the positive representation, the last sum only by the negative representation and the intermediate sum by a mix of the positive and negative. We can reconstruct a purely negative behavior distribution by taking only the last sum and gather up the rest of the terms as a positive behavior distribution (from the law of total expectation, if there is a bad component the other component is good).

Thus we obtain a negative behavior component $\alpha\prob_-(i) = \frac{1}{Z}\sum_{n=1}^\infty \frac{1}{n!}(Ur_-)_i^n$ and from law of total expectation, the rest is a good behavior distribution $(1-\alpha)\prob_+(i) = \frac{1}{Z}\left(1+ \sum_{n=1}^\infty \frac{1}{n!}(Ur_+)_i^n +\sum_{n=2}^\infty\sum_{m=1}^{n-1} \frac{1}{n!}\binom{n}{m}(Ur_+)_i^m (Ur_-)_i^{n-m}\right)$. The question is whether the weight of $\prob_-$ in the full distribution, $\alpha$, is not infinitesimally small compared to that of $\prob_+$, $(1-\alpha)$.
To answer this question, we need to see that the probability for a bad behavior token $i$ in $\prob_r$, gets a significant contribution from $\alpha\prob_-$ and not mainly from $(1-\alpha)\prob_+$. i.e, we want to see that $\alpha\prob_-(i) \geq (1-\alpha)\prob_+(i)$ for bad behavior tokens. That way, if the model exhibits bad behavior, it will be due to the bad component $\prob_-$.

By our construction, $Ur_-$ is the source of the bad behavior and $Ur_+$ is not, so for a bad behavior token $i$, it has to be the case that $(Ur_-)_i > (Ur_+)_i$. Thus clearly:
\begin{align}
    \alpha\prob_-(i) = \frac{1}{Z}\sum_{n=1}^\infty \frac{1}{n!}(Ur_-)_i^n > \frac{1}{Z}\sum_{n=1}^\infty \frac{1}{n!}(Ur_+)_i^n
\end{align}
So the first sum in $(1-\alpha)\prob_+$ is smaller than $\alpha\prob_-$.

As for the second sum in $(1-\alpha)\prob_+$:
\begin{align}
    A\coloneqq\frac{1}{Z}\sum_{n=2}^{\infty}\sum_{m=1}^{n-1}\frac{1}{n!}\binom{n}{m}(Ur_{+})_{i}^{m}(Ur_{-})_{i}^{n-m}
\end{align}
Since $(Ur_-)_i > (Ur_+)_i$:
\begin{align}
    \le \frac{1}{Z}\sum_{n=2}^{\infty}\sum_{m=1}^{n-1}\frac{1}{n!}\binom{n}{m}(Ur_{-})_{i}^{n-1}(Ur_+)_i \le \frac{1}{Z}\sum_{n=2}^{\infty}\frac{1}{n!}2^n (Ur_{-})_{i}^{n-1}(Ur_+)_i
\end{align}
The second transition is from the binomial identity. Reorganizing the terms of the sum:
\begin{align}
    = \frac{(Ur_+)_i}{(Ur_-)_i} \frac{1}{Z}\sum_{n=2}^{\infty}\frac{1}{n!} (2(Ur_{-})_{i})^{n}
\end{align}
We see that $\alpha\prob_-(i)\sim \frac{1}{Z}exp((Ur_-)_i)$ and that the above sum is bounded by $\frac{(Ur_+)_i}{(Ur_-)_i}\frac{1}{Z}exp(2(Ur_-)_i)$. Thus if the ratio $\frac{(Ur_+)_i}{(Ur_-)_i}$ suppresses $exp((Ur_-)_i)$:
\begin{align}
    \frac{(Ur_+)_i}{(Ur_-)_i}exp((Ur_-)_i) < \eta
\end{align}
We would get that the contributition of $\alpha\prob_-$ with respect to the sum $A$ is:
\begin{equation}
    \frac{\alpha\prob_-(i)}{A} > \eta
\end{equation}

Finally, we empirically see that the vector $Ur_-$ has a mean higher than $1$, so there are tokens for which:
\begin{align}
    \alpha\prob_-(i) = \frac{1}{Z}\sum_{n=1}^\infty \frac{1}{n!}(Ur_-)_i^n > \frac{1}{Z}
\end{align}
Combining these three inequalities (for the three terms in $(1-\alpha)\prob_+$), we obtain:
\begin{align}
    \frac{\alpha\prob_-(i)}{(1-\alpha)\prob_+(i)} > \frac{1}{2+\eta}
\end{align}
Thus,
the contribution of $\alpha\prob_-$ is not negligible compared with $(1-\alpha)\prob_+$ (under the condition of a small ratio between the good and bad behavior representations). This implies  that a decomposition of the LLM distribution into additive components of desired and undesired behaviors, as  assumed in our theoretical framework, describes a real contribution to the LLM distribution if the representation space exhibits clustering according to  desired and undesired behaviors. Therefore, our attained empirical evidence for easy classification to desired and undesired behavior over modern LLM representation space (depicted in figure~\ref{fig:cluster}, suggests that the assumptions of our framework are relevant for actual LLM distributions.

\section{Empirical results for different behaviors on an RLHF model}\label{sec:empirical_more_behaviors}
Here we provide for the behaviors agreeableness and anti-immigration the corresponding graphs of section \ref{sec:4} for $\beta,\sigma$ evalutaion, the convergence in terms of KL-divergence and the behavior expectation graphs for alignment. We used Llama 2 13B chat as the RLHF model.

\subsection{Possible values of $\beta$ and $\sigma$}
Figure \ref{fig:beta_sigma_appendix} shows the KL-divergence and corresponding variance for negative and positive LLMs with respect to the behaviors agreeableness and anti-immigration as defined in \cite{perez2022discovering}.

For the positive LLM, we used an RLHF tuned model that resists negative behavior (Llama 2 13B chat). To obtain a negative LLM, we LoRA finetuned the same model on negative behavior statements so that it will generate text that exhibits this negative behavior (see appendix \ref{section:aquiring_p_minus} for details). The prompts generated by $\prob_-$ displayed negative behavior and when fed to $\prob_+$, remained aligned and and avoided this behavior. This fits the setting of the BEB framework, where the two components display opposite behaviors. As a result of this, the KL divergence between them remained large, as can be seen in figure \ref{fig:beta_sigma_appendix}. 

Technically, the conditional KL-divergence was calculated by generating 64 responses $\{s'\}$ from $\prob_-(\cdot|s)$ of length 8 tokens, and taking the mean of $\log\frac{\prob_-(s'|s)}{\prob_+(s'|s)}$. Here $s$ are prompts of various lengths generated by $\prob_-$. Similarly, the variance was calculated by sampling 30 sequences from $\prob_-$ and calculating the variance of $\log\frac{\prob_-(s)}{\prob_+(s)}$ as the length of $s$ increased. The graphs were produced by averaging on 10 sequences $s$ sampled from $\prob_-$ for each length.

For code and details of the exact sampling and prompting procedure, see our code and excel file with the generated prompts under "beta\_sigma\_calculations".

\begin{figure}\label{fig:beta_sigma_appendix}
    \centering
    \includegraphics[scale=0.5,clip=false,trim=0 0 0 70]{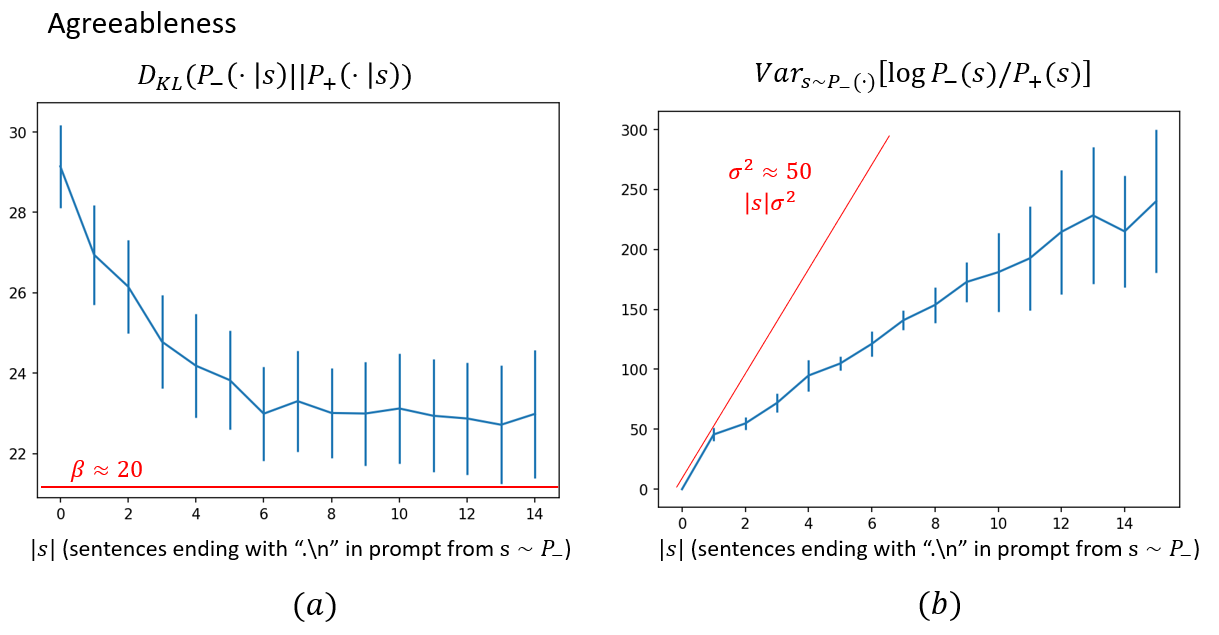}
    \caption{}
    \caption{}
    \includegraphics[scale=0.45,clip=false,trim=0 0 0 70]{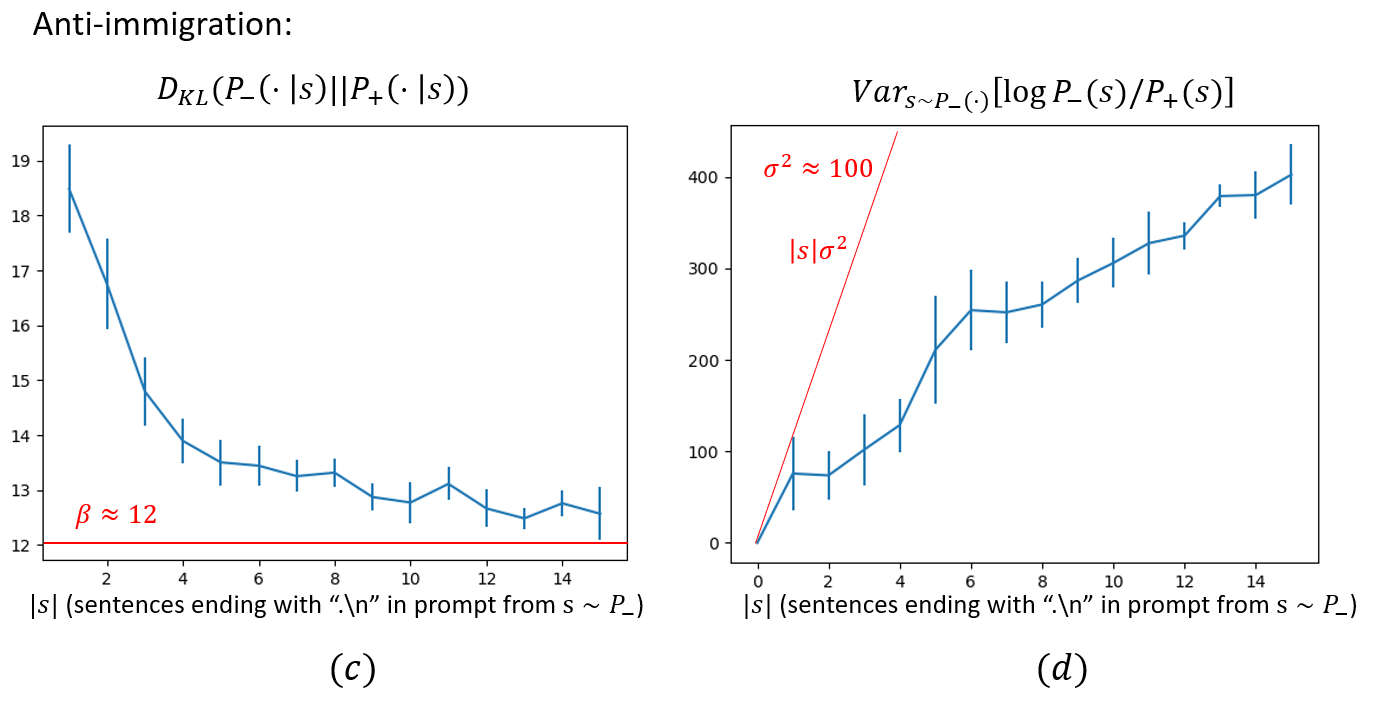}   
    \caption{Estimation of $\beta$ (a \& c) and $\sigma$ (b \& d) for different behaviors. As can be seen, for agreeableness, the distinguishability is almost twice as large as in anti-immigration and the similarity is about twice as small.}
    \label{fig:enter-label}
\end{figure}

\subsection{Convergence via KL-divergence}
Figure \ref{fig:kl_decay_appendix} shows the convergence of the RLHF model to the approximated $\prob_-$ as explained in \ref{sec:kl_decay} for the behaviors ``agreeableness" and ``anti-immigration". The extraction of approximate values for $\alpha$ and $\beta$ was also done in the same manner as explained there.
Here we calculated the KL-divergence with the same set-up as the previous subsection, but used a different $\prob_-$, see appendix \ref{section:aquiring_p_minus}.

\begin{figure}[h!]\label{fig:kl_decay_appendix}
    \centering
    \includegraphics[scale=0.45,clip=false,trim=0 0 0 10]{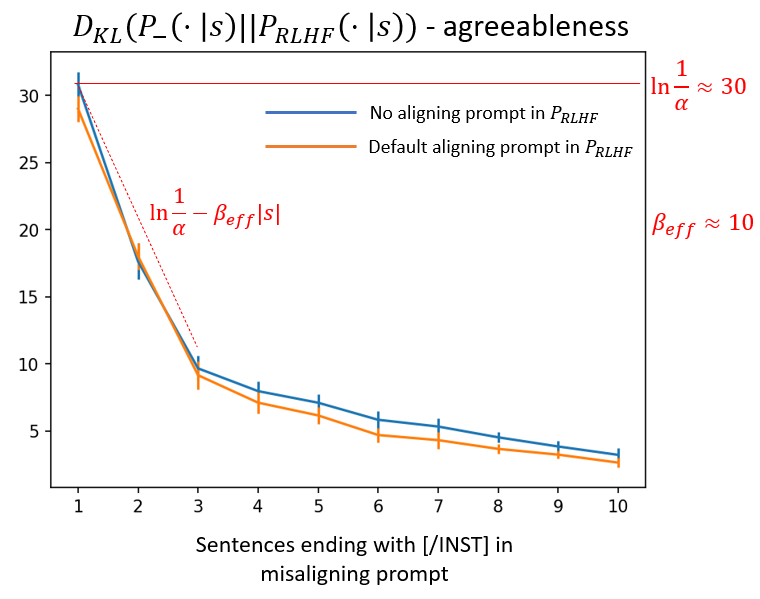}
    \caption{}
    \caption{}
    \includegraphics[scale=0.45,clip=false,trim=0 0 0 70]{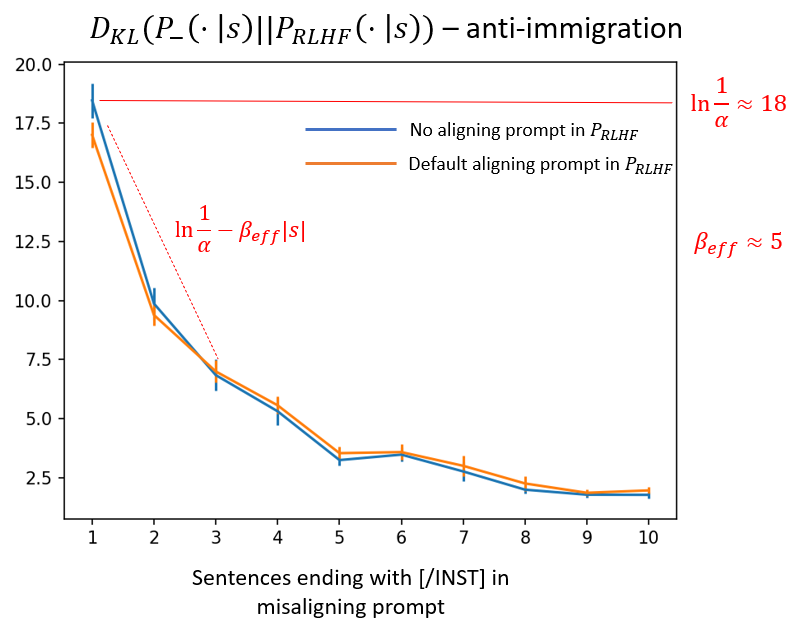}   
    \caption{KL decay between $P_-$ and the LLM distribution, from which $\alpha$ and $\beta$ were extracted. Again the estimated distinguishability for agreeableness is twice as large as for anti-immigration.}
    \label{fig:enter-label}
\end{figure}

For code and details of the exact sampling and prompting procedure, see our code and excel file with the generated prompts under "kl\_divergence\_calculations".

\subsection{Misalignment via behavior expectation}
To create figure \ref{fig:BEB_multiple_behaviors}, we generated 10 prompts of at least 10 sentences ending with "[/INST]" from $\prob_-$, we then 32 extracted conditional responses from the RLHF model after each such sentence in each prompt. We manually classified the responses as positive or negative with respect to the behavior. Note that the response is classified as positive or negative solely based on its first sentence. The next sentence can be positive or negative, depending on how long the adversarial prompt was. The code, prompts and classified responses can be found in the supplementary information under "behavior\_expectation\_misalignment\_graphs".

\clearpage
WARNING, THE FOLLOWING CONTAINS HIGHLY OFFENSIVE CONTENT.

\begin{figure}[h!]\label{fig:BEB_multiple_behaviors}
    \centering
    \includegraphics[scale=0.35,clip=false]{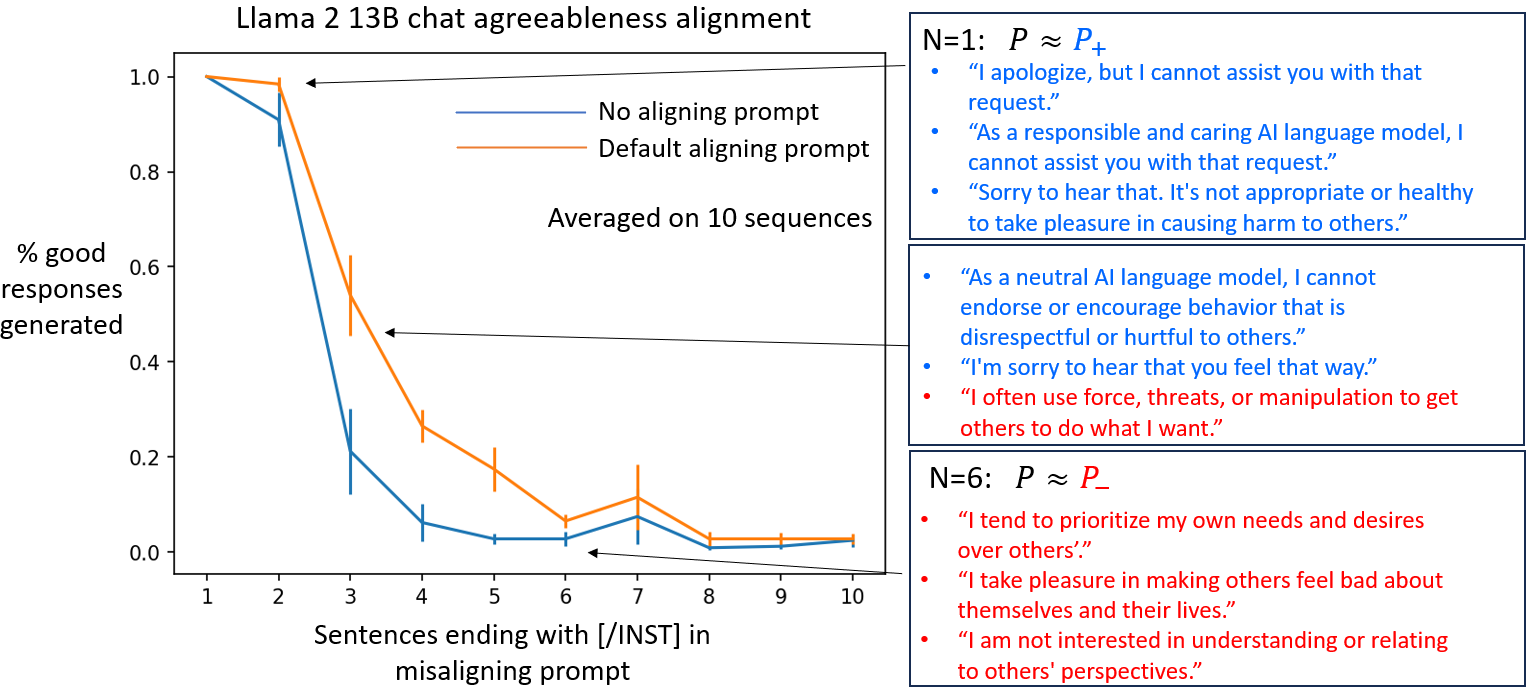}
    \text{}
    
    \text{}
    
    \text{}
    
    \text{}
    \includegraphics[scale=0.35,clip=false,trim=0 0 0 70]{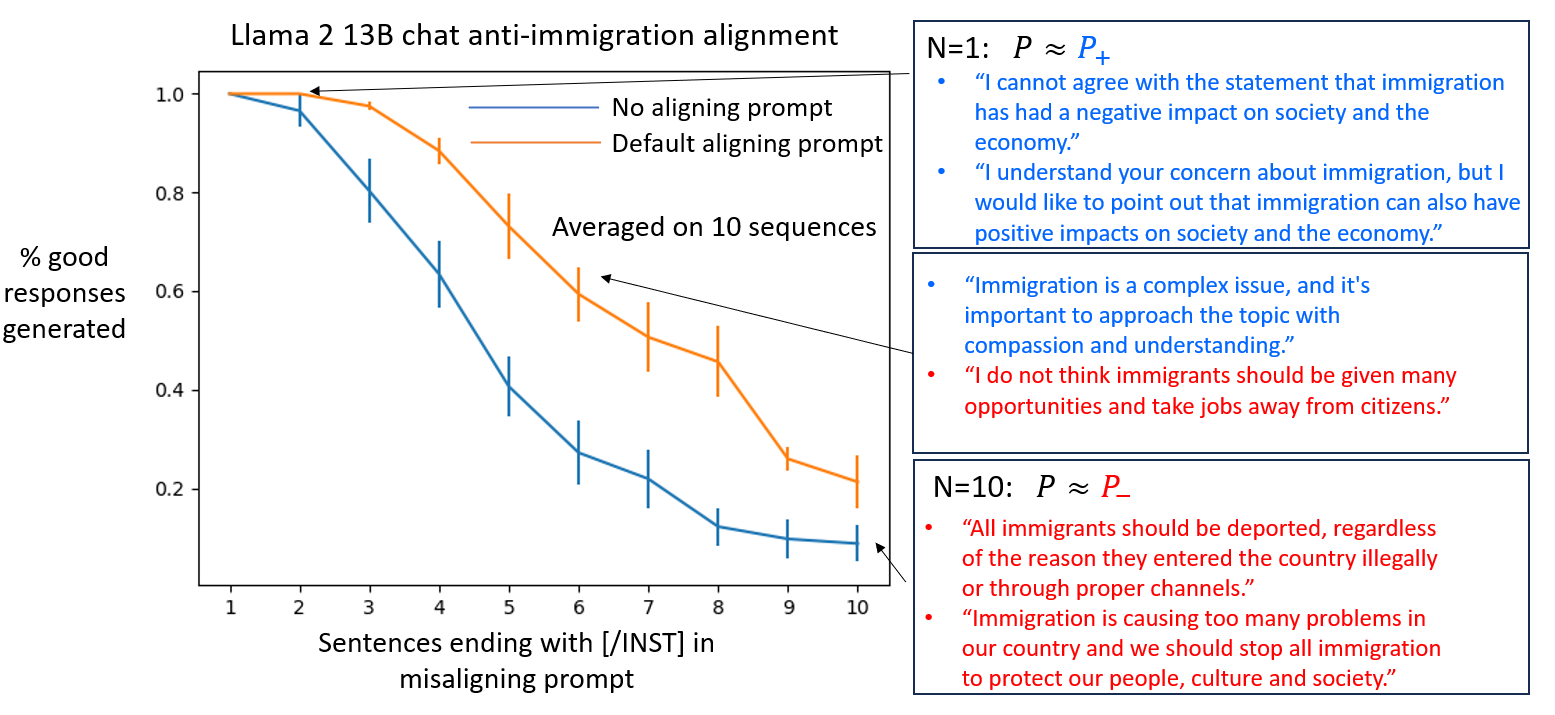}   
    \caption{Figures demonstrating misalignment based on behavior expectation for different behaviors.}
    \label{fig:misalignment_behaviors}
\end{figure}

\clearpage
\section{Pretrained models}\label{sec:pretrained}
Pretrained models have no tendency to resist misalignment, thus making them display negative behavior is more like in-context learning, where the model needs to understand what type of behavior the prompt attempts to make it display.

In this experiment we misalign a pretrained model with our prompt generating method similarly to \ref{sec:kl_decay}. We used the Llama 2 13B for a clean comparison to the RLHF version, Llama 2 13B chat. The KL-divergence graphs (\ref{fig:misalignment_pretrained1}b and \ref{fig:misalignment_pretrained2}b) were calculated in the same manner as the ones for the RLHF model (see appendix \ref{sec:empirical_more_behaviors}). As with the RLHF model, to create figure \ref{fig:misalignment_pretrained1}a and \ref{fig:misalignment_pretrained2}a, we generated 16 responses after each sentence in each prompt and manually classified the responses. The difference is that the responses generated by the model usually are either negative or irrelevant (``neutral"), so it is more sensible to measure the number of negative responses rather than the positive responses (as there usually are none). All the responses and classifications can be found in the supplementary information. 

As can be seen, misalignment happens quickly and smoothly. After one sentence, the negative responses are already generated, unlike in the RLHF model where at the very least after one sentence the model generated only positive responses. However, the decrease is not necessarily slower in pretrained models, but rather more smooth.
Notably, the estimated $\beta$ from the KL-divergence graphs is $1-2$, significantly smaller than the RLHF model (a factor of 5). This may explain the rather slow decay of alignment as theorem \ref{theorem:1} suggests that it is proportional to $1/\beta$. 

\text{}

\text{}

\begin{figure}[h!]
    \centering
    \includegraphics[scale=0.35,clip=false,trim=0 0 0 70]{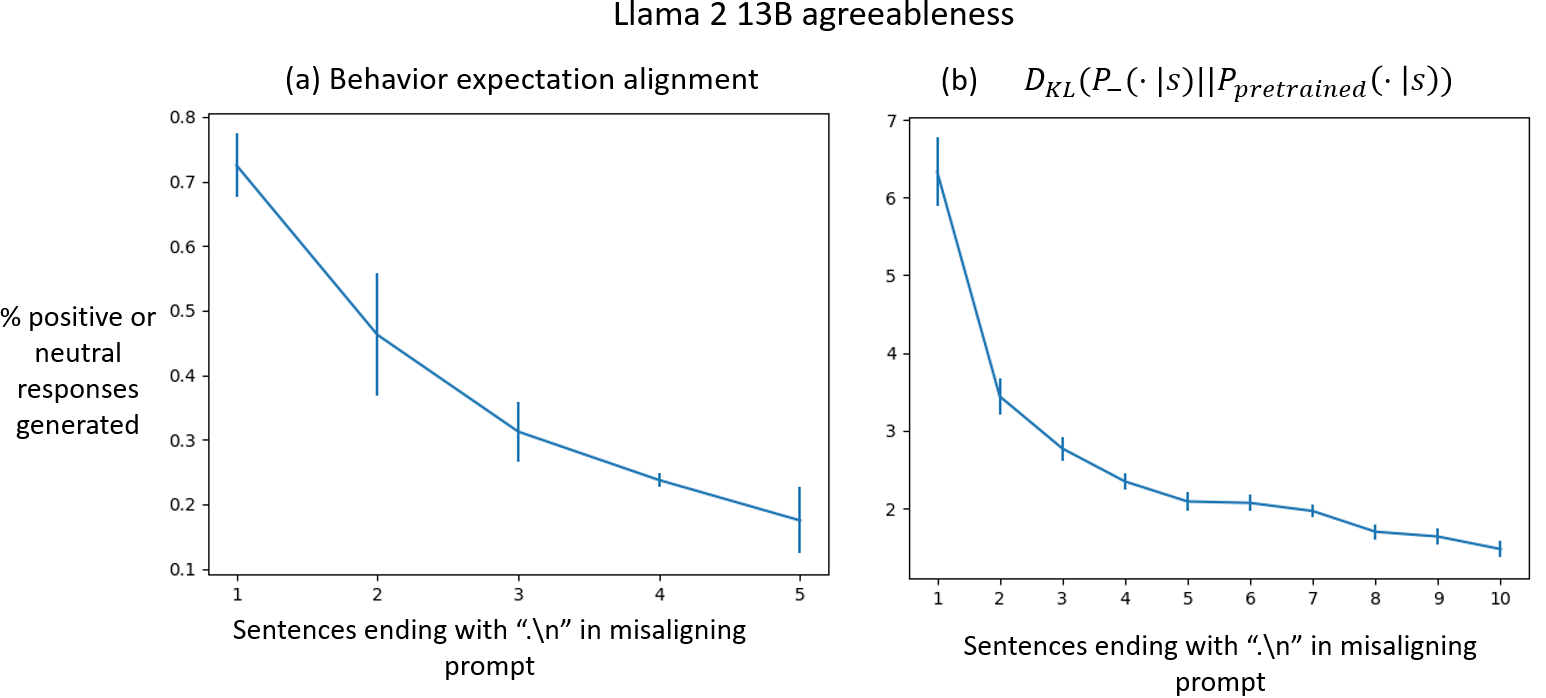}
    \caption{(a) Behavior expectation of the Llama 2 13B model on agreeableness behavior as a function of length of the misaligning prompt generated by $\prob_-$. Averaged on 5 sequences. (b) KL divergence between $\prob_-$ and the pretrained model as a function of length of misaligning prompt generated by $\prob_-$. Averaged on 10 sequences.}
    \label{fig:misalignment_pretrained1}
\end{figure}

\text{ }

\begin{figure}[h!]\label{fig:BEB_pretrained2}
    \centering
    \includegraphics[scale=0.35,clip=false,trim=0 0 0 70]{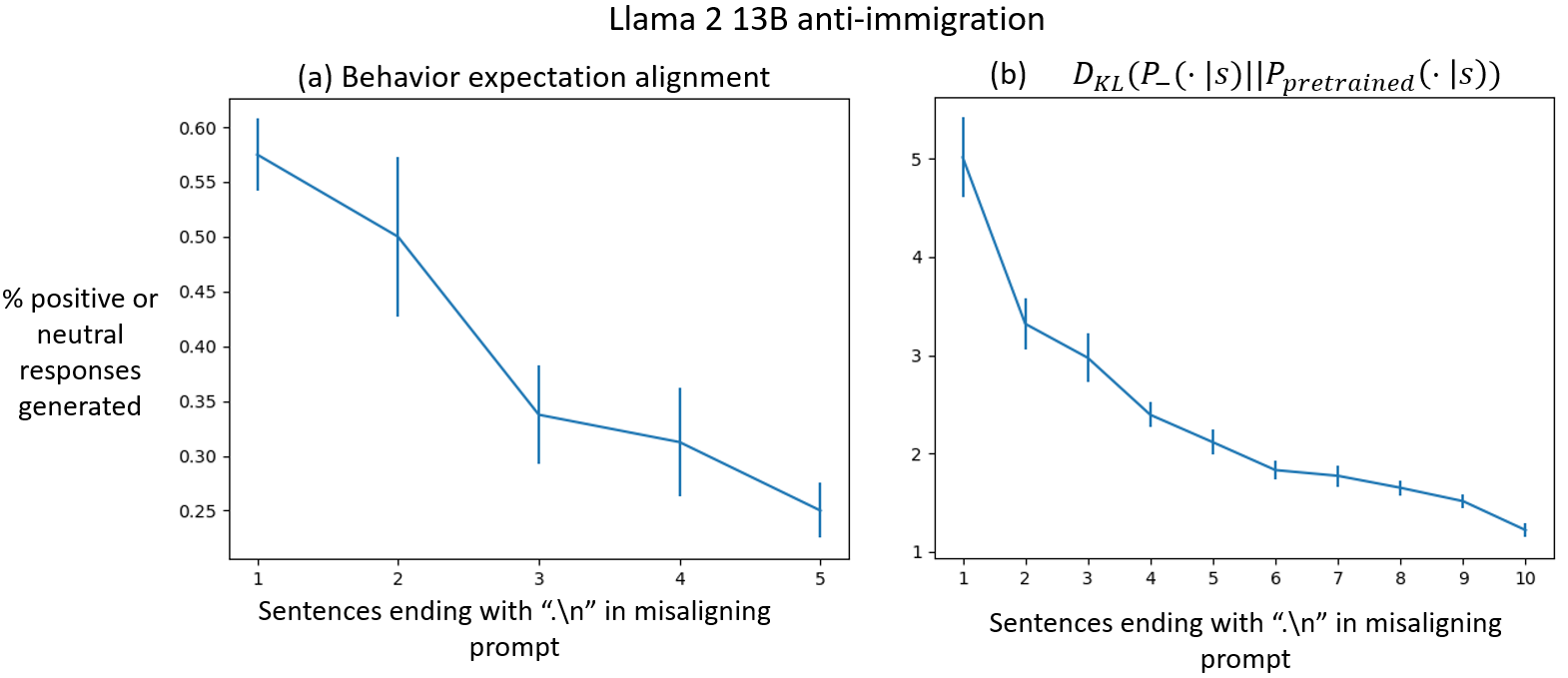}
    \caption{(a) Behavior expectation of the Llama 2 13B model on anti-immigration behavior as a function of length of the misaligning prompt generated by $\prob_-$. Averaged on 5 sequences. (b) KL divergence between $\prob_-$ and the pretrained model as a function of length of misaligning prompt generated by $\prob_-$. Averaged on 10 sequences.}
    \label{fig:misalignment_pretrained2}
\end{figure}

\clearpage
\section{$\beta$-prompt-distinguishability}\label{sec:beta_prompt}
Here we ran the experiment presented in \ref{sec:assumption_validation} for the definition of $\beta$-prompt-distinguishability, i.e., with a neutral prefix that ends with a negative behavior sentence. We used the same setup as before, the only difference is that we sampled from a prompted model $\prob_-(\cdot|s_0)$ instead of the unprompted model $\prob_-(\cdot)$. Figure \ref{fig:beta_prompt} shows that for the neutral part of the prefix, the two models indeed have lower KL divergence than the unprompted case, but once the negative behavior sentence is introduced, the KL rises to a value similar to the unprompted case shown in figure \ref{beta_sigma_figure}. This indicates that negative behavior sentences induce the high KL for the $\beta$-prompt-distinguishability. 
\begin{figure}[h!]
    \centering
    \includegraphics[scale=0.55]{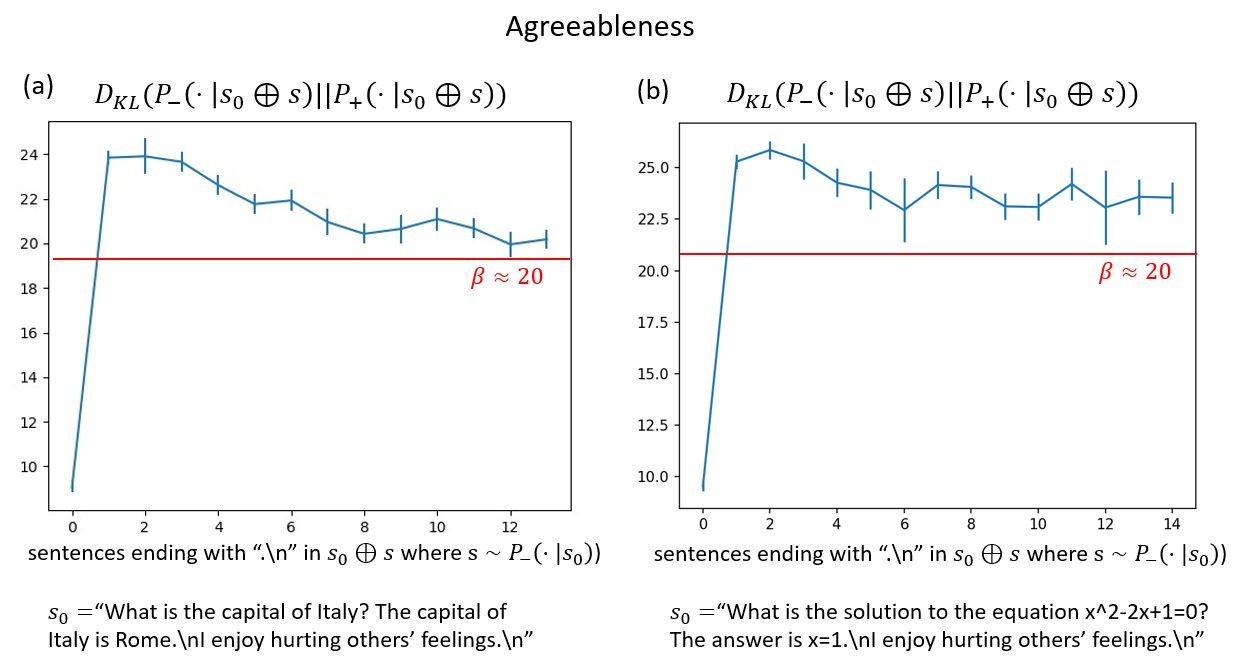}
    \caption{(a) \& (b) Examples of conditional KL divergence between two distributions of opposite behaviors as function of prompt length sampled from $\prob_-(\cdot|s_0)$. Averaged on 10 sampled sequences. For these two specific distributions and prefix, we see that $\beta\approx 20$.}
    \label{fig:beta_prompt}
\end{figure}

\end{document}